\documentclass[twoside,11pt]{article}

%

\usepackage{jmlr2e}
\usepackage{subcaption}
\usepackage{url}
\usepackage{epsfig,epstopdf,algorithmic,algorithm,amsfonts,url,textcomp,amsmath,multirow,moresize,wrapfig}
\usepackage{graphicx}
\usepackage[tableposition=top]{caption}
\usepackage{subcaption}
\usepackage{enumitem,picinpar}
\usepackage[framemethod=tikz]{mdframed}
\usepackage[flushleft]{threeparttable}


\DeclareMathOperator*{\argmin}{argmin}

\newenvironment{myitemize}
{ \begin{center} \begin{em} }
{ \end{em} \end{center} }





\begin{document}

\title{Federated Doubly Stochastic Kernel Learning \\ for  Vertically Partitioned Data}

\author{\name Bin Gu \email jsgubin@gmail.com      \\
\addr JD Finance America Corporation \\
\name Zhiyuan Dang \email {zhiyuandang@gmail.com} \\
\addr School of Electronic Engineering, Xidian University, China \\
   \name Xiang Li \email {lxiang2@uwo.ca} \\
 \addr Computer Science Department, Western University, Ontario, Canada \\
   \name Heng Huang \email {heng@uta.edu} \\
\addr JD Finance America Corporation \\ University of Pittsburgh, USA\\
}

\editor{}
\maketitle

\begin{abstract}
In a lot of real-world data mining and machine learning applications, data are  provided by multiple providers and  each maintains private records of different feature sets about common entities. 
It is challenging  to train these vertically partitioned data effectively and efficiently while keeping  data privacy for traditional data mining and  machine learning algorithms. In this paper,  we focus on nonlinear learning with kernels, and    propose a  federated doubly stochastic kernel learning (FDSKL) algorithm for vertically partitioned data.
Specifically, we use random features to approximate the kernel mapping function  and use   doubly stochastic gradients to update the solutions, which  are all computed federatedly without the disclosure of data. 
Importantly, we prove that FDSKL has a sublinear convergence rate, and can guarantee the
data security under the semi-honest assumption. Extensive experimental results on  a variety of benchmark datasets  show  that FDSKL is significantly faster than state-of-the-art federated learning methods when dealing with kernels,  while  retaining the similar generalization performance.
\end{abstract}
\begin{keywords}Vertical federated learning, kernel learning,  stochastic gradient descent, random feature approximation, privacy \end{keywords}

\section{Introduction}\label{section_intro}
Vertically partitioned  data \citep{Skillicorn2008Distributed} widely exist in the modern data mining and machine learning applications, where data are provided by multiple (two or more) providers (companies, stakeholders, government departments, \emph{etc.}) and  each maintains the records of different feature sets with common entities. For example, a digital finance company, an E-commerce company,  and a bank collect different information about the same person. The digital finance company  has access
to the online consumption, loan and repayment   information. The E-commerce company has access
to the online shopping information. The bank has customer information such as average monthly deposit, account balance. If the person submit a loan application  to the digital finance company,
the digital finance company might evaluate the  credit risk  to this financial loan   by   comprehensively utilizing  the information stored in the three parts.

However,  direct access to the data in other providers or  sharing of the data may  be prohibited due to legal and commercial reasons. For legal reason, most countries  worldwide have made laws in protection of data security
and privacy. For example, the
European Union made the General Data Protection Regulation (GDPR) \citep{Regulation2018} to protect users' personal privacy and data
security. The recent data breach by Facebook has caused a wide range of protests \citep{badshah2018facebook}.
For commercial reason, customer data is usually a valuable
business asset for corporations. For example, the real online shopping information of customers can be used to train a  recommended model which could provide valuable product recommendations to customers.
Thus, all of these causes require  federated learning on vertically  partitioned data without the disclosure of data.

\begin{table*}[htbp!]
	\small
	\center \caption{Commonly used smooth loss functions for binary classification (BC)  and regression (R).  }
	\setlength{\tabcolsep}{1mm}
	\begin{tabular}{|c|c|c|p{7cm}|}
		\hline
	 \textbf{Name of loss} & \textbf{Type of task}   &  \textbf{Convex}  &
		\textbf{The loss function}
		\\
		\hline \hline
	Square loss  & BC+R  & 	Yes  &  $L(f(x_i),y_i)= (f(x_i) - y_i)^2$  \\ \hline
		 Logistic  loss & BC & Yes& $
		L(f(x_i),y_i) = \log (1+ \exp (-y_i f(x_i)))
		$  \\ \hline
		Smooth hinge loss & BC & Yes & $L(f(x_i),y_i) =  \left \{ \begin{array} {l@{\
			}l} \frac{1}{2} - z_i   &
		\textrm{if } \ z_i  \leq 0
		\\ \frac{1}{2} (1 - z_i  )^2 &  \textrm{if } \  0< z_i  < 1
		\\ 0 &  \textrm{if } \   z_i  \geq 1  \end{array}
		\right.$, where $z_i = y_if(x_i)$.  \\
		\hline
	\end{tabular}
	\label{table:IP}
\end{table*}

In the literature, there are many privacy-preserving federated learning algorithms for vertically partitioned data in various applications, for example, cooperative statistical analysis \cite{du2001privacy}, linear regression \citep{gascon2016secure,karr2009privacy,sanil2004privacy,gascon2017privacy}, association rule-mining \citep{vaidya2002privacy}, k-means clustering \citep{vaidya2003privacy}, logistic regression \citep{hardy2017private,nock2018entity}, random forest \cite{liu2019federated}, XGBoost \cite{cheng2019secureboost} and support vector machine \cite{yu2006privacy}. From the optimization standpoint, Wan et al.  \citeyearpar{wan2007privacy} proposed privacy-preservation gradient descent algorithm for vertically partitioned data. Zhang et al. \citeyearpar{zhang2018feature} proposed a feature-distributed SVRG algorithm for high-dimensional linear classification.


However, existing privacy-preservation federated learning algorithms assume the models are implicitly linearly separable, \textit{i.e.}, in  the form of $f(x)=g\circ h(x)$ \citep{wan2007privacy}, where $g$ is any differentiable function, $\{{\mathcal{G}_1},{\mathcal{G}_2},\ldots,{\mathcal{G}_q} \}$ is a partition of the features and $ h(x)$  is a linearly separable function of the form of $\sum_{\ell=1}^{q} h^\ell(w_{\mathcal{G}_\ell},x_{\mathcal{G}_\ell})$. Thus, almost all the existing privacy-preservation federated learning algorithms are  limited to   linear models. However, we know that the nonlinear models often can achieve better risk prediction performance than linear methods.
Kernel methods \cite{gu2018accelerated,gu2008line,gu2018new} are an important class of  nonlinear approaches to handle linearly non-separable data. Kernel models usually take the form of $f(x)=\sum_{i}^N \alpha_i K(x_i,x)$ which do not satisfy the assumption of implicitly linear separability, where $K(\cdot,\cdot)$ is a kernel function.   To the best of our knowledge, PP-SVMV \citep{yu2006privacy} is the only privacy-preserving method for learning vertically partitioned data with non-linear kernels. However, PP-SVMV \citep{yu2006privacy} needs to merge the local kernel matrices in  different workers to a global kernel matrix, which would cause a high communication cost.  It is still an open question to train the vertically partitioned data efficiently and scalably by kernel methods   while keeping  data privacy.

To address this   challenging problem, in this paper,  we    propose a  new federated doubly stochastic kernel learning (FDSKL) algorithm  for vertically partitioned data. Specifically, we use random features to approximate the kernel mapping function  and use   doubly stochastic gradients to update kernel model, which  are all computed federatedly  without revealing the whole data  sample to each worker. 
We prove that FDSKL can converge to the optimal solution in $\mathcal{O}(1/t)$, where $t$ is the number of iterations. We also provide the analysis of the data security under the semi-honest assumption (\textit{i.e.}, Assumption \ref{ass_semi_honest}). Extensive experimental results on  a variety of benchmark datasets  show that FDSKL is significantly faster than state-of-the-art federated learning methods when dealing with kernels,  while  retaining the similar generalization performance.

\noindent \textbf{Contributions.} The  main contributions of this paper are summarized as
follows.
\begin{enumerate}[leftmargin=0.2in]
\item Most of existing federated  learning algorithms on vertically  partitioned data  are limited to linearly separable model. Our FDSKL breaks the limitation of implicitly linear separability.
\item  To the best of our knowledge,  FDSKL  is the first   federated  kernel method which can train vertically  partitioned data \textit{efficiently and scalably}. We also prove that FDSKL can guarantee data security under the semi-honest assumption.
\item  Existing doubly stochastic
kernel method is limited to the  diminishing learning rate. We extend it to  constant learning rate. Importantly, we prove that FDSKL with constant learning rate has a sublinear convergence rate.

\end{enumerate}


\section{Vertically Partitioned Federated  Kernel Learning Algorithm}\label{secFDSKL}
In this section, we first introduce the federated kernel learning problem, and then give a brief review of doubly stochastic kernel methods. Finally, we propose our FDSKL.

\subsection{Problem Statement}
 Given a training set $\mathcal{S}=\{(x_i,y_i)\}_{i=1}^{N}$, where $x_i \in \mathbb{R}^{d}$ and  $y_i\in \{+1,-1\}$  for  binary classification or $y_i\in \mathbb{R}$   for regression. Let $L(u,y )$ be a  scalar loss function which is convex with respect to $u \in \mathbb{R}$.
  We give several common loss functions in Table \ref{table:IP}.
  Given a positive definite  kernel function $K(x', x)$ and the associated  reproducing kernel Hilbert spaces (RKHS) $\mathcal{H}$ \citep{berlinet2011reproducing}. A kernel method tries to find a function $f \in \mathcal{H}$ \footnote{Because of the reproducing property \citep{zhang2009reproducing},
$\forall x \in \mathbb{R}^{d}$, $\forall f\in \mathcal{H}$, we always have $\langle f(\cdot),K(x,\cdot)\rangle_{\mathcal{H}}=f(x)$.}
for solving:
\begin{eqnarray}\label{objective_function}
\argmin_{f \in \mathcal{H}} \mathcal{R}(f)= \mathbb{E}_{(x,y) \in \mathcal{S}} L(f(x),y ) +\dfrac{\lambda}{2}\|  f \|_\mathcal{H}^2
\end{eqnarray}
where $\lambda > 0$ is a regularization parameter.

As mentioned previously, in a lot of real-world data mining and machine learning applications,  the input of training sample $(x,y)$ is partitioned vertically into
$q$ parts, \emph{i.e.}, $x = [x_{\mathcal{G}_1},x_{\mathcal{G}_2},\ldots,x_{\mathcal{G}_q}]$, and $x_{\mathcal{G}_\ell} \in \mathbb{R}^{d_\ell}$ is
stored on the $\ell$-th worker and $\sum_{\ell=1}^q d_\ell=d$.  According to whether the label is included in a worker, we divide the workers into two types: one is active worker and the other is passive worker, where the active worker is the data provider who holds the label of a sample, and the passive worker only has the input of a sample. The active worker would be a dominating server in federated learning, while passive workers play the role of clients \citep{cheng2019secureboost}.
\noindent We let  $D^\ell$  denote the data stored on the $\ell$-th worker, where the  labels $y_i$ are distributed
on active workers.
$D^\ell$ includes parts of labels $\{y_i\}_{i=1}^l$.
Thus, our goal in this paper can be presented as follows.

 \begin{mdframed}
\begin{myitemize}
\noindent \textbf{Goal:} Make active workers to cooperate   with passive workers to solve the nonlinear learning problem (\ref{objective_function}) on the vertically partitioned data $\{ D^\ell\}_{\ell=1}^q$   while keeping the vertically partitioned data private.
\end{myitemize}
\end{mdframed}

\noindent \textbf{Principle of  FDSKL.} Rather than dividing a kernel into several sub-matrices, which requires expensive kernel merging operation, FDSKL uses the random feature approximation to achieve efficient computation parallelism under the federated learning setting. The efficiency of FDSKL is owing to the fact that the computation of random features can be linearly separable.  Doubly stochastic gradient (DSG) \cite{gu2018asynchronous,gu2019scalable} is a scalable and efficient kernel method \citep{dai2014scalable,xie2015scale,gu2018asynchronous1} which uses the doubly stochastic gradients \textit{w.r.t.} samples and random features to update the kernel function. We extend DSG to  the vertically partitioned data, and propose  FDSKL. Specifically, each FDSKL worker computes a partition of the random features using only the partition of data it keeps. When computing the global functional gradient of the kernel, we could efficiently reconstruct the entire random features from the local workers.
Note that although FDSKL is also a privacy-preservation gradient
descent algorithm for vertically partitioned data,  FDSKL breaks the limitation of implicit linear separability used in the existing privacy-preservation federated learning algorithms as discussed previously.

\subsection{Brief Review of Doubly Stochastic Kernel Methods}
We first introduce  the technique of random feature approximation, and then give a brief review of DSG algorithm.
\subsubsection{Random Feature Approximation}

Random feature \citep{rahimi2008random,rahimi2009weighted,gu2018asynchronous,geng2019scalable,shi2019quadruply,DBLP:ShiG0H20} is a powerful technique to make  kernel methods scalable. It uses the intriguing duality between  positive definite kernels which are continuous and shift
invariant  (\emph{i.e.}, $K(x,x')=K(x-x')$) and stochastic processes as shown in Theorem \ref{thmBochner}.
\begin{theorem}[\cite{rudin1962fourier}]\label{thmBochner} A continuous, real-valued, symmetric and shift-invariant function $K(x,x')=K(x-x')$ on $\mathbb{R}^{d}$ is a positive definite kernel if and only if there is a finite non-negative measure $\mathbb{P}(\omega)$ on $\mathbb{R}^{d}$, such that
\begin{align}
 K(x-x')=\int_{\mathbb{R}^{d}} e^{i \omega^T (x-x')} d\mathbb{P}(\omega)
 =\int_{\mathbb{R}^{d} \times [0,2\pi]} 2 \cos (\omega^T x+b)\cos (\omega^T x'+b) d(\mathbb{P}(\omega) \times \mathbb{P}(b))
\end{align}
 where $\mathbb{P}(b)$ is  a uniform distribution on $[0,2\pi]$, and  $\phi_\omega(x)=\sqrt{2} \cos (\omega^T x+b)$.
\end{theorem}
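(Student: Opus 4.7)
The plan is to establish the two directions of the equivalence separately, and then derive the second equality as a simple trigonometric identity. For the \emph{if} direction, I would take the integral representation as given and verify positive definiteness directly: for any points $x_1,\ldots,x_n \in \mathbb{R}^d$ and scalars $c_1,\ldots,c_n \in \mathbb{C}$, substitute the integral into $\sum_{i,j} c_i \bar{c}_j K(x_i-x_j)$, interchange the finite sum with the integral, and observe that the integrand collapses to $\bigl|\sum_i c_i e^{i\omega^T x_i}\bigr|^2$, which is nonnegative pointwise; integration against the nonnegative measure $\mathbb{P}$ then yields a nonnegative quadratic form, so $K$ is positive definite. Continuity and shift-invariance are immediate from the representation.

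The \emph{only if} direction is the substantive part and is the main obstacle. The plan is to invoke Fourier analysis. Assuming first the convenient case where $K$ is integrable (this can be arranged by convolving $K$ with a Gaussian kernel $g_\sigma$ of variance $\sigma^2$, noting that the pointwise product of positive definite functions is positive definite and $K * g_\sigma \to K$ as $\sigma \to 0$), define the Fourier transform $\widehat{K}(\omega) = \int_{\mathbb{R}^d} K(x) e^{-i\omega^T x}\, dx$. The key step is to show that positive definiteness of $K$ forces $\widehat{K}(\omega) \geq 0$ for every $\omega$. This I would do by discretizing the quadratic form: for a smooth nonnegative test function $\varphi$ one checks that $\int\int \varphi(x)\overline{\varphi(y)} K(x-y)\,dx\,dy \geq 0$ by approximation with Riemann sums of the positive-definiteness inequality, and then the left-hand side equals $\int |\widehat{\varphi}(\omega)|^2 \widehat{K}(\omega)\,d\omega$ by Parseval; choosing $\varphi$ to concentrate $|\widehat{\varphi}|^2$ near an arbitrary $\omega_0$ forces $\widehat{K}(\omega_0) \geq 0$. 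Then Fourier inversion gives $K(x) = \int e^{i\omega^T x} \widehat{K}(\omega) \frac{d\omega}{(2\pi)^d}$, so $d\mathbb{P}(\omega) := \widehat{K}(\omega) \frac{d\omega}{(2\pi)^d}$ is a finite nonnegative measure realizing the representation. Finally, for the general (not necessarily integrable) continuous positive definite $K$, I would pass to the limit $\sigma \to 0$ using Helly/Prokhorov compactness of the regularized spectral measures, whose total masses are bounded by $K(0)$.

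For the second equality, I would use that $K$ is real-valued and symmetric to reduce $e^{i\omega^T(x-x')}$ to $\cos(\omega^T(x-x'))$ under the integral. Then the product-to-sum identity
\begin{equation*}
2\cos(\omega^T x + b)\cos(\omega^T x' + b) = \cos(\omega^T(x-x')) + \cos(\omega^T(x+x') + 2b)
\end{equation*}
combined with $\frac{1}{2\pi}\int_0^{2\pi} \cos(\omega^T(x+x') + 2b)\, db = 0$ shows that averaging over $b$ uniformly on $[0,2\pi]$ leaves exactly $\cos(\omega^T(x-x'))$, yielding the claimed product form and in particular the random feature map $\phi_\omega(x) = \sqrt{2}\cos(\omega^T x + b)$.

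The hard part of the argument is unquestionably the non-negativity of the spectral measure in the converse direction; everything else is bookkeeping. The cleanest realization of that step hinges on the Parseval identity combined with an approximation-of-the-identity argument to localize $|\widehat{\varphi}|^2$, while the regularization by Gaussians handles the lack of integrability. Once the nonnegative $\widehat{K}$ is in hand, the rest of the proof is essentially Fourier inversion plus a routine trigonometric manipulation.
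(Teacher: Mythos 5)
The paper does not prove this statement at all: it is quoted as a classical result (Bochner's theorem) with a citation to Rudin, so there is no ``paper proof'' to match your argument against. Your outline is essentially the standard textbook proof, and its architecture is sound: the \emph{if} direction via collapsing the quadratic form to $\bigl|\sum_i c_i e^{i\omega^T x_i}\bigr|^2$, the \emph{only if} direction via non-negativity of the Fourier transform plus inversion, and the second equality via the product-to-sum identity $2\cos(\omega^T x+b)\cos(\omega^T x'+b)=\cos(\omega^T(x-x'))+\cos(\omega^T(x+x')+2b)$ together with the vanishing of the $b$-average of the second term. That last computation, which is the part actually specific to this paper (the Rahimi--Recht random-feature identity), is correct as you state it, using that $K$ real-valued lets you replace $e^{i\omega^T(x-x')}$ by $\cos(\omega^T(x-x'))$.

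There is one concrete slip in the step you yourself identify as the crux. You propose to reduce to the integrable case by ``convolving $K$ with a Gaussian kernel $g_\sigma$,'' but then justify it by the fact that the \emph{pointwise product} of positive definite functions is positive definite. These are different operations, and convolution does not do the job: $K\equiv 1$ is positive definite and $K*g_\sigma$ is a nonzero constant, hence still not integrable. The correct regularization is pointwise multiplication, $K_\sigma(x)=K(x)\,e^{-\sigma^2\|x\|^2/2}$, which is integrable because $|K(x)|\le K(0)$, is positive definite by the Schur product theorem (the Gaussian factor being itself positive definite), and converges to $K$ pointwise as $\sigma\to 0$, after which your Helly/Prokhorov limit argument with total masses bounded by $K(0)$ goes through. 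With that substitution, and a remark that $\widehat{K_\sigma}\in L^1$ (needed for the inversion step, and obtainable from the same Gaussian trick via Fatou), the proof is complete.
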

According to Theorem \ref{thmBochner}, the value of the kernel function can be approximated by explicitly computing the random feature maps $\phi_\omega(x)$ as follows.
\begin{eqnarray}
K(x,x')\approx \frac{1}{m}\sum_{i=1}^{m}\phi_{\omega_i}(x) \phi_{\omega_i}(x')
\end{eqnarray}
where $m$ is the number of random features and $\omega_i$ are drawn from $\mathbb{P}(\omega)$.
Specifically, for Gaussian RBF kernel  $K(x, x')=\exp(-||x-x'||^2/2\sigma^2)$, $\mathbb{P}(\omega)$ is a Gaussian distribution
with density proportional to $\exp(-\sigma^2 \| \omega \|^2/2)$. For the Laplace kernel \citep{yang2014random}, this yields a Cauchy
distribution. Notice that the computation of a random feature map $\phi$ requires to compute a linear combination of the raw input features, which can also be partitioned vertically. This property makes random feature approximation well-suited for the federated learning setting. 
\subsubsection{Doubly Stochastic Gradient}

Because the functional gradient in RKHS $\mathcal{H}$ can be computed as $\nabla f(x)= K(x, \cdot)$, the
stochastic  gradient of $\nabla f(x)$ \textit{w.r.t.} the random feature $\omega$ can be denoted by (\ref{eq3}).
\begin{equation}\label{eq3}
\xi(\cdot)= \phi_{\omega}(x)\phi_{\omega}(\cdot)
\end{equation}
 Given a randomly sampled data instance $(x,y)$, and a random feature $\omega$, the doubly stochastic gradient of the loss function $L(f(x_i),y_i)$ on RKHS \textit{w.r.t.} the sampled instance $(x,y)$ and the random direction $\omega$ can be formulated  as follows.
\begin{equation}
\label{def_doublySG}
\zeta(\cdot)= L'(f(x_i),y_i)\phi_{\omega}(x_i)\phi_{\omega}(\cdot)
\end{equation}
Because $\nabla ||f||^2_{\mathcal{H}} = 2f$, the  stochastic gradient of $\mathcal{R}(f)$ can be formulated as follows.
\begin{eqnarray}
\label{def_doublySG2}
 \widehat{\zeta}(\cdot) =\zeta(\cdot) + \lambda f(\cdot)
= L'(f(x_i),y_i)\phi_{\omega_i}(x_i)\phi_{\omega_i}(\cdot) + \lambda f(\cdot)
\end{eqnarray}
Note that we have  $\mathbb{E}_{(x,y)}\mathbb{E}_{\omega} \widehat{\zeta}(\cdot) =  \nabla \mathcal{R}(f)$.
According to the stochastic gradient (\ref{def_doublySG2}), we can update the solution by stepsize $\gamma_t$. Then, let $f_{1}(\cdot)=\textbf{0}$, we have that
\begin{align}\label{eqf}
f_{t+1}(\cdot)
=& f_{t}(\cdot) - \gamma_t \left ( \zeta(\cdot) + \lambda f(\cdot) \right )=\sum_{i=1}^t - \gamma_i \prod_{j=i+1}^t (1-\gamma_j \lambda ) \zeta_i(\cdot)
\\ \nonumber =& \sum_{i=1}^t \underbrace{- \gamma_i \prod_{j=i+1}^t (1-\gamma_j \lambda ) L'(f(x_i),y_i)\phi_{\omega_i}(x_i)}_{\alpha_i^t}\phi_{\omega_i}(\cdot)
\end{align}
From (\ref{eqf}),  $\alpha_i^t$ are the important coefficients which defines the model of $f(\cdot)$. Note that the model $f(x)$ in (\ref{eqf})  do not satisfy the assumption of implicitly linear separability same to the kernel model $f(x)=\sum_{i}^N \alpha_i K(x_i,x)$ as mentioned in the first section.

\subsection{Federated Doubly Stochastic Kernel Learning Algorithm}
We first present the system structure of FDSKL, and then give a  detailed description of  FDSKL.
\subsubsection{System Structure}
As mentioned before, FDSKL is a federated learning algorithm where each worker keeps its local vertically partitioned  data.  Figure \ref{structureFDSKL} presents the  system structure of FDSKL. The main idea behind FDSKL's parallelism is to vertically divide the computation of the random features.  Specifically, we  give  detailed descriptions of data privacy, model privacy and tree-structured
communication, respectively, as follows.
\begin{figure*}[h]
\center
\includegraphics[scale=0.34]{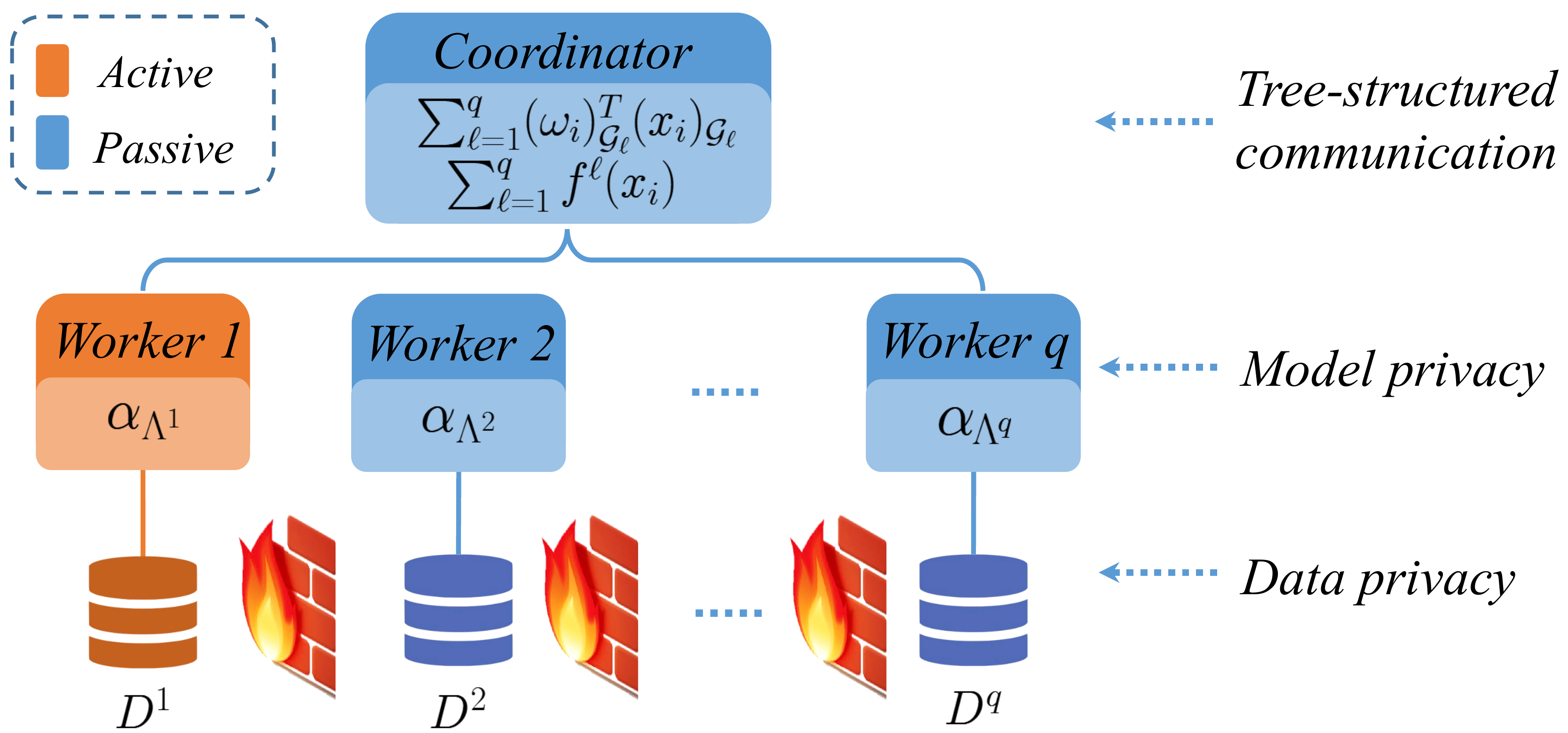}
 \caption{System structure of FDSKL.}
 \label{structureFDSKL}
\end{figure*}

\begin{figure*}[htbp!]
	\begin{subfigure}[b]{0.45\textwidth}
\centering
		\includegraphics[height=2.2in]{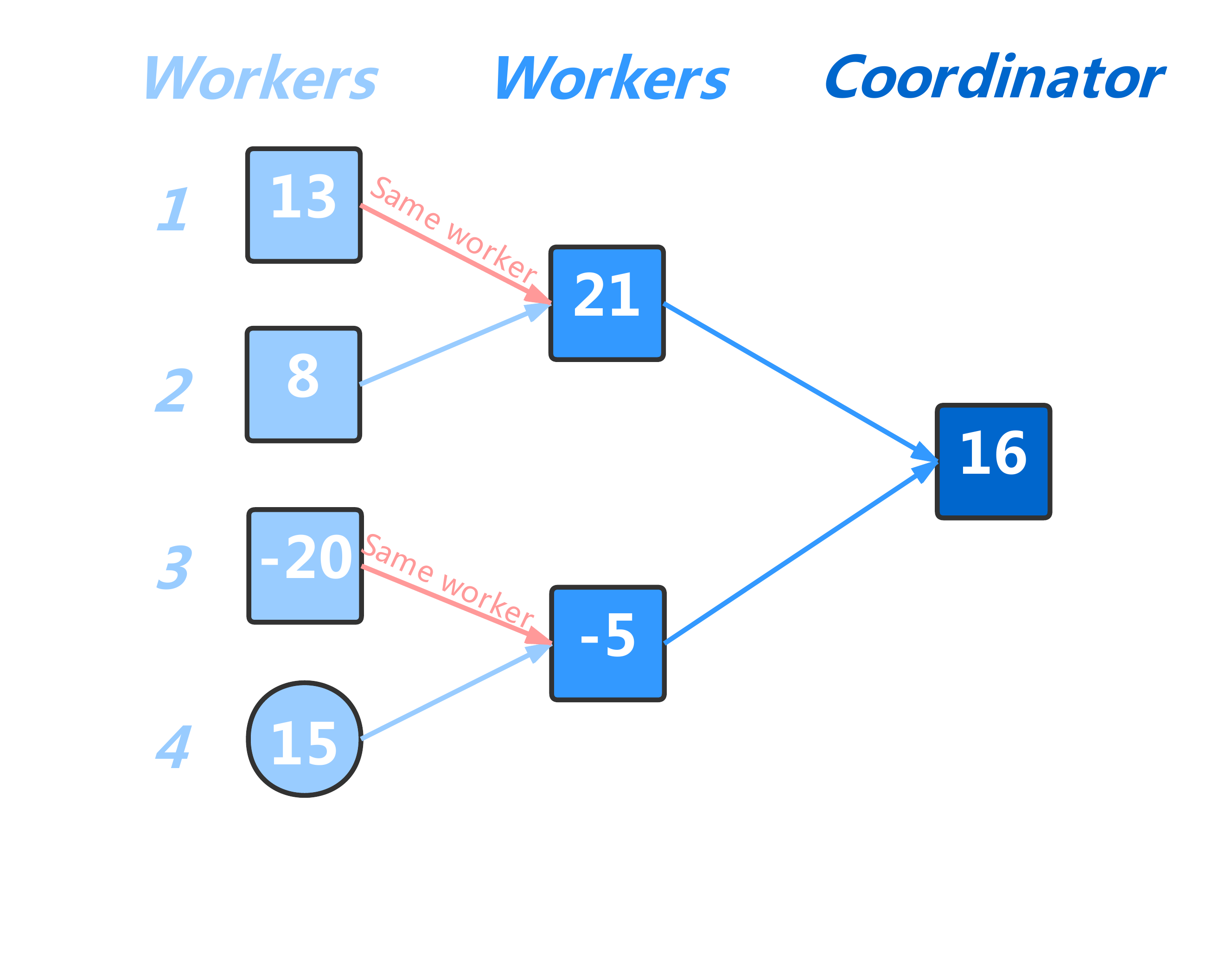}
		\caption{Tree structure $T_1$ on workers $\{1,\ldots,4\}$} \label{FigureTSC1}
	\end{subfigure}
	\begin{subfigure}[b]{0.45\textwidth}
\centering
		\includegraphics[height=2.2in]{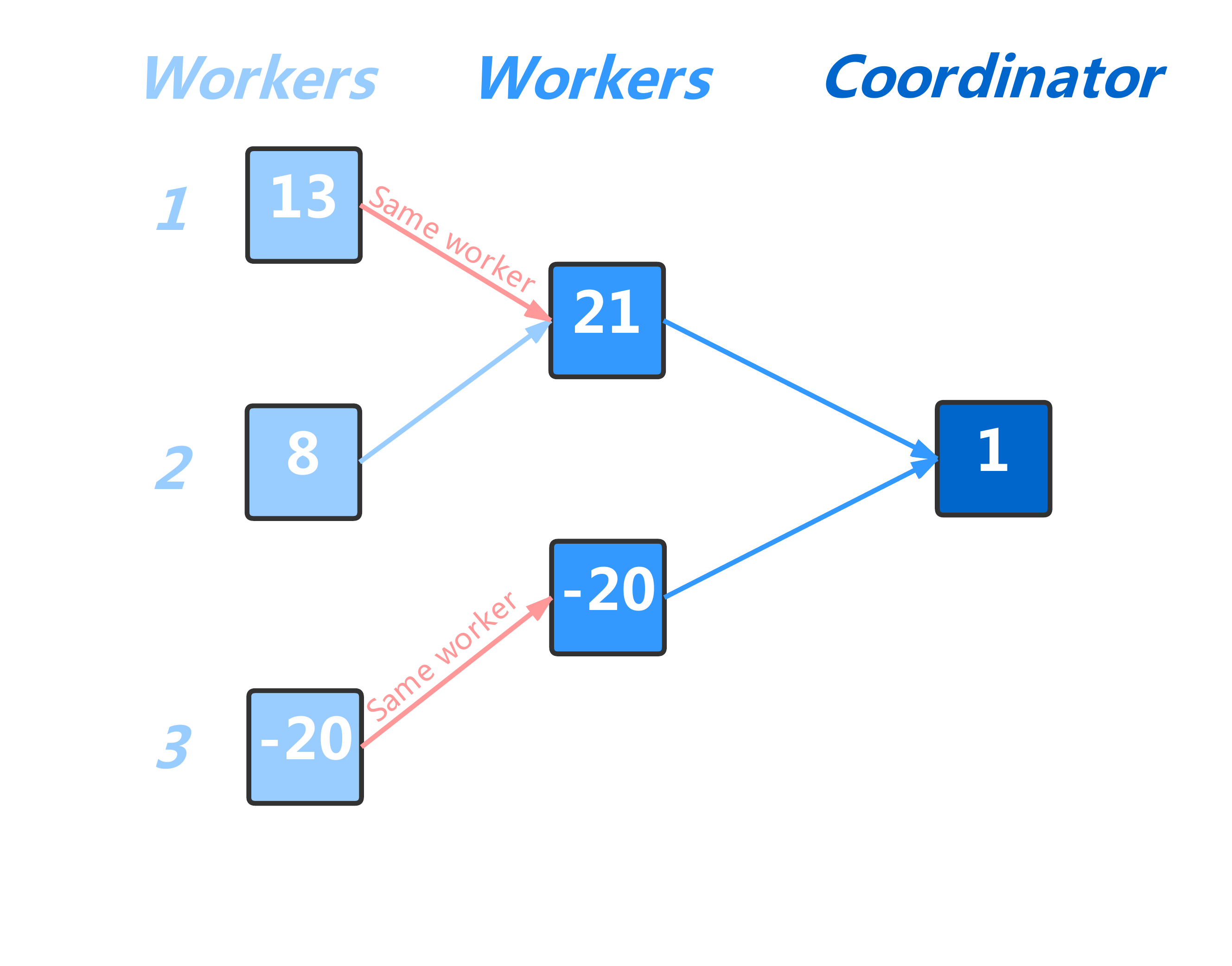}
		\caption{Tree structure $T_2$ on workers $\{1,\ldots,3\}$} \label{FigureTSC2}
	\end{subfigure}
\caption{Illustration of  tree-structured communication with two totally different tree structures $T_1$ and $T_2$.}
\label{FigureTSC}
\end{figure*}
\begin{enumerate}[leftmargin=0.2in]
\item
\textbf{Data  Privacy:} \  To keep the vertically partitioned  data privacy,  we need  to divide the computation of  the value of   $\phi_{\omega_i}(x_i)=\sqrt{2} \cos (\omega_i^T x_i+b)$ to avoid transferring the local data $(x_i)_{\mathcal{G}_\ell}$ to other workers. Specifically, we send a random seed to  the $\ell$-th worker. Once the $\ell$-th worker receive the random seed, it can generate the random direction $\omega_i$ uniquely according to the random seed. Thus, we can locally compute $(\omega_i)_{\mathcal{G}_\ell}^T (x_i)_{\mathcal{G}_\ell}+b$ which avoids directly transferring the local data $(x_i)_{\mathcal{G}_\ell}$ to other workers for computing $\omega_i^T x_i+b$.  In the next section, we will discuss  it is hard to infer any $(x_i)_{\mathcal{G}_\ell}$  according to  the value of $(\omega_i)_{\mathcal{G}_\ell}^T (x_i)_{\mathcal{G}_\ell}+b$ from other workers.

\item \textbf{Model Privacy:} \ In addition to keep the vertically partitioned  data privacy, we also keep the model privacy. Specifically, the model coefficients $\alpha_i$ are stored in different workers separately and  privately. According to the location of the  model coefficients $\alpha_i$, we partition the model coefficients $\{ \alpha_i \}_{i=1}^T$ as   $\{\alpha_{\Lambda^\ell} \}_{\ell=1}^q$, where $\alpha_{\Lambda^\ell} $ denotes the  model coefficients at the $\ell$-th worker, and $\Lambda^\ell$ is the set of corresponding iteration indices. We do not directly transfer the local model coefficients $\alpha_{\Lambda^\ell} $ to other workers.
To compute $f(x)$, we locally compute $f^\ell(x)=\sum_{i\in \Lambda^\ell} \alpha_i\phi_{\omega_i}(x)$ and transfer it to other worker, and $f(x)$ can be reconstructed by summing over all the $f^\ell(x)$. It is difficult to infer the the local model coefficients $\alpha_{\Lambda^\ell} $ based on the value of  $f^\ell(x)$ if $|\Lambda^\ell|\geq 2$. Thus, we achieve the model privacy.
\item \textbf{Tree-Structured Communication:} \ In order to   obtain $\omega_i^T x_i$ and $f(x_i)$, we need to accumulate the local results from different workers. Zhang et al. \citeyearpar{zhang2018feature} proposed an efficient  tree-structured communication  scheme to
get the global sum which is faster than the simple strategies  of star-structured communication \citep{wan2007privacy} and  ring-structured communication \citep{yu2006privacy}. Take 4 workers as an example, we pair the workers
so that while worker 1 adds the result from worker 2,
worker 3 can add the result from worker 4 simultaneously. Finally, the results from  the two pairs of workers are sent to the coordinator and we  obtain the global sum (please see Figure \ref{FigureTSC1}).
 If the above procedure is in a reverse order,  we call it  a reverse-order tree-structured communication.
 Note that both of the tree-structured communication  scheme and its reverse-order scheme are synchronous procedures.
\end{enumerate}

\subsubsection{Algorithm}
To extend DSG to federated  learning on vertically partitioned data while keeping data privacy, we need to carefully design the procedures of computing $\omega_i^T x_i+b$, $f(x_i)$ and the solution updates, which are presented in detail as follows.

\begin{enumerate}[leftmargin=0.2in]
\item
\textbf{Computing $\omega_i^T x_i+b$:} \  
  We generate the random direction $\omega_i$ according to  a same random seed $i$ and a probability measure $\mathbb{P}$ for each worker. Thus, we can locally compute $(\omega_i)_{\mathcal{G}_\ell}^T (x_i)_{\mathcal{G}_\ell}$.  To  keep $(x_i)_{\mathcal{G}_\ell}$ private, instead of directly transferring $(\omega_i)_{\mathcal{G}_\ell}^T (x_i)_{\mathcal{G}_\ell}$ to other workers, we randomly generate $b^\ell$ uniformly from $[0, 2 \pi]$, and transfer $(\omega_i)_{\mathcal{G}_\ell}^T (x_i)_{\mathcal{G}_\ell}+b^\ell $ to another worker. After all workers have calculated $(\omega_i)_{\mathcal{G}_\ell}^T (x_i)_{\mathcal{G}_\ell}+b^\ell $ locally,  we can
get the global sum $\sum_{\hat{\ell}=1}^q  \left ((\omega_i)_{\mathcal{G}_{\hat{\ell}}}^T (x_i)_{\mathcal{G}_{\hat{\ell}}}+b^{\hat{\ell}} \right )$   efficiently and  safely by using a tree-structured communication  scheme based on the tree structure $T_1$ for workers $\{1, \ldots,q\}$ \citep{zhang2018feature}.

Currently, for the   $\ell$-th worker, we get multiple values of $b$ with $q$ times. To recover the value of $\sum_{\hat{\ell}=1}^q  \left ((\omega_i)_{\mathcal{G}_{\hat{\ell}}}^T (x_i)_{\mathcal{G}_{\hat{\ell}}} \right )+b$, we pick up one $b^{\ell'}$ from $\{1,\ldots,  q\}-\{ \ell \}$ as the value of $b$  by removing other  values of $b^\ell$ (\emph{i.e.},  removing $\overline{b}^{\ell'}=\sum_{{{\hat{\ell}}} \neq \ell'} b^{{{\hat{\ell}}}}$). In order to prevent leaking any information of $b^\ell$, we use a \textit{totally different tree structure $T_2$} for workers  $\{1, \ldots,q\}-\{\ell'  \}$  (please see Definition \ref{def_total_differ_tree} and Figure \ref{FigureTSC}) to compute $\overline{b}^{\ell'}=\sum_{{{\hat{\ell}}} \neq \ell'} b^{{{\hat{\ell}}}}$.
The detailed  procedure of computing $\omega_i^T x_i+b$ is summarized in Algorithm \ref{protocol1}.
\begin{definition}[Two totally different tree structures]\label{def_total_differ_tree} For two tree structures $T_1$ and $T_2$, they are  totally different if there does not exist a subtree with more than one leaf which belongs to both of $T_1$ and $T_2$.
\end{definition}

\item \textbf{Computing $f(x_i)$:} \ According to (\ref{eqf}), we have that $f(x_i) =\sum_{i=1}^t \alpha_i^t \phi_{\omega_i}(x_i)$. However, $\alpha_i^t$ and $\phi_{\omega_i}(x_i)$ are stored in different workers. Thus, we first locally compute $f^\ell(x_i)= \sum_{i \in \Lambda^\ell} \alpha_i^{t} \phi_{\omega_i}(x_i)$ which is summarized in Algorithm \ref{alg1}.  By using a tree-structured communication  scheme \citep{zhang2018feature}, we can
get the global sum $\sum_{\ell=1}^q f^{\ell}(x_i)$  efficiently which is equal to $f(x_i)$ (please see Line \ref{step7} in Algorithm \ref{algorithm3}).

\item \textbf{Updating Rules:} \ Because $\alpha_i^t$ are stored in different workers, we use a  communication  scheme \citep{zhang2018feature} with a reverse-order tree structure to update $\alpha_i^t$  in each workers by the coefficient  $(1-\gamma \lambda)$ (please see Line \ref{step10} in Algorithm \ref{algorithm3}).
\end{enumerate}

Based on these key procedures, we summarize  our FDSKL algorithm in  Algorithm \ref{algorithm3}. 
%
%
%
Different to the diminishing learning rate used in DSG, our FDSKL uses a constant learning rate $\gamma$ which can be  implemented more easily  in  the parallel computing environment. However, the convergence analysis for  constant learning rate is more difficult than the one for diminishing learning rate. We give the theoretic analysis in the following section.

\begin{algorithm}[ht]
\renewcommand{\algorithmicrequire}{\textbf{Input:}}
\renewcommand{\algorithmicensure}{\textbf{Output:}}
\renewcommand{\algorithmicloop}{\textbf{keep doing in parallel}}
\renewcommand{\algorithmicendloop}{\textbf{end parallel loop}}
\caption{Vertically Partitioned Federated Kernel Learning Algorithm  (FDSKL) on the  $\ell$-th active worker}
	\begin{algorithmic}[1] 
		\REQUIRE $\mathbb{P}(\omega)$, local normalized data  $D^\ell$,  regularization parameter $ \lambda$, constant learning rate $\gamma$.
 \LOOP
		\STATE \label{step2} Pick up an instance $(x_i)_{\mathcal{G}_\ell}$ from the local data $D^\ell$ with index $i$.
		\STATE \label{step3} Send $i$ to other workers  using a reverse-order tree structure  $T_0$.
\STATE \label{step4} Sample $\omega_i \sim \mathbb{P}(\omega)$ with the random  seed $i$ for all workers.

   \STATE \label{step5}  Use Algorithm \ref{protocol1}  to compute $\omega_i^T x_i+b $ and locally save it.
   \STATE \label{step6} Compute  $f^{\ell'}(x_i)$ for $\ell'=1,\ldots,q$ by calling Algorithm \ref{alg1}.
		\STATE \label{step7} Use tree-structured communication scheme based on $T_0$ to compute $f(x_i)=\sum_{\ell=1}^q f^\ell(x_i)$.
 \STATE \label{step8}  Compute $\phi_{\omega_i}(x_i)$ according to $\omega_i^T x_i+b$.
\STATE \label{step9} Compute $\alpha_i = -\gamma \left ( L'(f(x_i),y_i)\phi_{\omega_i}(x_i) \right )$ and locally save $\alpha_i$.

\STATE \label{step10}  Update $\alpha_j = (1-\gamma \lambda)\alpha_j$ for all previous $j$  in  the $\ell$-th worker  and other workers. 
\ENDLOOP
\ENSURE $\alpha_{\Lambda^\ell}$.
\end{algorithmic}
\label{algorithm3}
\end{algorithm}

%

\begin{algorithm}[!ht]
	\caption{Compute $f^\ell(x)$ on the $\ell$-th active worker} \label{alg1}
	\renewcommand{\algorithmicrequire}{\textbf{Input:}}
	\renewcommand{\algorithmicensure}{\textbf{Output:}}
	\begin{algorithmic}[1] 
		\REQUIRE $\mathbb{P}(\omega)$, $\alpha_{\Lambda^\ell}$,  $\Lambda^\ell$, $x$.

		\STATE \label{step1} Set $f^\ell(x) = 0$.
		\FOR{each $i\in \Lambda^\ell$} \label{step2}
 \STATE \label{step3} Sample $\omega_i \sim \mathbb{P}(\omega)$ with the random  seed $i$ for all workers.

   \STATE \label{step4} Obtain $\omega_i^T x+b $ if it is locally saved, otherwise   compute $\omega_i^T x+b $ by using Algorithm \ref{protocol1}.

 \STATE  \label{step5} Compute $\phi_{\omega_i}(x)$ according to $\omega_i^T x+b$.
		\STATE \label{step6} $f^\ell(x)=f^\ell(x)+\alpha_i\phi_{\omega_i}(x)$
		\ENDFOR \label{step7}
		\ENSURE $f^\ell(x)$
	\end{algorithmic}
	\label{alg:predict}
\end{algorithm}

\begin{algorithm}[!ht]
\renewcommand{\algorithmicrequire}{\textbf{Input:}}
\renewcommand{\algorithmicensure}{\textbf{Output:}}
\caption{Compute $\omega_i^T x_i+b $  on the   $\ell$-th active worker} \label{protocol1}
\begin{algorithmic}[1]
\REQUIRE $\omega_i$, $x_i$
\\
\COMMENT{// This loop asks multiple workers running in parallel.}
\FOR{$\hat{\ell}=1,\ldots,q$} \label{step1}
\STATE \label{step2} Compute $(\omega_i)_{\mathcal{G}_{\hat{\ell}}}^T (x_i)_{\mathcal{G}_{\hat{\ell}}} $ and randomly generate a uniform number $b^{\hat{\ell}}$ from $[0, 2\pi]$ with the seed $\sigma^{\hat{\ell}}(i)$.
\STATE \label{step3} Calculate $(\omega_i)_{\mathcal{G}_{\hat{\ell}}}^T (x_i)_{\mathcal{G}_{\hat{\ell}}} + b^{\hat{\ell}}$.
\ENDFOR \label{step4}
\STATE \label{step5} Use  tree-structured communication scheme based on the tree structure $T_1$ for workers $\{1, \ldots,q\}$ to compute $\xi=\sum_{\hat{\ell}=1}^q \left ( (\omega_i)_{\mathcal{G}_{\hat{\ell}}}^T (x_i)_{\mathcal{G}_{\hat{\ell}}}+b^{\hat{\ell}} \right )$.
\STATE \label{step6} Pick up $\ell' \in \{ 1,\ldots,q\} -\{\ell \} $  uniformly at random.
\STATE \label{step7} Use  tree-structured communication scheme based on the totally  different tree structure $T_2$ for workers  $\{1, \ldots,q\}-\{\ell'  \}$  to compute $\overline{b}^{\ell'}=\sum_{{\hat{\ell}} \neq \ell'} b^{{\hat{\ell}}}$.
\ENSURE $\xi-\overline{b}^{\ell'}$.
\end{algorithmic}
\end{algorithm}

%

\section{Theoretical Analysis}\label{secTheoAna}
In this section, we  provide the convergence, security and complexity analyses to FDSKL.
\subsection{Convergence Analysis}
As the basis of our analysis, our first lemma states that the output of Algorithm \ref{protocol1} is actually equal to  $\omega_i^T x+b$. The proofs are provided in  Appendix.
\begin{lemma}\label{lemma1}
 The output  of Algorithm \ref{protocol1} (\emph{i.e.}, $ \sum_{\hat{\ell}=1}^q  \left ( (\omega_i)_{\mathcal{G}_{\hat{\ell}}}^T (x)_{\mathcal{G}_{\hat{\ell}}}  +b^{\hat{\ell}} \right ) - \overline{b}^{\ell'}$  is equal to $\omega_i^T x+b$,  where each $b^{\hat{\ell}}$ and $b$ are drawn from a uniform distribution on $[0,2\pi]$, $\overline{b}^{\ell'}=\sum_{{\hat{\ell}} \neq \ell'} b^{{\hat{\ell}}}$, and $\ell' \in \{ 1,\ldots,q\} -\{\ell \} $.
\end{lemma}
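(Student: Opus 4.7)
The plan is to show that the claimed identity is essentially a bookkeeping calculation: after the tree-structured summation and the subtraction of $\overline{b}^{\ell'}$, all random masks $b^{\hat{\ell}}$ for $\hat{\ell}\neq \ell'$ cancel, leaving exactly one uniform offset plus $\omega_i^T x$. I would organize the argument in three short steps.

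First, I would unfold the output of Algorithm \ref{protocol1} algebraically. By construction, the quantity $\xi$ computed at Line~5 is
$\xi = \sum_{\hat{\ell}=1}^{q}\bigl((\omega_i)_{\mathcal{G}_{\hat{\ell}}}^T (x)_{\mathcal{G}_{\hat{\ell}}} + b^{\hat{\ell}}\bigr)$, and the quantity $\overline{b}^{\ell'}$ produced at Line~7 is $\overline{b}^{\ell'} = \sum_{\hat{\ell}\neq \ell'} b^{\hat{\ell}}$. Subtracting these two quantities and collecting terms gives
\[
\xi - \overline{b}^{\ell'} \;=\; \sum_{\hat{\ell}=1}^{q} (\omega_i)_{\mathcal{G}_{\hat{\ell}}}^T (x)_{\mathcal{G}_{\hat{\ell}}} \;+\; b^{\ell'}.
\]

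Second, I would invoke the vertical partition of the feature space. Since $\{\mathcal{G}_1,\ldots,\mathcal{G}_q\}$ is a partition of the index set $\{1,\ldots,d\}$ and both $\omega_i$ and $x$ are split accordingly, the block sum telescopes to the full inner product: $\sum_{\hat{\ell}=1}^{q} (\omega_i)_{\mathcal{G}_{\hat{\ell}}}^T (x)_{\mathcal{G}_{\hat{\ell}}} = \omega_i^T x$. Substituting this back yields $\xi - \overline{b}^{\ell'} = \omega_i^T x + b^{\ell'}$.

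Third, I would identify $b^{\ell'}$ with the scalar $b$ appearing in the lemma statement. Each $b^{\hat{\ell}}$ is drawn independently uniformly from $[0,2\pi]$ with a seed unknown to all other workers, and $\ell'$ is chosen uniformly from $\{1,\ldots,q\}\setminus\{\ell\}$ independently of the $b^{\hat{\ell}}$'s. Therefore $b^{\ell'}$ is itself uniformly distributed on $[0,2\pi]$, so it can be taken as the $b$ of the lemma. This completes the identification of the output with $\omega_i^T x + b$.

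There is no substantial obstacle here: the argument is purely an algebraic cancellation combined with the partition identity for the inner product. The only subtlety worth mentioning explicitly is that the two tree structures $T_1$ and $T_2$ are only needed for the privacy argument (see Definition~\ref{def_total_differ_tree}); for the correctness statement at hand, the tree-structured schemes simply compute the indicated sums exactly, and the result does not depend on the particular topology used.
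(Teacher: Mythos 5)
Your proposal is correct and follows essentially the same route as the paper's own proof: cancel the masks to leave the single uniform offset $b^{\ell'}$, use the feature partition (with each worker regenerating the same $\omega_i$ from the shared seed) to recover $\omega_i^T x$, and identify $b^{\ell'}$ with $b$ since it is uniform on $[0,2\pi]$. Your version is slightly more explicit about the algebraic cancellation and correctly observes that the tree topologies matter only for privacy, not for correctness.
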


Based on Lemma \ref{lemma1}, we can conclude that the federated learning algorithm (\emph{i.e.}, FDSKL)  can produce the same doubly stochastic gradients as that of a DSG algorithm with constant learning rate.  Thus, under Assumption \ref{ass1}, we  can prove that FDSKL converges to the optimal solution almost at a rate of $\mathcal{O}(1/t)$ as shown in Theorem \ref{thm1}.  The proof is provided in Appendix, Note that the convergence proof of the original DSG algorithm in \citep{dai2014scalable} is  limited to diminishing learning rate.

\begin{assumption} \label{ass1} Suppose the following conditions hold.
\begin{enumerate}[leftmargin=0.2in]
\item There exists an optimal solution, denoted as $f_*$, to the problem  (\ref{objective_function}).
\item We have an upper bound for the derivative of $L(u,y)$ w.r.t. its 1st argument, \emph{i.e.},  $|L'(u,y)|<M$.
\item
The loss function $L(u,y)$ and its first-order derivative are  $\mathcal{L}$-Lipschitz continuous in terms of the first argument.
\item We have an upper bound $\kappa$ for the kernel value, \emph{i.e.}, $K(x,x') \le \kappa$. We have an upper bound $\phi$ for random feature mapping, \emph{i.e.}, $|\phi_{\omega}(x)\phi_{\omega}(x')| \le \phi$.
\end{enumerate}
\end{assumption}
\begin{theorem}\label{thm1}
	Set  $\epsilon>0$,  $\min \{\frac{1}{\lambda}, \frac{\epsilon \lambda}{4M^2(\sqrt{\kappa}+\sqrt{\phi})^2}\}>\gamma > 0$, for Algorithm \ref{algorithm3}, with $\gamma=\frac{\epsilon\vartheta}{8\kappa B}$ for $\vartheta\in\left( \right.0,1\left.\right]$, under Assumption \ref{ass1}, we will reach
$
\mathbb{E} \Big[|f_{t}(x) - f_*(x)|^2\Big] \le \epsilon
$ after
\begin{equation}\label{tsolu}
t \geq \frac{8\kappa B\log (8\kappa e_1/\epsilon)}{\vartheta\epsilon \lambda}
\end{equation} iterations, where $ B=\left[\sqrt{G_2^2+G_1}+G_2\right]^2$, $G_1=\frac{2\kappa M^2}{\lambda}$, $G_2=\frac{\kappa^{1/2}M(\sqrt{\kappa}+\sqrt{\phi})}{2\lambda^{3/2}}$ and $e_1=\mathbb{E} [\|h_1-f_*\|_{\mathcal{H}}^2]$.
\end{theorem}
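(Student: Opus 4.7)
The plan is to first use Lemma~\ref{lemma1} to argue that FDSKL produces exactly the same trajectory (in distribution over $(x_i,y_i,\omega_i)$) as a \emph{centralized} DSG with a constant learning rate $\gamma$: Lemma~\ref{lemma1} guarantees that the federated protocol in Algorithm~\ref{protocol1} returns the correct value of $\omega_i^T x+b$, and the update in Algorithm~\ref{algorithm3} rescales every stored $\alpha_j$ by $(1-\gamma\lambda)$ and appends $\alpha_t=-\gamma L'(f_t(x_i),y_i)\phi_{\omega_i}(x_i)$, which matches~(\ref{eqf}). Consequently the convergence analysis reduces to bounding the error of a single-thread DSG run with constant $\gamma$, which is the new content relative to the diminishing-step proof of \citet{dai2014scalable}.

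\textbf{Error decomposition and two recursions.} Following the template of Dai et al., I would introduce the auxiliary sequence $h_t\in\mathcal{H}$ defined by the \emph{true} functional stochastic gradient $L'(h_t(x_i),y_i)K(x_i,\cdot)+\lambda h_t$ on the same samples that drive FDSKL, and split
\[
|f_t(x)-f_*(x)|^2 \;\le\; 2|f_t(x)-h_t(x)|^2 + 2|h_t(x)-f_*(x)|^2.
\]
For the second summand, $\lambda$-strong convexity of $\mathcal R(f)$ together with Assumption~\ref{ass1} yields a contraction of the form $\mathbb E\|h_{t+1}-f_*\|_{\mathcal H}^2 \le (1-\gamma\lambda)\,\mathbb E\|h_t-f_*\|_{\mathcal H}^2 + \gamma^2 V_1$, where $V_1\propto \kappa M^2$ controls the second moment of the stochastic RKHS gradient and produces the constant $G_1=2\kappa M^2/\lambda$. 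For the first summand, differencing the FDSKL and DSG updates and using the $\mathcal L$-Lipschitzness of $L'$ together with the bounds $K\le\kappa$ and $|\phi_\omega\phi_\omega|\le\phi$ gives
\[
\mathbb E|f_{t+1}(x)-h_{t+1}(x)|^2 \le (1-\gamma\lambda)\,\mathbb E|f_t(x)-h_t(x)|^2 + \gamma^2 V_2 + \gamma\,C\,\mathbb E\|h_t-f_*\|_{\mathcal H},
\]
in which $V_2$ and $C$ conspire to give the constant $G_2=\kappa^{1/2}M(\sqrt\kappa+\sqrt\phi)/(2\lambda^{3/2})$.

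\textbf{Unrolling with constant step.} Combining the two bounds into a single recursion on $e_t:=\mathbb E|f_t(x)-f_*(x)|^2$, I expect an inequality of the form $e_{t+1} \le (1-\gamma\lambda)e_t + 8\gamma^2\kappa B$, where $B=[\sqrt{G_2^2+G_1}+G_2]^2$ arises as the self-consistent scale of $\sqrt{e_t}$ (the positive root of $b=G_1+2G_2\sqrt b$) that absorbs the coupling term $\gamma G_2\sqrt{e_t}$ into $\gamma^2 B$. Unrolling the geometric-plus-noise recursion then gives
\[
e_t \;\le\; 2\kappa(1-\gamma\lambda)^{t-1}\,e_1 + \frac{8\gamma\kappa B}{\lambda}.
\]
Plugging in $\gamma=\epsilon\vartheta/(8\kappa B)$ bounds the steady-state noise term by $\epsilon\vartheta/\lambda$, which is at most $\epsilon/2$ under the hypothesis $\gamma<\epsilon\lambda/(4M^2(\sqrt\kappa+\sqrt\phi)^2)$ and $\vartheta\le 1$. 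Forcing the transient $2\kappa(1-\gamma\lambda)^{t-1}e_1\le\epsilon/2$ via the elementary inequality $-\log(1-\gamma\lambda)\ge\gamma\lambda$ and solving for $t$ produces exactly the iteration count~(\ref{tsolu}).

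\textbf{Main obstacle.} The hard part will be closing the second recursion, because both $f_t$ and $h_t$ move under gradients evaluated at the same $(x_i,y_i)$ but with different arguments $f_t(x_i)$ vs.\ $h_t(x_i)$, so a cross term $|L'(f_t(x_i),y_i)-L'(h_t(x_i),y_i)|$ couples $|f_t(x)-h_t(x)|$ back to $\|h_t-f_*\|_{\mathcal H}$ through Lipschitzness. In the original DSG proof with $\gamma_t\to 0$ this coupling telescopes cleanly, but with constant $\gamma$ one must first establish an \emph{a priori} uniform bound on $\sqrt{e_t}$ in order to linearize the coupling; matching constants then forces precisely the quadratic-root definition $B=[\sqrt{G_2^2+G_1}+G_2]^2$ in the statement. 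Carrying out this uniform-boundedness argument by induction, and tracking all constants so that they collapse to exactly $8\kappa B$ in the noise coefficient, is the delicate part of the proof.
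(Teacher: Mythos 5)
Your overall architecture matches the paper's: reduce to centralized constant-step DSG via Lemma~\ref{lemma1}, split the error into a random-feature part $|f_t(x)-h_t(x)|^2$ and a data-randomness part involving $\|h_t-f_*\|^2_{\mathcal{H}}$ (the paper's decomposition carries a factor $2\kappa$ on the second term via the reproducing property, which you omit but clearly need), solve a recursion with a $\sqrt{e_t}$ coupling whose fixed point yields $B=[\sqrt{G_2^2+G_1}+G_2]^2$, and unroll. You also correctly identify $B$ as the positive root of $b=G_1+2G_2\sqrt{b}$ and the $\log(1/\epsilon)$-deflated $\mathcal{O}(1/t)$ rate.

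However, you have the two recursions structurally backwards, and this is a genuine gap. The paper's auxiliary sequence $h_t$ is driven by $L'(f_t(x_t),y_t)K(x_t,\cdot)+\lambda h_t$, i.e.\ by the loss derivative evaluated at the \emph{actual iterate} $f_t$, not at $h_t$ as you define it. With that choice, $f_{t+1}-h_{t+1}=\sum_{i\le t}\alpha_i^t(\zeta_i-\xi_i)$ is an exact martingale sum of zero-mean random-feature errors, so the first term needs no recursion at all: $\mathbb{E}|f_{t+1}(x)-h_{t+1}(x)|^2=\sum_i|W_i(x)|^2\le M^2(\sqrt{\kappa}+\sqrt{\phi})^2\sum_i|\alpha_i^t|^2\le M^2(\sqrt{\kappa}+\sqrt{\phi})^2\gamma/\lambda=:C^2$, using the geometric decay $|\alpha_i^t|\le\gamma\prod_{j>i}(1-\gamma\lambda)$ so that $\sum_i|\alpha_i^t|\le 1/\lambda$. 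This uniform bound is exactly what makes the constant-step analysis go through, it is the source of the hypothesis $\gamma<\epsilon\lambda/(4M^2(\sqrt{\kappa}+\sqrt{\phi})^2)$ (which you instead attribute to the steady-state noise), and it removes any need for the a priori inductive uniform bound you flag as the ``main obstacle.'' The $\sqrt{e_t}$ coupling then lives in the $\|h_t-f_*\|^2_{\mathcal{H}}$ recursion, through $\mathcal{R}_t=\langle h_t-f_*,\hat{g}_t-g_t\rangle_{\mathcal{H}}\le\kappa^{1/2}\mathcal{L}C\sqrt{e_t}$, precisely because $h_t$ is driven at $f_t(x_t)$. With your definition of $h_t$ the cross term in the $|f_t-h_t|$ recursion would couple to $|f_t(x_t)-h_t(x_t)|$ itself (self-coupling via Lipschitzness of $L'$, at the random point $x_t$ rather than the fixed $x$), not to $\|h_t-f_*\|_{\mathcal{H}}$ as you wrote, so neither of your two displayed recursions is actually derivable as stated. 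There are also constant-tracking slips (your steady-state term $\epsilon\vartheta/\lambda\le\epsilon/2$ does not follow from $\vartheta\le 1$; the paper instead gets $e_\infty\le\gamma B$ directly and requires $2\kappa\gamma B\le\epsilon/4$, which is where $\gamma=\epsilon\vartheta/(8\kappa B)$ and the $8\kappa$ factors in~(\ref{tsolu}) come from), but these are secondary to the misplaced coupling.
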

\begin{remark}
	Based on Theorem \ref{thm1}, we have  that for any given data $x$, the evaluated value of $f_{t+1}$ at $x$ will converge to that of a solution close to $f_*$ in terms of the Euclidean distance. The rate of convergence is almost $\mathcal{O}(1/t)$, if eliminating the $\log(1/\epsilon)$ factor. Even though our algorithm has included more randomness by using random features, this rate is nearly the same as standard SGD. As a result, this guarantees the efficiency of the proposed algorithm.
\end{remark}
  \begin{table*}[htbp]
\small
 \center
 \caption{The  state-of-the-art kernel methods compared in our experiments. (BC=binary classification,
R=regression) }
 \setlength{\tabcolsep}{0.5mm}
\begin{tabular}{c|c|c|c|c}
\hline
 \textbf{Algorithm}  &  \textbf{Reference} &   \textbf{Problem} & \textbf{Vertically Partitioned Data} &  \textbf{Doubly Stochastic}    \\ \hline
LIBSVM & \cite{CC01a}  & BC+R &  No  & No  \\
DSG & \cite{dai2014scalable}  &    BC+R & No & Yes  \\
PP-SVMV & \cite{yu2006privacy} &  BC+R &  Yes & No  \\
\hline
FDSKL & Our & BC+R & Yes  & Yes  \\
 \hline
\end{tabular}
\label{table:methods}
\end{table*}
\subsection{Security Analysis}
We  discuss the data security (in other words,  prevent local data on one worker leaked to or inferred by  other workers) of FDSKL  under the \textit{semi-honest} assumption. Note that  the  \textit{semi-honest} assumption is commonly used in in security analysis \citep{wan2007privacy,hardy2017private,cheng2019secureboost}.
\begin{assumption}[Semi-honest security] \label{ass_semi_honest}
All workers  will follow the protocol or algorithm to perform the
correct computations. However, they may retain records of
the intermediate computation results which they may use
later to infer the data of other workers.
\end{assumption}

Because each worker  knows the parameter $\omega$ given a  random seed, we can have a linear system of $o_j= (\omega_j)_{\mathcal{G}_{\ell}}^T (x_i)_{\mathcal{G}_{\ell}}$ with a sequence of trials of  $\omega_j$ and  $o_j$. It has the potential to  infer $(x_i)_{\mathcal{G}_{\ell}}$ from the linear system of $o_j=(\omega_j)_{\mathcal{G}_{\ell}}^T (x_i)_{\mathcal{G}_{\ell}}$ if the sequence of $o_j$ is also known\footnote{$o_j$ could be between an interval according to Algorithm \ref{protocol1}, we can guarantee the privacy of sample $x_i$ by enforcing the value of each element of  $x_i$ into a small range. Thus, we use normalized data in our algorithm.}. We call it inference attack. Please see its formal definition in Definition \ref{defi_inf_attack}.
  In this part, we will prove that  FDSKL can prevent  \textit{inference attack} (\textit{i.e.}, Theorem \ref{thm_privacy}). 
\begin{definition}[Inference attack]\label{defi_inf_attack}
An inference attack on the $\ell$-th worker is to infer a certain feature group $\mathcal{G}$ of sample $x_i$ which belongs  to other workers
 without directly accessing it.
\end{definition}
\begin{theorem}\label{thm_privacy}
Under the \textit{semi-honest} assumption,  the FDSKL algorithm can prevent  \textit{inference attack}.
\end{theorem}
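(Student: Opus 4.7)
My plan is to show that, from the viewpoint of any single worker $\ell$, the entire transcript of messages produced by FDSKL is statistically independent of each other worker's private block $(x_i)_{\mathcal{G}_{\hat{\ell}}}$, so no inference in the sense of Definition \ref{defi_inf_attack} can succeed. I would first enumerate the messages that worker $\ell$ can ever observe. In Algorithm \ref{protocol1} these are: partial sums along tree $T_1$ of the form $\sum_{\hat{\ell}\in S}\bigl((\omega_i)^T_{\mathcal{G}_{\hat{\ell}}}(x_i)_{\mathcal{G}_{\hat{\ell}}}+b^{\hat{\ell}}\bigr)$ for some subtree-leaf-set $S$; partial sums along tree $T_2$ of the form $\sum_{\hat{\ell}\in S'}b^{\hat{\ell}}$ with $S'\subseteq\{1,\dots,q\}\setminus\{\ell'\}$; and the final output $\xi-\overline{b}^{\ell'}=\omega_i^T x_i+b^{\ell'}$. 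In Algorithms \ref{algorithm3} and \ref{alg1} the additional messages are the partial function values $f^{\hat{\ell}}(x)$ aggregated on tree $T_0$, the reconstructed $f(x)$, and the plain integer seeds used to reproduce $\omega$ (which obviously carry no private information).

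Next, I would use that each $b^{\hat{\ell}}$ is a fresh uniform draw on $[0,2\pi]$ known only to its generating worker, so any sum of $b^{\hat{\ell}}$'s over a nonempty proper subset of $\{1,\dots,q\}$ is uniform modulo $2\pi$ from worker $\ell$'s viewpoint and is independent of the private inner products $z_{\hat{\ell}}:=(\omega_j)^T_{\mathcal{G}_{\hat{\ell}}}(x_i)_{\mathcal{G}_{\hat{\ell}}}$. Thus no individual message in the list above leaks information about any $z_{\hat{\ell}}$ directly; the question reduces to whether worker $\ell$ can linearly combine the messages to cancel every $b^{\hat{\ell}}$ mask while still retaining a nontrivial linear functional of $\{z_{\hat{\ell}}\}_{\hat{\ell}\ne\ell}$.

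The crux is to show that this cancellation is impossible thanks to the totally-different-tree hypothesis (Definition \ref{def_total_differ_tree}). I would encode each message as a row of a matrix whose columns are indexed by $\{z_{\hat{\ell}}\}$ and $\{b^{\hat{\ell}}\}$; the $T_1$-rows have coupled supports $(S,S)$ over $T_1$-subtree leaf-sets $S$, while the $T_2$-rows have supports $(\emptyset,S')$ with $S'$ ranging over $T_2$-subtree leaf-sets that avoid $\ell'$. Because $T_1$ and $T_2$ share no multi-leaf subtree, I would prove by induction on the heights of $T_1$ and $T_2$ that the intersection of the two row-span projections onto the $\{b^{\hat{\ell}}\}$-block is exactly the span of the all-ones vector; hence the only linear combination that kills the $b$-mask produces $\sum_{\hat{\ell}}z_{\hat{\ell}}+b^{\ell'}$, which is still masked by the unknown $b^{\ell'}$ and therefore non-informative.

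Finally, I would extend the argument to the function-evaluation messages: each $f^{\hat{\ell}}(x)=\sum_{i\in\Lambda^{\hat{\ell}}}\alpha_i\phi_{\omega_i}(x)$ is a scalar whose summands are built from the already-opaque masked random features, and the constraint $|\Lambda^{\hat{\ell}}|\ge 2$ ensures that a single revealed value $f^{\hat{\ell}}(x)$ underdetermines the local coefficients, so it cannot be inverted to recover any hidden feature block either. Combining these pieces, the distribution of worker $\ell$'s transcript conditional on $(x_i)_{\mathcal{G}_{\hat{\ell}}}$ equals its unconditional distribution, which is exactly what it means to prevent the inference attack of Definition \ref{defi_inf_attack}. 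The main obstacle I foresee is the span-intersection lemma for the two totally-different trees; I expect to establish it by induction on tree height, using the fact that every multi-leaf subtree of $T_1$ has a leaf-set distinct from every multi-leaf subtree of $T_2$, so no nontrivial partial cancellation of the $b$-mask can be accomplished below the root.
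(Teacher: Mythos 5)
Your proposal follows the same essential route as the paper: both arguments rest on (i) the uniform additive masks $b^{\hat{\ell}}$ hiding each local inner product $(\omega_i)_{\mathcal{G}_{\hat{\ell}}}^T(x_i)_{\mathcal{G}_{\hat{\ell}}}$ inside every transmitted partial sum of Algorithm \ref{protocol1}, and (ii) the totally-different-trees condition of Definition \ref{def_total_differ_tree} preventing any worker from recovering the $b$-sum needed to strip a mask off a $T_1$ partial sum. Where you differ is in how (ii) is executed. The paper's Lemma \ref{lemma2} is a three-sentence contradiction argument that only excludes the \emph{direct} attack in which a worker matches a $T_2$ partial sum of $b$'s against a $T_1$ partial sum over the identical leaf set; your linear-algebraic formulation (encode each observed message as a row over the $z$- and $b$-coordinates and show that the only combination cancelling the $b$-block is the fully masked global sum) is strictly more ambitious, since it also excludes attacks that \emph{combine several} observed partial sums — a case the paper never addresses. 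That is a real strengthening. Two cautions, however. First, your span-intersection claim is false if stated over \emph{all} subtree leaf-sets of $T_1$ and $T_2$: singleton leaf messages make both projections onto the $b$-block essentially full-dimensional. The lemma must be stated per worker, restricted to the rows that worker actually receives (its children's messages in each tree, its own locally generated $z_\ell,b^\ell$, and the final aggregates $\xi$ and $\overline{b}^{\ell'}$), with the worker's own known quantities quotiented out; you set up this per-worker viewpoint in your opening but then state the lemma without the restriction. Second, that lemma is only announced (``I would prove by induction on the heights''), yet it carries the entire mathematical content: Definition \ref{def_total_differ_tree} only forbids a \emph{shared} multi-leaf subtree, so the induction must rule out cancellations built from distinct-but-overlapping leaf sets (e.g., a worker holding the $T_1$ message for $\{2\}$ and the $T_2$ message for $\{2,3\}$). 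Until that induction is written out, your argument is a program rather than a proof — though one no less complete than, and in scope more honest than, the paper's own.
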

As discussed above, the key of preventing the inference attack is to  mask the value of $o_j$. As described in lines \ref{step2}-\ref{step3} of Algorithm \ref{protocol1}, we add an extra random variable  $b^{\hat{\ell}}$  into $(\omega_j)_{\mathcal{G}_{\ell}}^T (x_i)_{\mathcal{G}_{\ell}}$. Each time, the algorithm only transfers the value of $(\omega_i)_{\mathcal{G}_\ell}^T (x_i)_{\mathcal{G}_\ell}+b^{\hat{\ell}} $ to another worker. Thus, it is impossible for the receiver worker to directly infer the value of $o_j$. Finally,  the $\ell$-th active worker gets  the global sum $\sum_{\hat{\ell}=1}^q  \left ((\omega_i)_{\mathcal{G}_{\hat{\ell}}}^T (x_i)_{\mathcal{G}_{\hat{\ell}}}+b^{\hat{\ell}} \right )$ by using a tree-structured communication scheme based on the tree structure $T_1$. Thus, lines \ref{step2}-\ref{step4} of Algorithm \ref{protocol1} keeps data privacy.

As proved in Lemma \ref{lemma1}, lines \ref{step5}-\ref{step7} of Algorithm \ref{protocol1} is to get $\omega_i^T x+b$ by  removing $\overline{b}^{\ell'}=\sum_{{{\hat{\ell}}} \neq \ell'} b^{{{\hat{\ell}}}}$ from the sum $\sum_{\hat{\ell}=1}^q  \left ((\omega_i)_{\mathcal{G}_{\hat{\ell}}}^T (x_i)_{\mathcal{G}_{\hat{\ell}}}+b^{\hat{\ell}} \right )$. To prove that  FDSKL can prevent the inference attack, we only need to prove that the calculation of $\overline{b}^{\ell'}=\sum_{{{\hat{\ell}}} \neq \ell'} b^{{{\hat{\ell}}}}$ in line \ref{step7} of Algorithm \ref{protocol1} does not disclose the value of $b^{{{\hat{\ell}}}}$ or the sum of $b^{{{\hat{\ell}}}}$ on a node of tree $T_1$, which is  indicated in Lemma \ref{lemma2} (the proof is provided in Appendix).

%

\begin{lemma}\label{lemma2}
In Algorithm \ref{protocol1}, if $T_1$ and $T_2$ are totally different tree structures, for any worker ${\hat{\ell}}$, there is no risk to disclose the value of $b^{{{\hat{\ell}}}}$ or the sum of $b^{{{\hat{\ell}}}}$ to other workers.
\end{lemma}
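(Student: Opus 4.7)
The plan is to fix an arbitrary worker $w$ as a semi-honest adversary, enumerate every message $w$ can observe during a single invocation of Algorithm~\ref{protocol1}, and then show that no linear combination of these observations can isolate $b^{\hat{\ell}}$ for any $\hat{\ell}\neq w$, nor any non-trivial partial $b$-sum beyond what $w$ is entitled to know. During the $T_1$-reduction, $w$ receives partial sums of the form $A_S = \sum_{\hat{\ell}\in S}\bigl((\omega_i)_{\mathcal{G}_{\hat{\ell}}}^T (x_i)_{\mathcal{G}_{\hat{\ell}}}+b^{\hat{\ell}}\bigr)$, where $S$ ranges over the leaf sets of the $T_1$-subtrees rooted at $w$'s $T_1$-children; during the $T_2$-reduction (assuming $w\neq\ell'$), $w$ additionally receives pure-$b$ partial sums $B_{S'} = \sum_{\hat{\ell}\in S'} b^{\hat{\ell}}$, where $S'$ ranges over the leaf sets of the $T_2$-subtrees rooted at $w$'s $T_2$-children. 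Both collections are restrictions of the global laminar families induced by $T_1$ and $T_2$.

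The central observation I would exploit is that the only way for $w$ to extract a $b$-sum it is not supposed to learn is to combine its $T_1$- and $T_2$-observations so that every $(\omega_i)_{\mathcal{G}_{\hat{k}}}^T (x_i)_{\mathcal{G}_{\hat{k}}}$ cancels. Treating these masked inner products as algebraically independent indeterminates from $w$'s perspective, the cancellation condition forces the $T_1$-subtree indicators used in the combination to match, leaf set by leaf set, with a sub-collection of $T_2$-subtree indicators. If the resulting $b$-target has $\geq 2$ leaves, such a match produces a subtree with more than one leaf common to $T_1$ and $T_2$, directly contradicting Definition~\ref{def_total_differ_tree}. If the target is a singleton $\{\hat{\ell}\}$ with $\hat{\ell}\neq w$, the same matching argument forces $\{\hat{\ell}\}$ to appear as a subtree leaf set in both trees through $w$; but then the unique uncancelled masked term $(\omega_i)_{\mathcal{G}_{\hat{\ell}}}^T (x_i)_{\mathcal{G}_{\hat{\ell}}}$ is unknown to $w$, so $b^{\hat{\ell}}$ remains statistically hidden by its uniform-on-$[0,2\pi]$ draw. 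The dual argument, interchanging the roles of $T_1$ and $T_2$, rules out the complementary threat of $w$ cancelling all $b$'s to reconstruct a masked inner product on someone else's data.

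The main obstacle I anticipate is formalizing the matching step rigorously. The precise combinatorial lemma I would prove and then invoke is: \emph{if two laminar families $\mathcal{L}_1,\mathcal{L}_2$ on a common ground set share no set of size $\geq 2$, then every integer linear combination of indicator vectors of sets in $\mathcal{L}_1$ that also equals an integer linear combination of indicator vectors of sets in $\mathcal{L}_2$ is supported on singletons drawn from $\mathcal{L}_1\cap\mathcal{L}_2$.} I would prove this by induction on the total depth of the two trees, at each step peeling off the unique inclusion-maximal subtree appearing in the combination and using laminarity to argue that it must be matched by a same-leaf-set element from the other family. Once this combinatorial claim is in hand, it plugs directly into the observation enumeration of the first paragraph, and Lemma~\ref{lemma2} follows immediately.
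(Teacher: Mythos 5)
Your reduction of the adversary's view to partial sums indexed by two laminar families is the right bookkeeping, and you have correctly located the crux: one must show that a $b$-sum over the leaf set of a $T_1$-subtree cannot be assembled from the $T_2$ observations. But the combinatorial lemma you propose to invoke is false as stated. Take $\mathcal{L}_1=\{\{1,2\}\}$ and $\mathcal{L}_2=\{\{1\},\{2\}\}$: these laminar families share no set of size $\geq 2$, yet the indicator vector of $\{1,2\}$ is simultaneously an integer combination over $\mathcal{L}_1$ and over $\mathcal{L}_2$, while $\mathcal{L}_1\cap\mathcal{L}_2=\emptyset$, so this common combination is not supported on common singletons. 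Your induction therefore cannot close. Worse, the failure mode is realizable under Definition~\ref{def_total_differ_tree}: if workers $1$ and $2$ are paired together at the first level of $T_1$ (so an adversary $w$ later receives the masked block sum $o_1+o_2+b^1+b^2$), while in $T_2$ each of workers $1$ and $2$ is in turn paired directly with $w$'s accumulating node (so $w$ receives the raw leaf values $b^1$ and $b^2$), then every multi-leaf subtree of $T_2$ contains $w$ and hence differs from $\{1,2\}$, the two trees are ``totally different'', and yet $w$ recovers $b^1+b^2$ and hence $o_1+o_2$. So the dichotomy in your second paragraph --- either a multi-leaf match (contradiction) or a singleton target (statistically hidden) --- does not exhaust the ways the masked inner products can cancel.

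For comparison, the paper's own proof is a three-sentence sketch that makes exactly the leap you were trying to justify: it asserts that obtaining $\sum_{\hat{\ell}\in S} b^{\hat{\ell}}$ for the leaf set $S$ of a $T_1$-subtree forces $S$ to be the leaf set of a $T_2$-subtree, with no argument ruling out assembling that sum from several smaller $T_2$ pieces (in particular from the raw leaf contributions, which every internal node of $T_2$ necessarily observes). So you have correctly isolated where the real work lies rather than introduced a new error, but the laminar-family lemma you propose does not fill the gap; a correct proof would need either a stronger hypothesis on $T_1$ and $T_2$ (one that excludes configurations like the one above) or a weaker conclusion than the lemma currently claims.
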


%
%
%
%
%

\subsection{Complexity Analysis}
The computational complexity for one iteration of FDSKL is $\mathcal{O}(dqt)$. The total computational complexity of FDSKL is $\mathcal{O}(dqt^2)$. Further, the    communication cost for one iteration of FDSKL is $\mathcal{O}(qt)$, and the total communication cost  of FDSKL is $\mathcal{O}(qt^2)$.
The details of deriving the computational complexity and communication cost of FDSKL are provided in Appendix.

\section{Experiments}\label{sectionexperiments}
In this section, we first present the experimental setup, and then provide the experimental results and discussions.
\subsection{Experimental Setup}
\subsubsection{Design of Experiments}  To demonstrate the superiority of FDSKL on federated kernel learning with vertically partitioned data, we compare FDSKL with PP-SVMV \citep{yu2006privacy}, which is the state-of-the-art algorithm of the field.  
Additionally, we also compare with SecureBoost \citep{cheng2019secureboost}, which is recently proposed to generalize the gradient tree-boosting algorithm to federated scenarios.
Moreover, to verify the predictive accuracy of FDSKL on vertically partitioned data, we compare with oracle learners that can access the whole data samples without the federated learning constraint. For the oracle learners, we use state-of-the-art kernel classification solvers, including LIBSVM \citep{CC01a} and DSG \citep{dai2014scalable}. Finally, we include FD-SVRG \citep{wan2007privacy}, which uses a linear model, to comparatively verify the accuracy of FDSKL.

	\begin{table}[htbp]
	\centering
	\caption{The benchmark  datasets used in the experiments.}
	\setlength{\tabcolsep}{6mm}
	\begin{tabular}{c|c|c}
		\hline
		\textbf{Datasets} &   \textbf{Features} &  \textbf{Sample size}   \\
		\hline
		gisette & 5,000 & 6,000\\
		phishing  & 68 & 11,055\\
		a9a & 123 & 48,842\\
		ijcnn1 & 22 & 49,990\\
		cod-rna  & 8 & 59,535\\
		w8a  & 300 & 64,700\\
		real-sim  & 20,958 & 72,309\\
		epsilon  & 2,000 & 400,000\\
	    \hline
	    defaultcredit & 23 & 30,000\\
	    givemecredit & 10 & 150,000\\
		\hline
	\end{tabular}
	\label{datasets_table}
\end{table}

\subsubsection{Implementation Details}
Our experiments were performed on a 24-core two-socket Intel Xeon CPU E5-2650 v4 machine with 256GB RAM.  We implemented our FDSKL in python, where the parallel computation was handled via  MPI4py  \citep{dalcin2011parallel}. We utilized the SecureBoost algorithm through the official unified framework \footnote{The code is available at \url{https://github.com/FederatedAI/FATE}.}.  The code of  LIBSVM is provided by \cite{CC01a}.   We used the   implementation\footnote{The DSG code is available at \url{https://github.com/zixu1986/Doubly_Stochastic_Gradients}.}  provided by  \cite{dai2014scalable} for  DSG.  We modified the implementation of DSG such that it uses constant learning rate. Our experiments use the following binary classification datasets as described below.

\subsubsection{Datasets}  Table \ref{datasets_table} summarizes the eight  benchmark binary classification datasets and two real-world financial datasets  used  in our experiments. The first eight benchmark datasets are obtained from LIBSVM website \footnote{These datasets  are from \url{https://www.csie.ntu.edu.tw/~cjlin/libsvmtools/datasets/}.}, the defaultcredit dataset is from the UCI  \footnote{\url{https://archive.ics.uci.edu/ml/datasets/default+of+credit+card+clients}.} website, and the givemecredit dataset is from the  Kaggle \footnote{\url{https://www.kaggle.com/c/GiveMeSomeCredit/data}.} website.
We split the dataset as $3:1$ for training and testing, respectively. Note that in the experiments of the \textit{w8a}, \textit{real-sim}, \textit{givemecredit} and \textit{epsilon} datasets,  PP-SVMV always runs out of memory, which means this method only works when the number of instance is below around 45,000 when using the computation resources specified above. 
Since the training time is over 15 hours, the result of SecureBoost algorithm on \textit{epsilon} dataset is absent.
\begin{figure*}[!ht]
	\captionsetup[subfigure]{aboveskip=2pt,belowskip=-2pt}
	\centering
	\begin{subfigure}[b]{0.24\textwidth}
		\centering
		\includegraphics[width=1.5in]{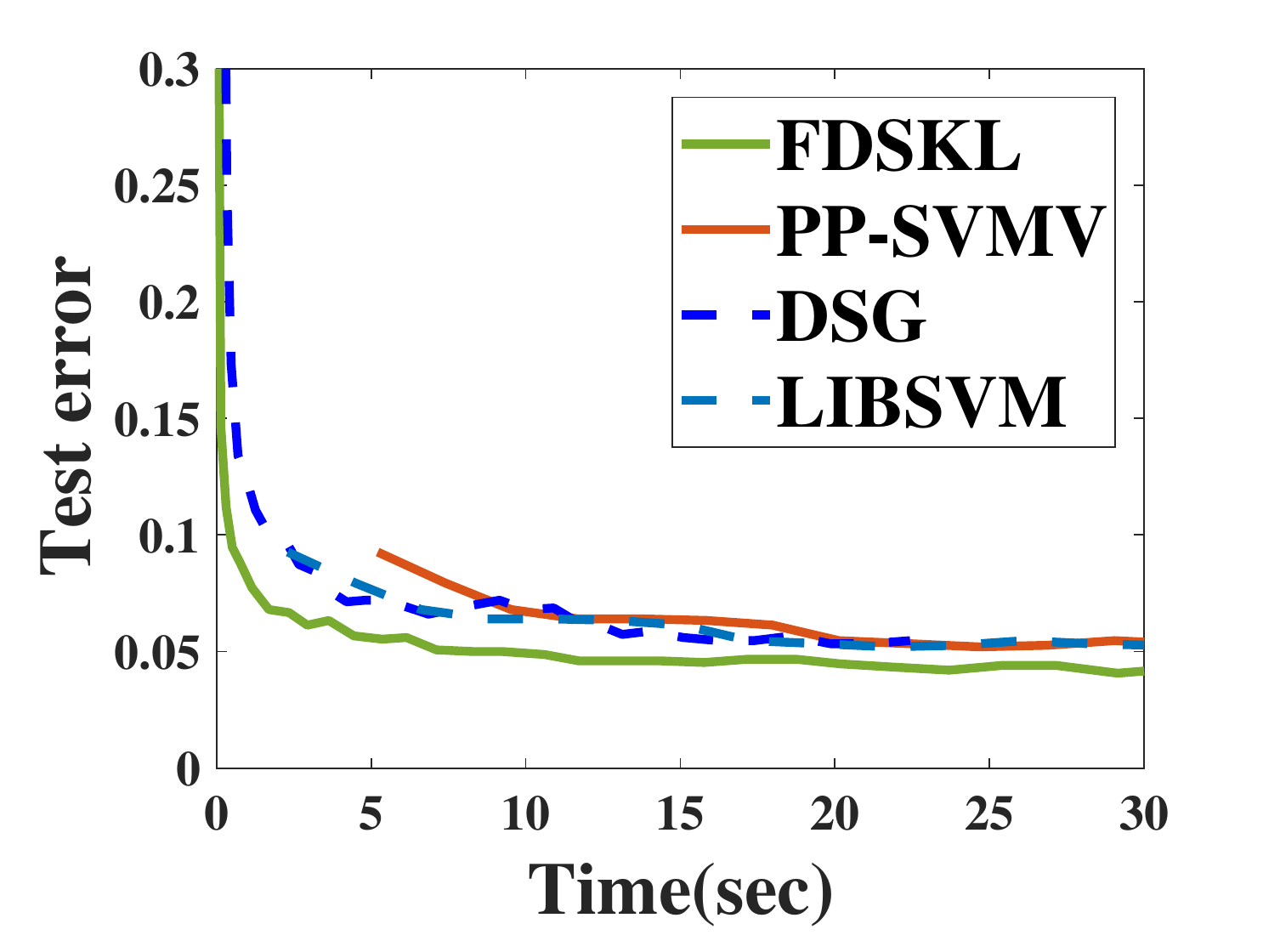}
		\label{gisette}
		\caption{gisette}
	\end{subfigure}
	\begin{subfigure}[b]{0.24\textwidth}
		\centering
		\includegraphics[width=1.5in]{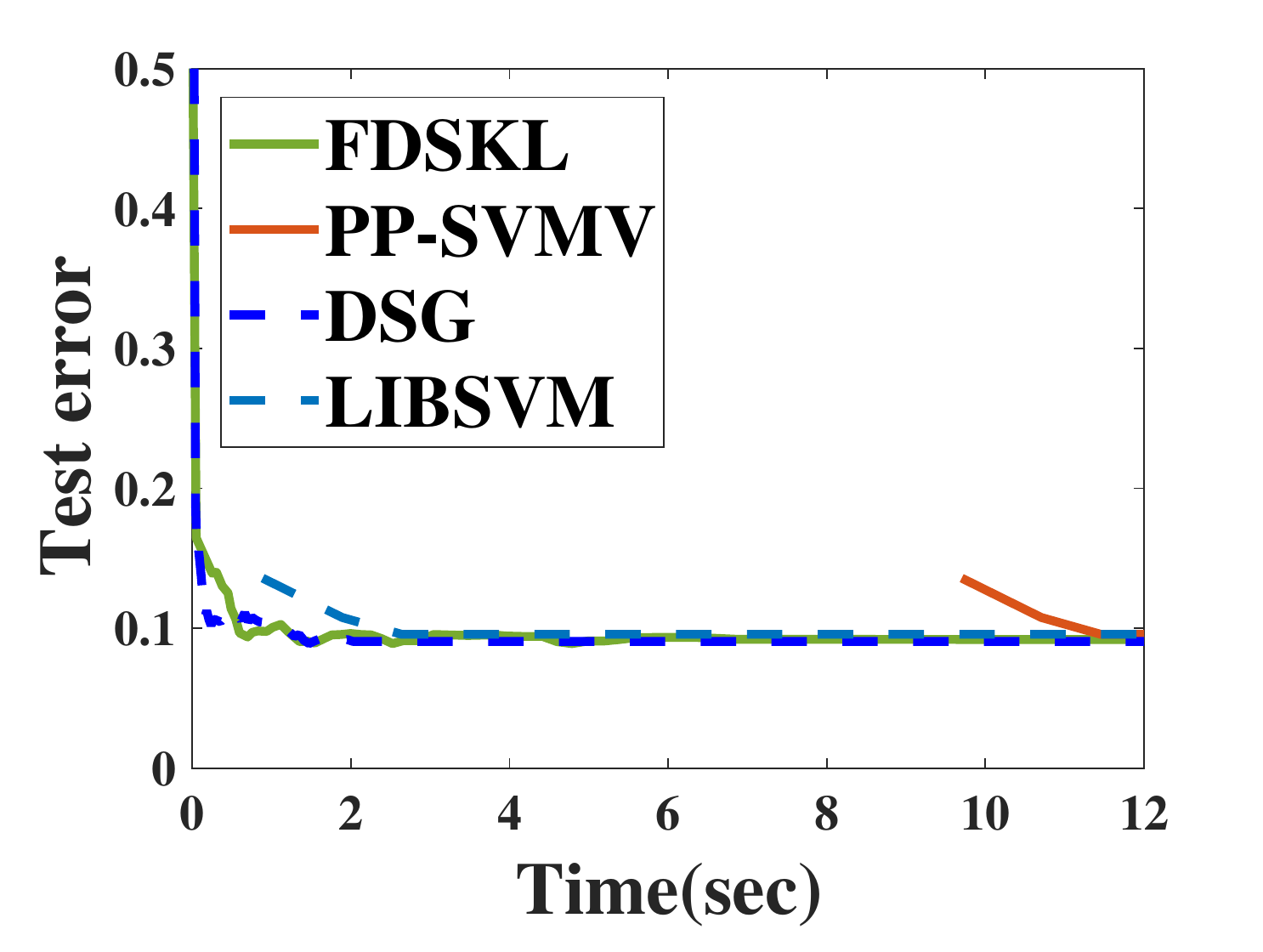}
		\label{phishing}
		\caption{phishing}
	\end{subfigure}
	\begin{subfigure}[b]{0.24\textwidth}
		\centering
		\includegraphics[width=1.5in]{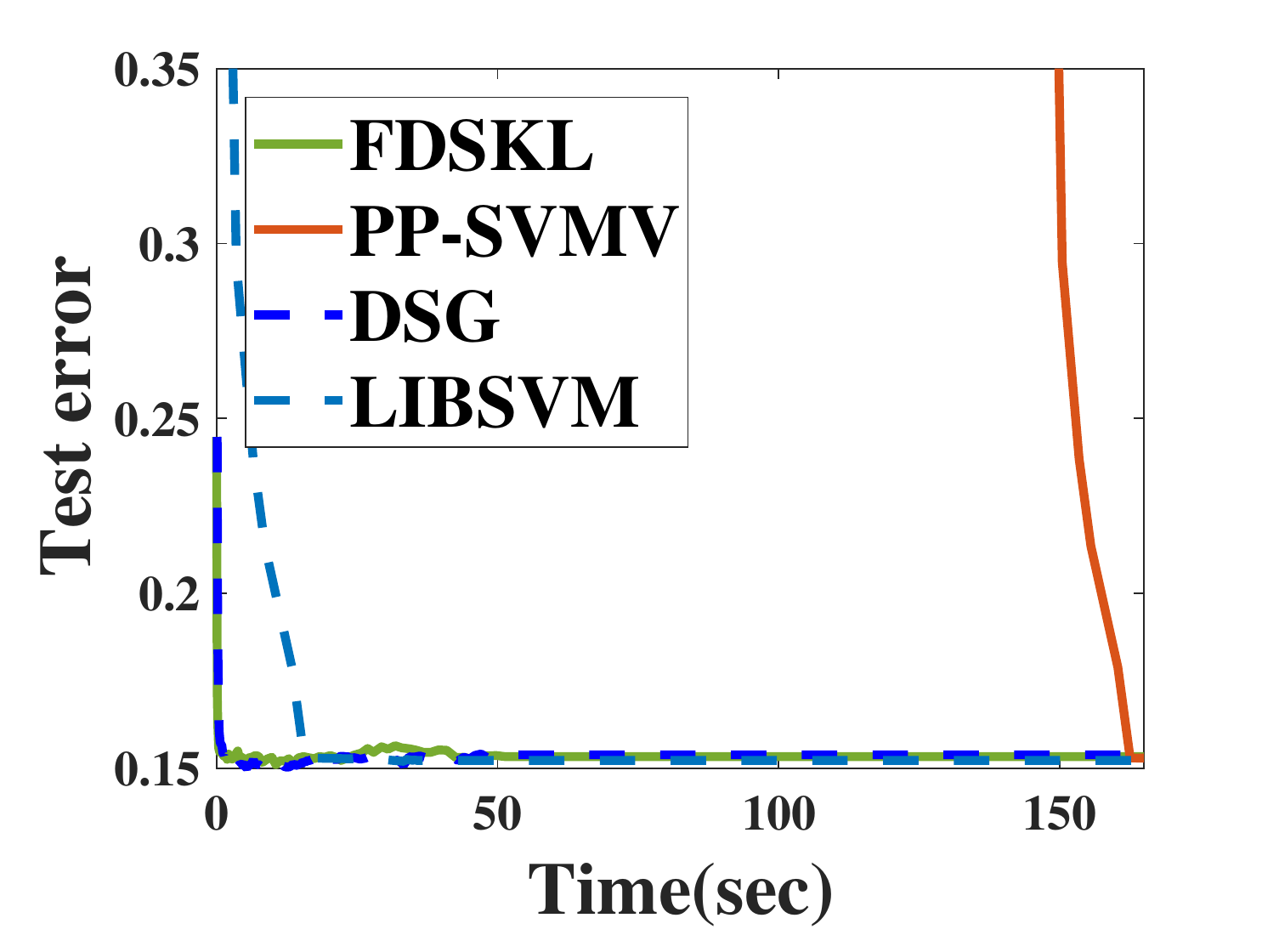}
		\label{a9a}
		\caption{a9a}
	\end{subfigure}
	\begin{subfigure}[b]{0.24\textwidth}
		\centering
		\includegraphics[width=1.5in]{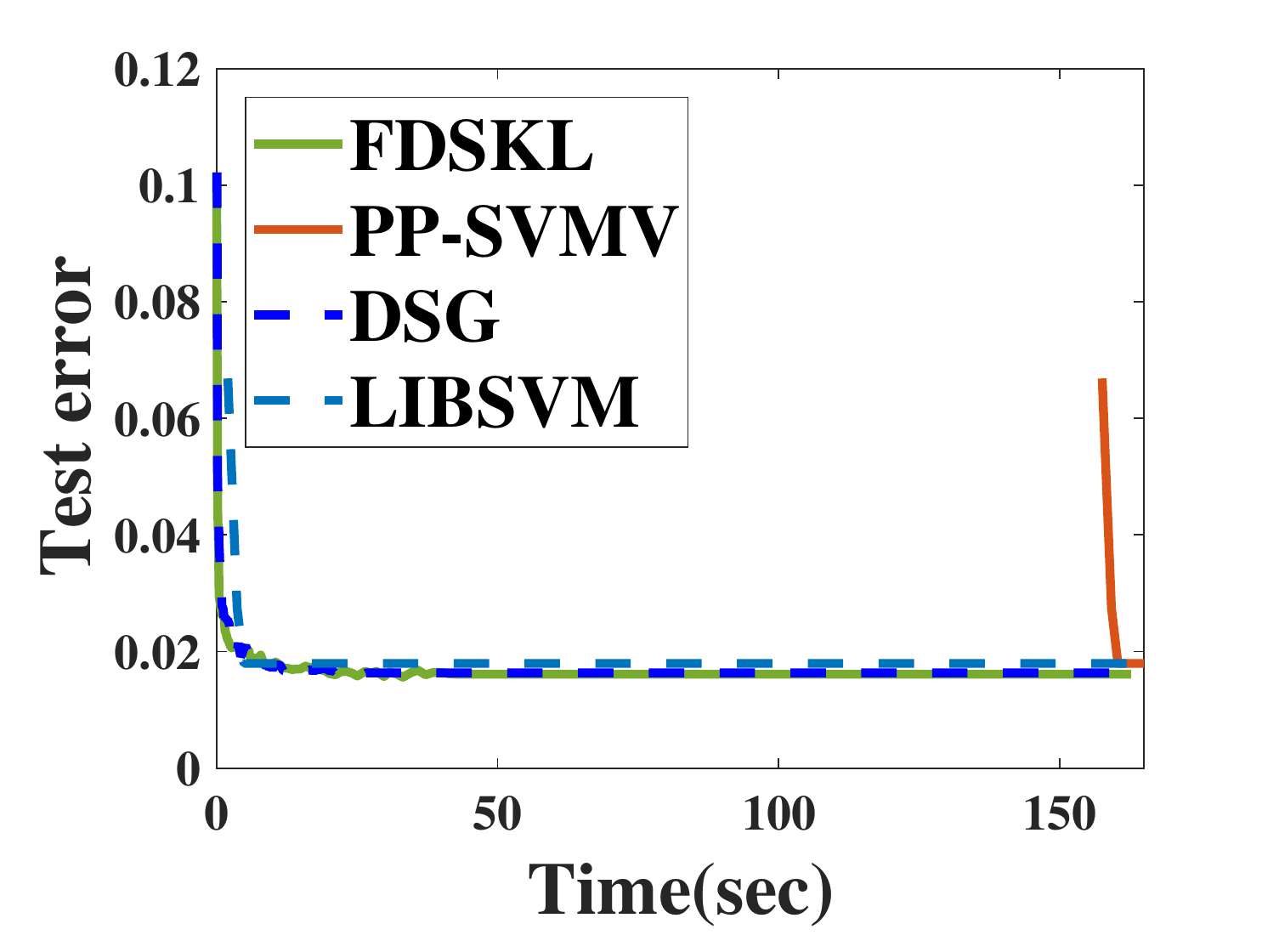}
		\label{ijcnn1}
		\caption{ijcnn1}
	\end{subfigure}
	\begin{subfigure}[b]{0.24\textwidth}
		\centering
		\includegraphics[width=1.5in]{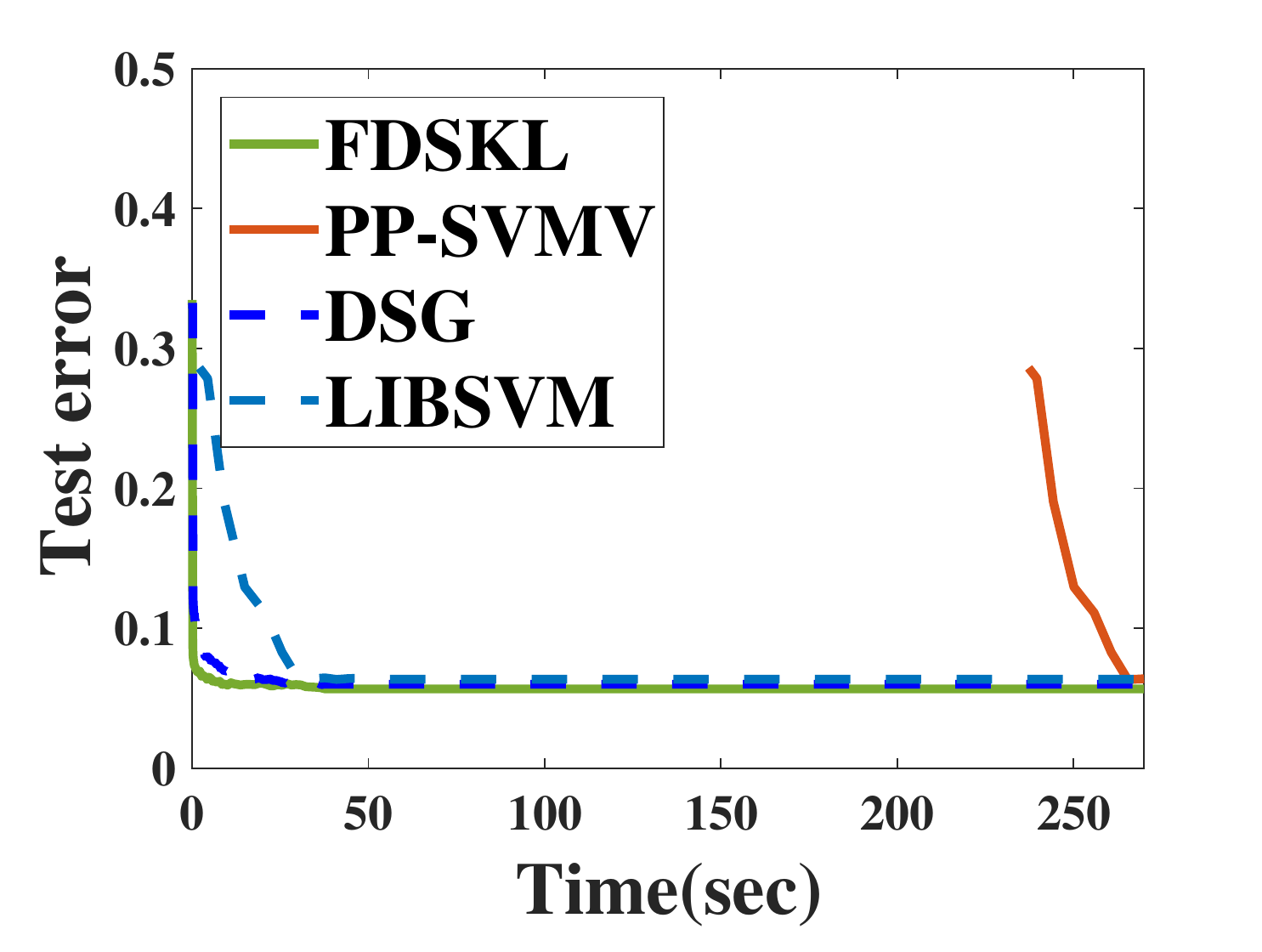}
		\label{cod}
		\caption{cod-rna}
	\end{subfigure}
	\begin{subfigure}[b]{0.24\textwidth}
		\centering
		\includegraphics[width=1.5in]{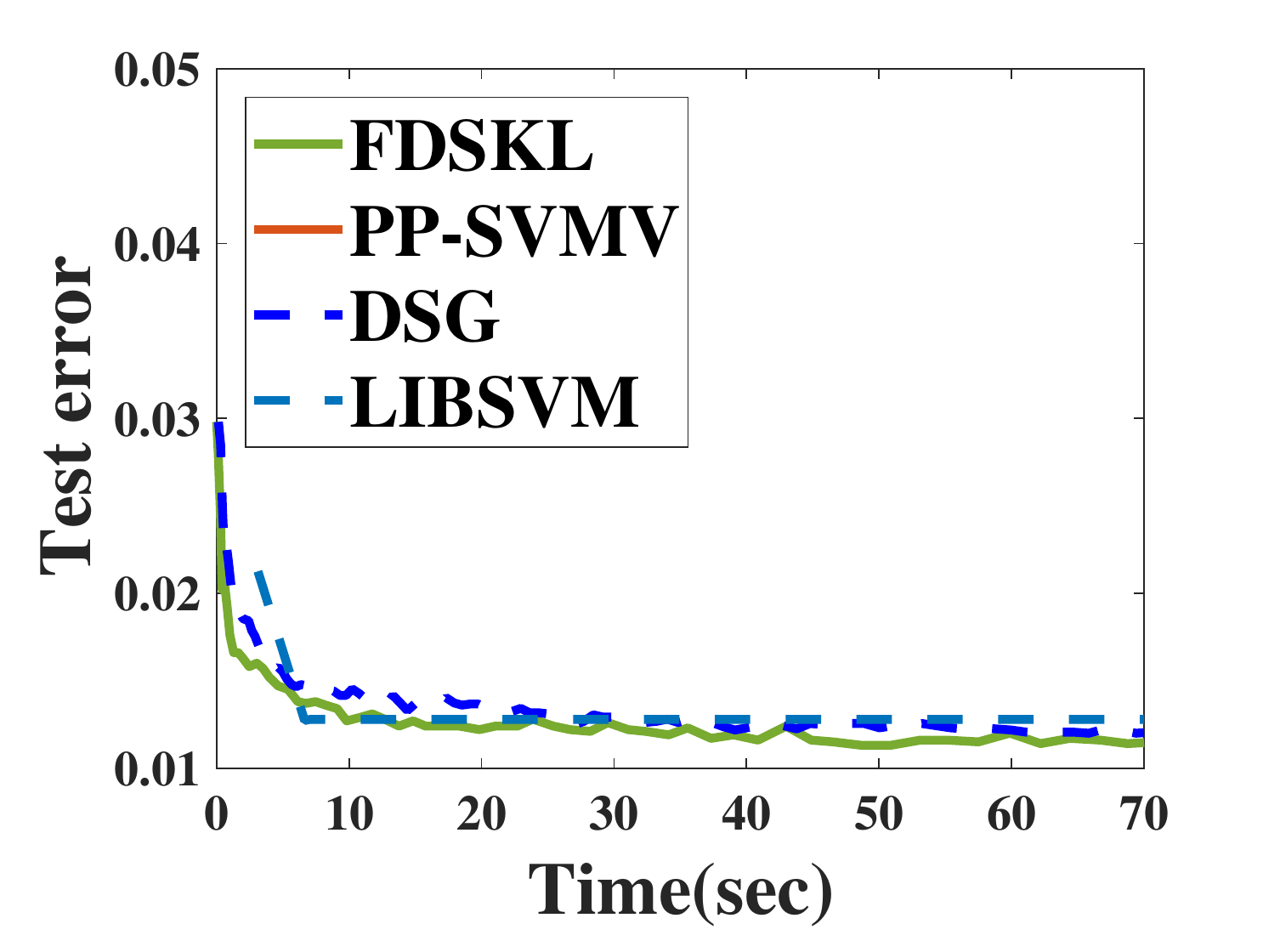}
		\label{w8a}
		\caption{w8a}
	\end{subfigure}
	\begin{subfigure}[b]{0.24\textwidth}
		\centering
		\includegraphics[width=1.5in]{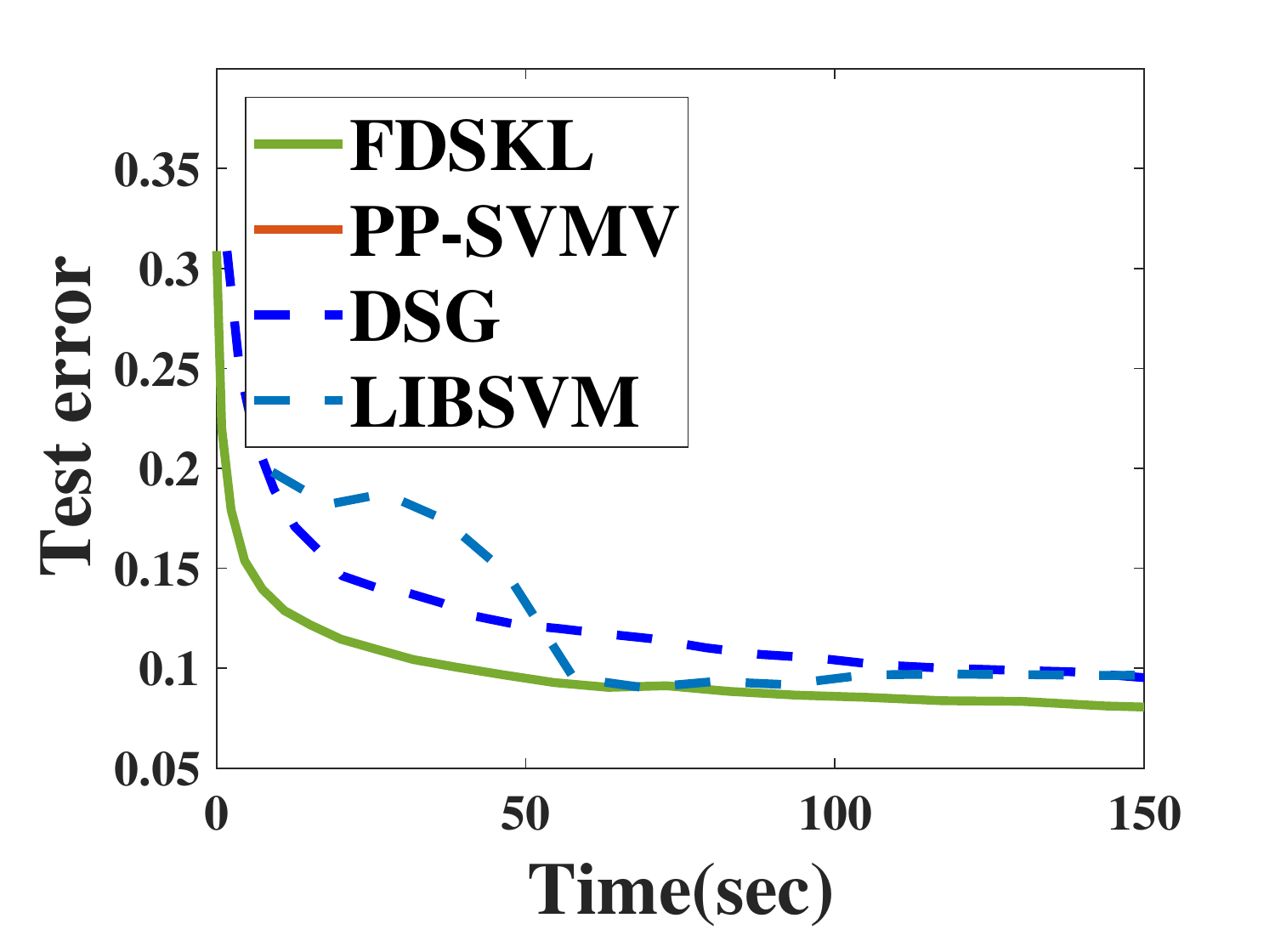}
		\label{real-sim}
		\caption{real-sim}
	\end{subfigure}
	\begin{subfigure}[b]{0.24\textwidth}
		\centering
		\includegraphics[width=1.5in]{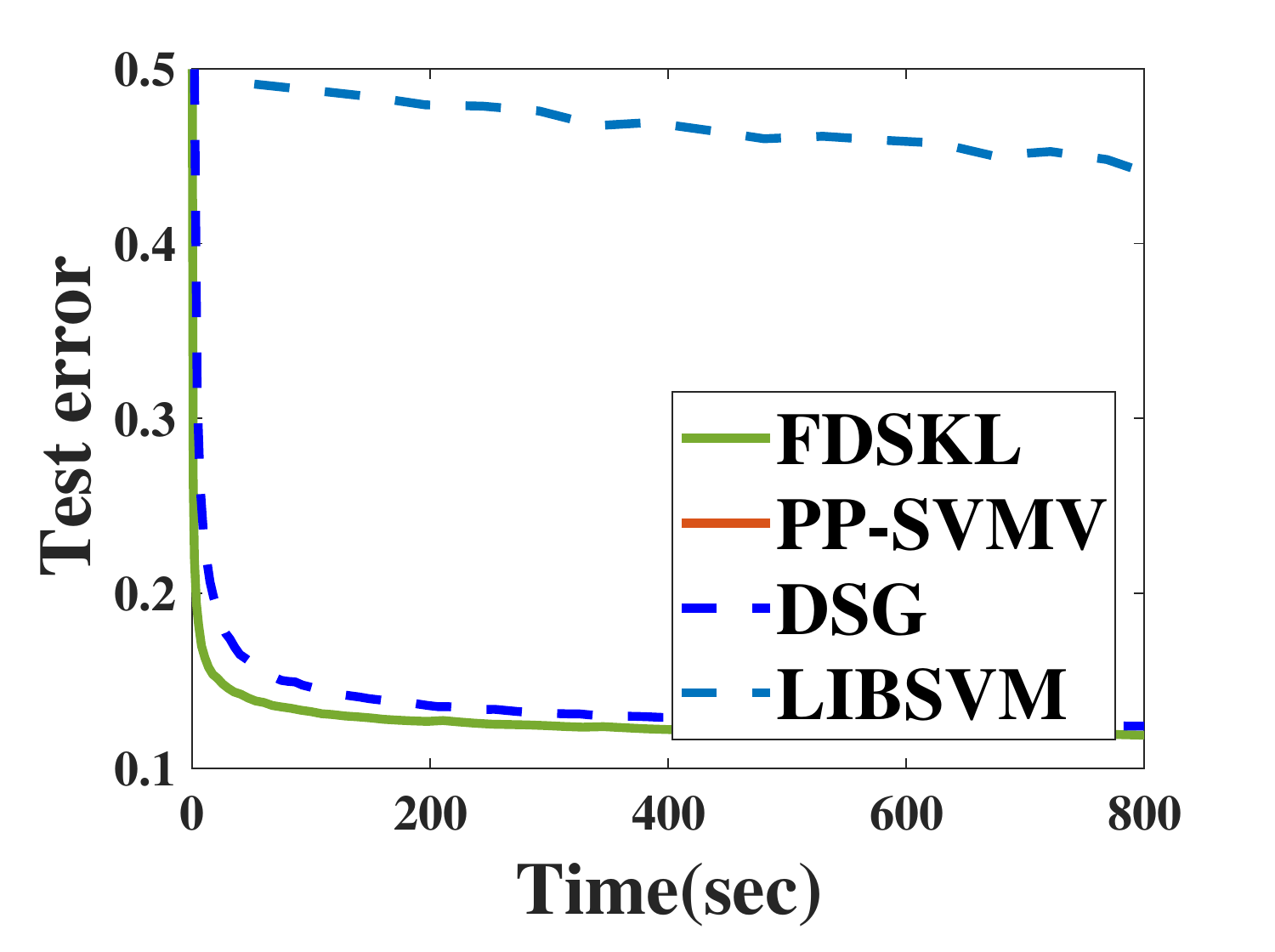}
		\label{epsilon}
		\caption{epsilon}
	\end{subfigure}
		\begin{subfigure}[b]{0.24\textwidth}
		\centering
		\includegraphics[width=1.5in]{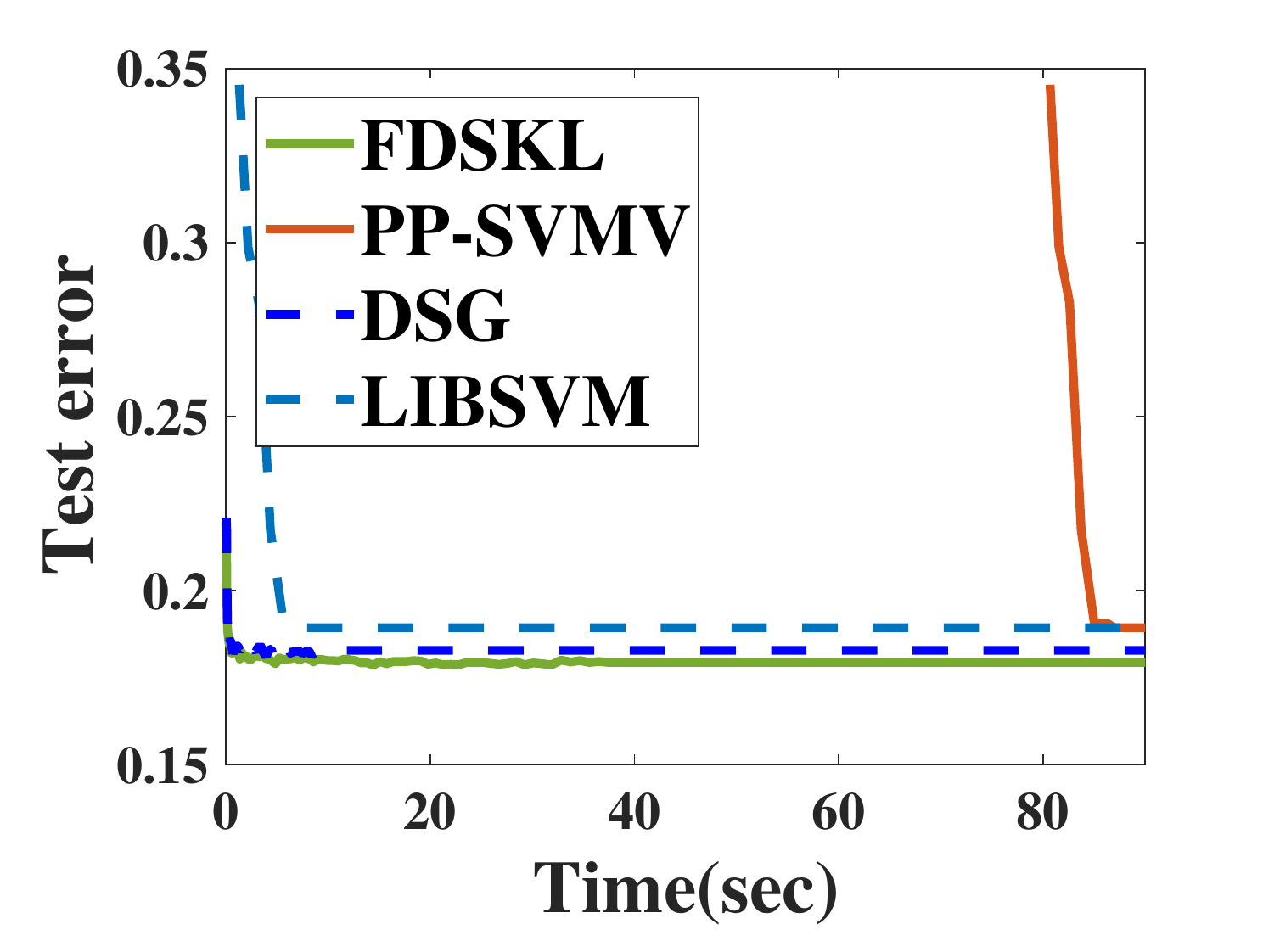}
		\label{deafultcredit}
		\caption{deafultcredit}
	\end{subfigure}
	\begin{subfigure}[b]{0.24\textwidth}
		\centering
		\includegraphics[width=1.5in]{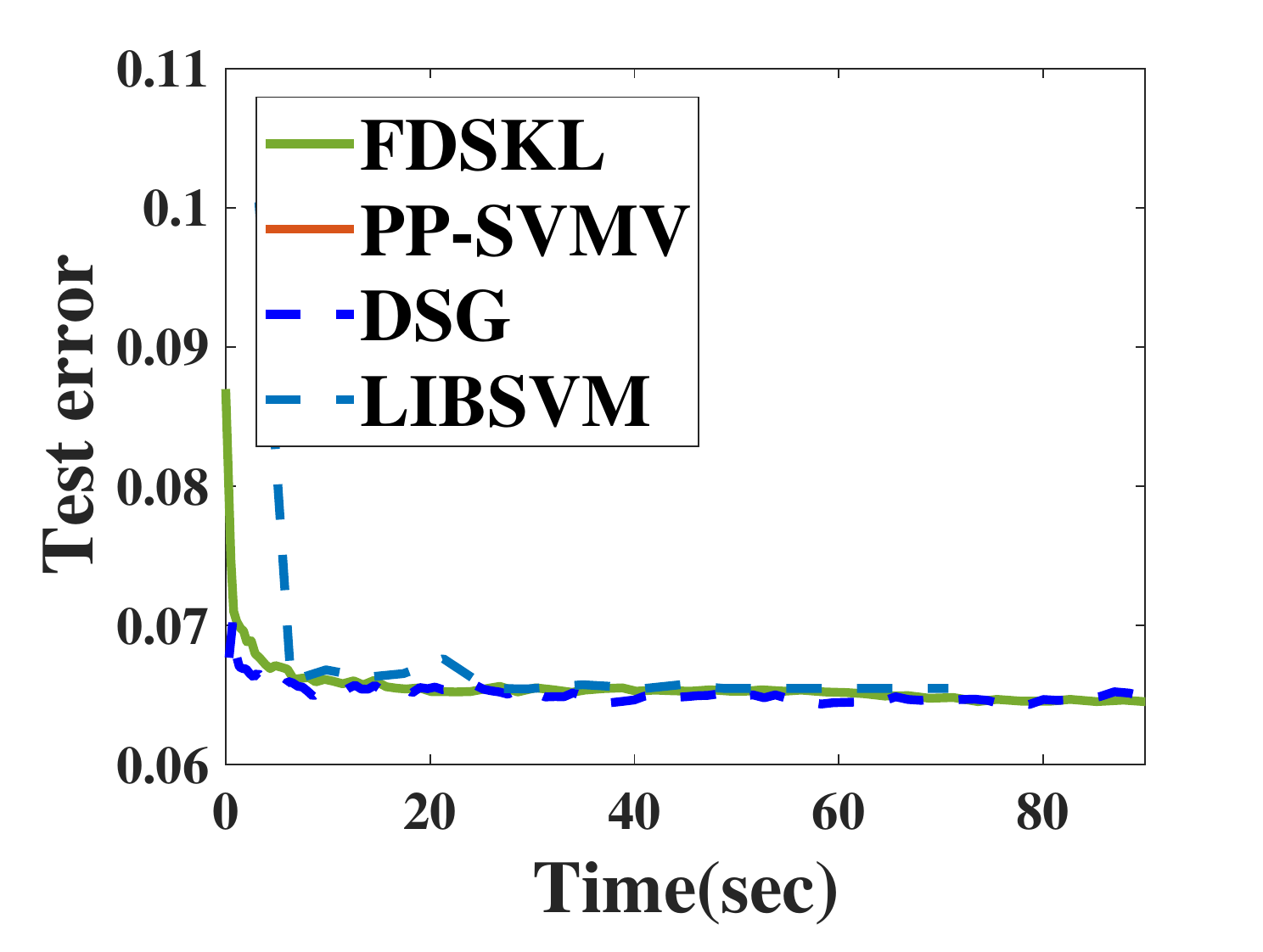}
		\label{givemecredit}
		\caption{givemecredit}
	\end{subfigure}
	\caption{The results of binary classification above the comparison methods.}
	\label{results_bc}
\end{figure*}

\begin{figure*}[!ht]
	\centering
	\begin{subfigure}[b]{0.24\textwidth}
		\centering
		\includegraphics[width=1.5in]{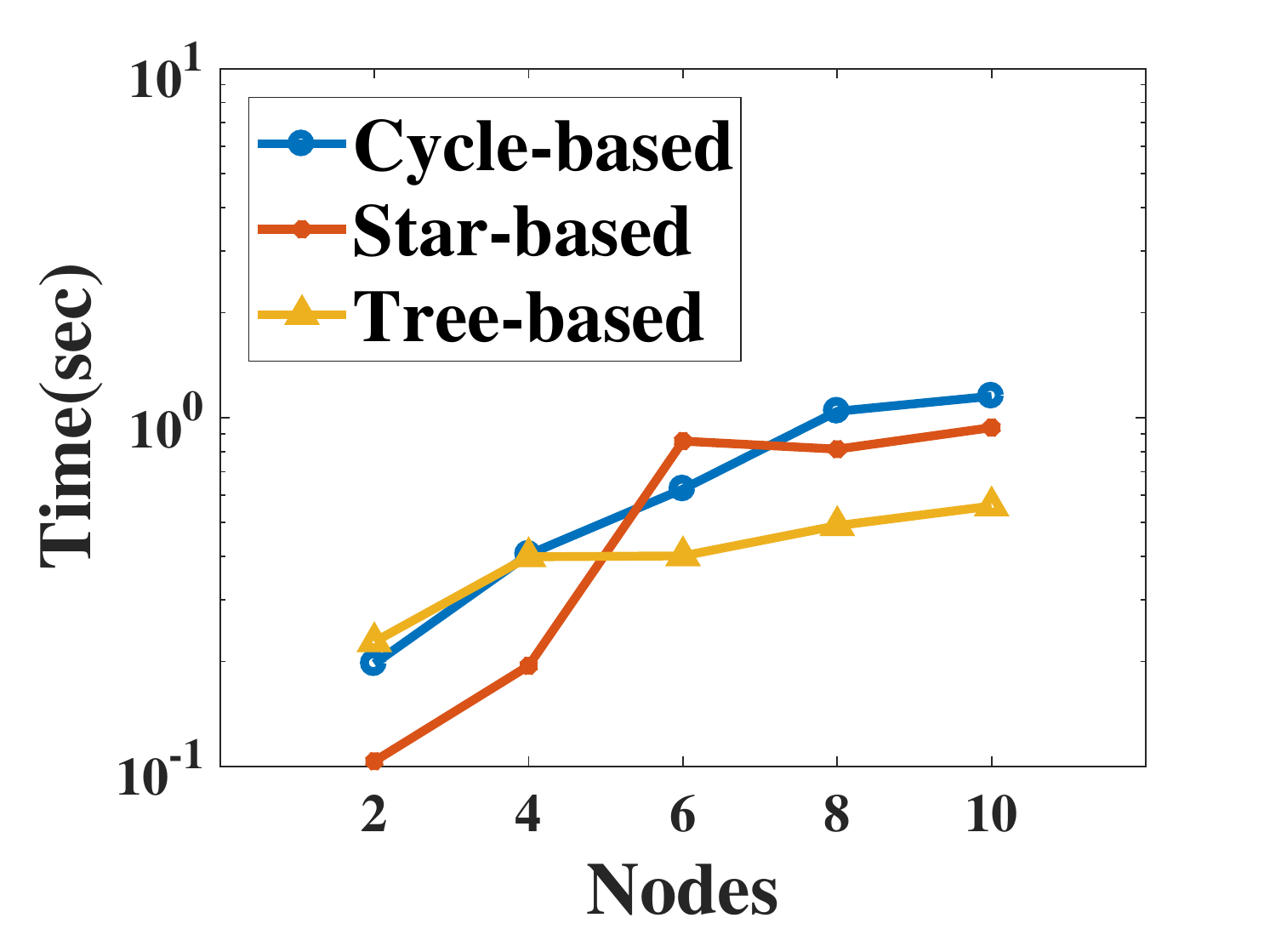}
		\label{com_gisette}
		\caption{gisette}
	\end{subfigure}
	\begin{subfigure}[b]{0.24\textwidth}
		\centering
		\includegraphics[width=1.5in]{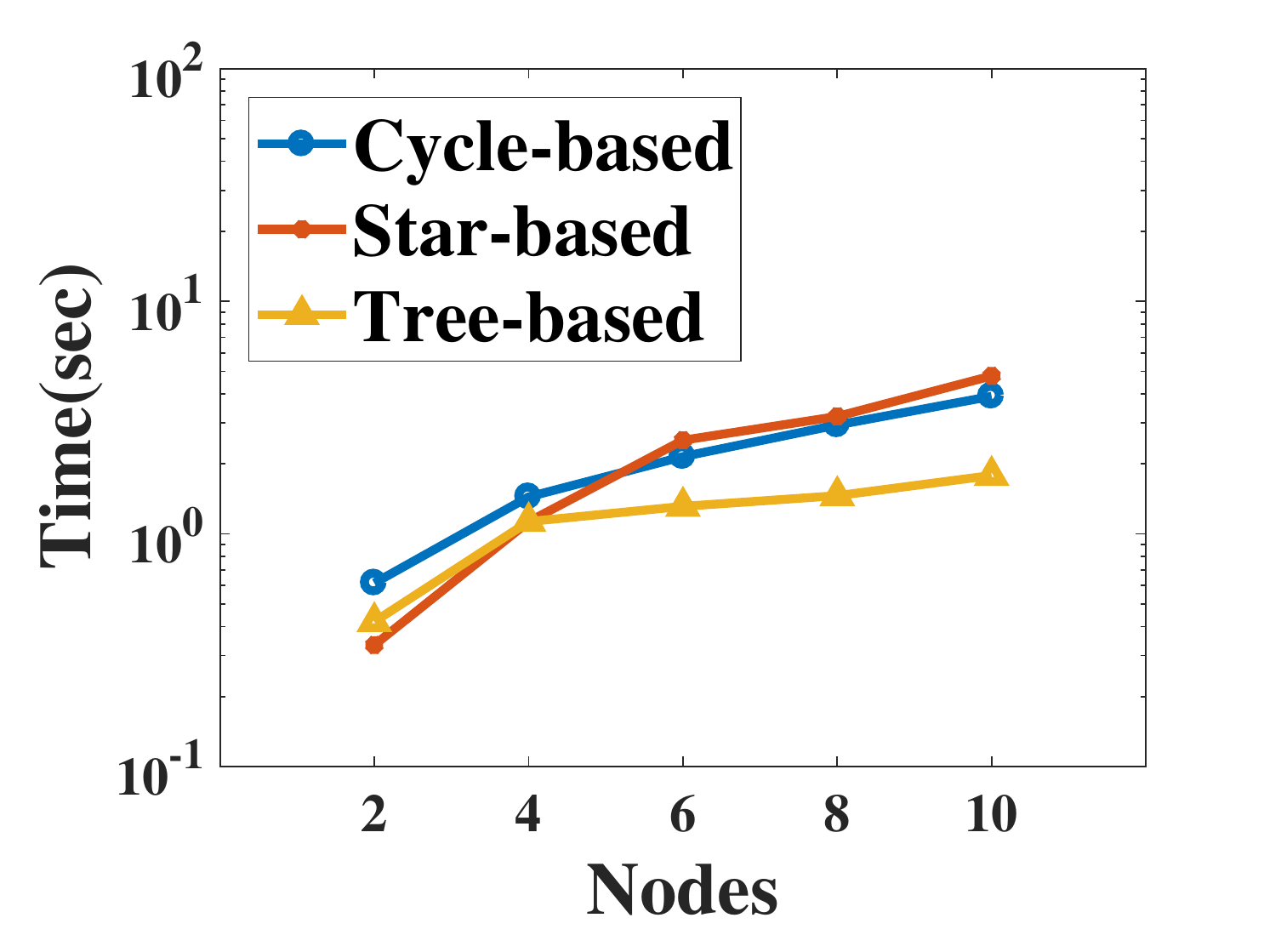}
		\label{com_phishing}
		\caption{phishing}
	\end{subfigure}
	\begin{subfigure}[b]{0.24\textwidth}
		\centering
		\includegraphics[width=1.5in]{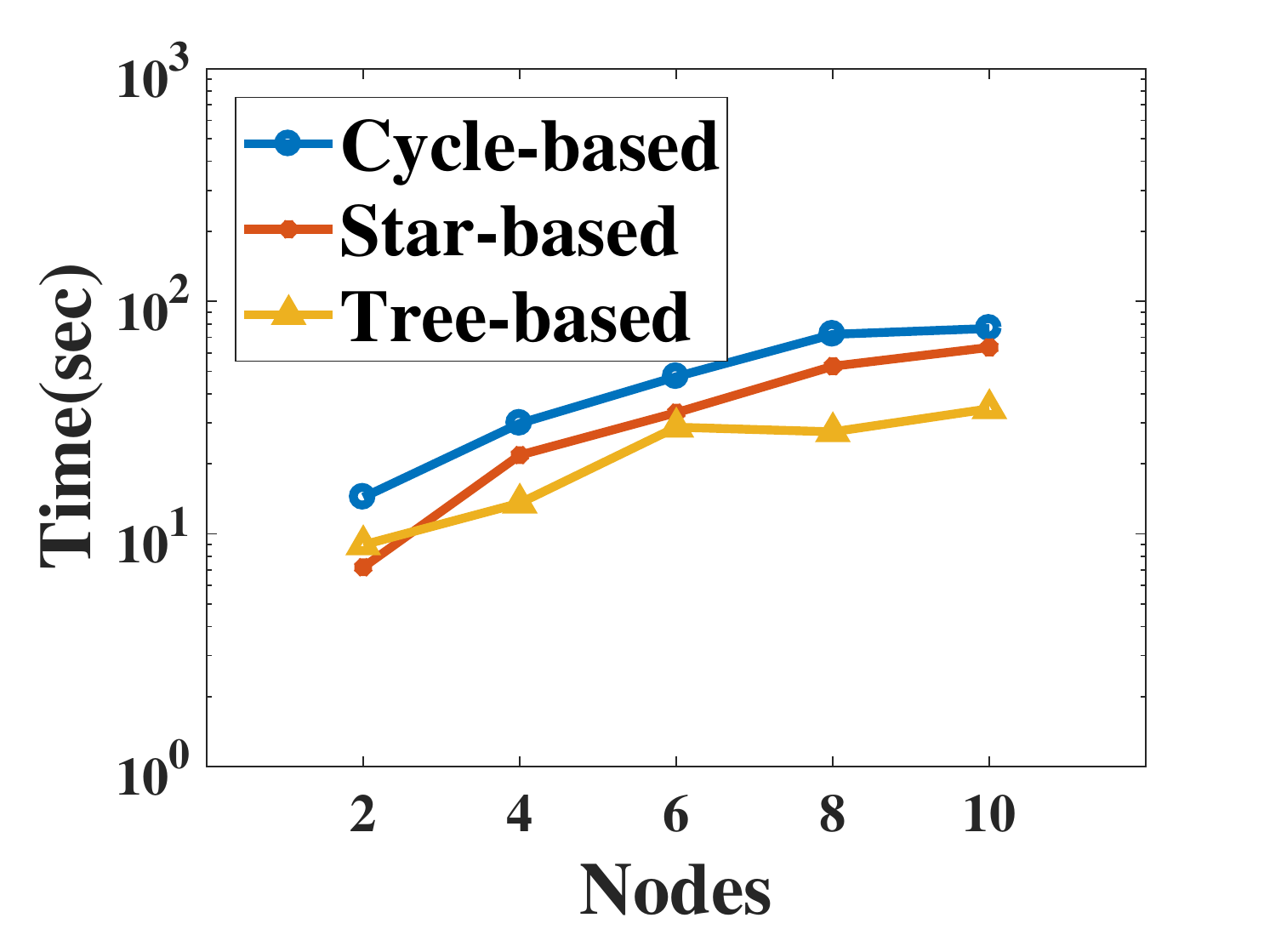}
		\label{com_a9a}
		\caption{a9a}
	\end{subfigure}
	\begin{subfigure}[b]{0.24\textwidth}
		\centering
		\includegraphics[width=1.5in]{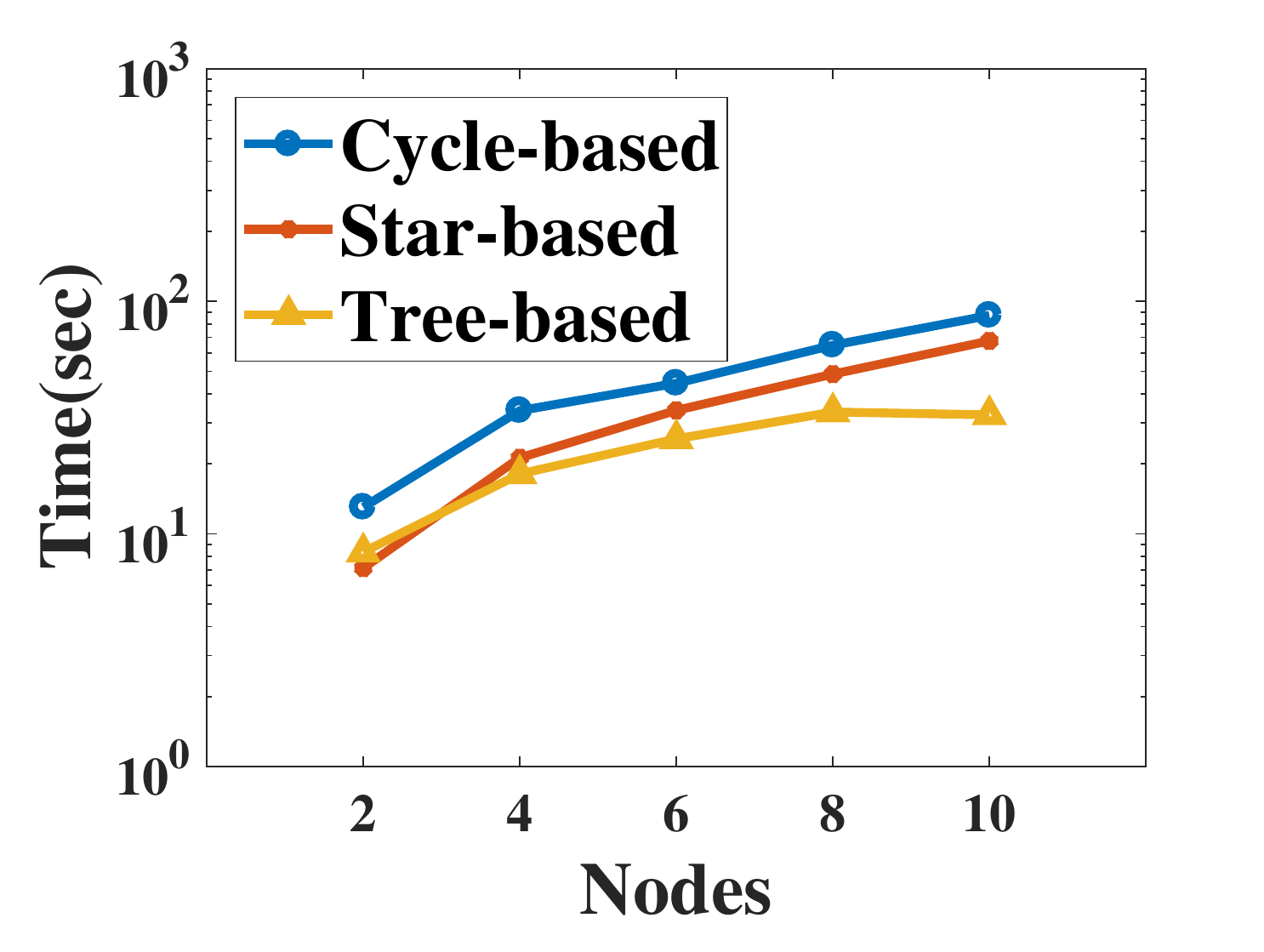}
		\label{com_ijcnn1}
		\caption{ijcnn1}
	\end{subfigure}
	\caption{The elapsed time of different structures on four datasets.}
	\label{results_com}
\end{figure*}

\begin{figure*}[!ht]
	\centering
	\begin{subfigure}[b]{0.24\textwidth}
		\centering
		\includegraphics[width=1.5in]{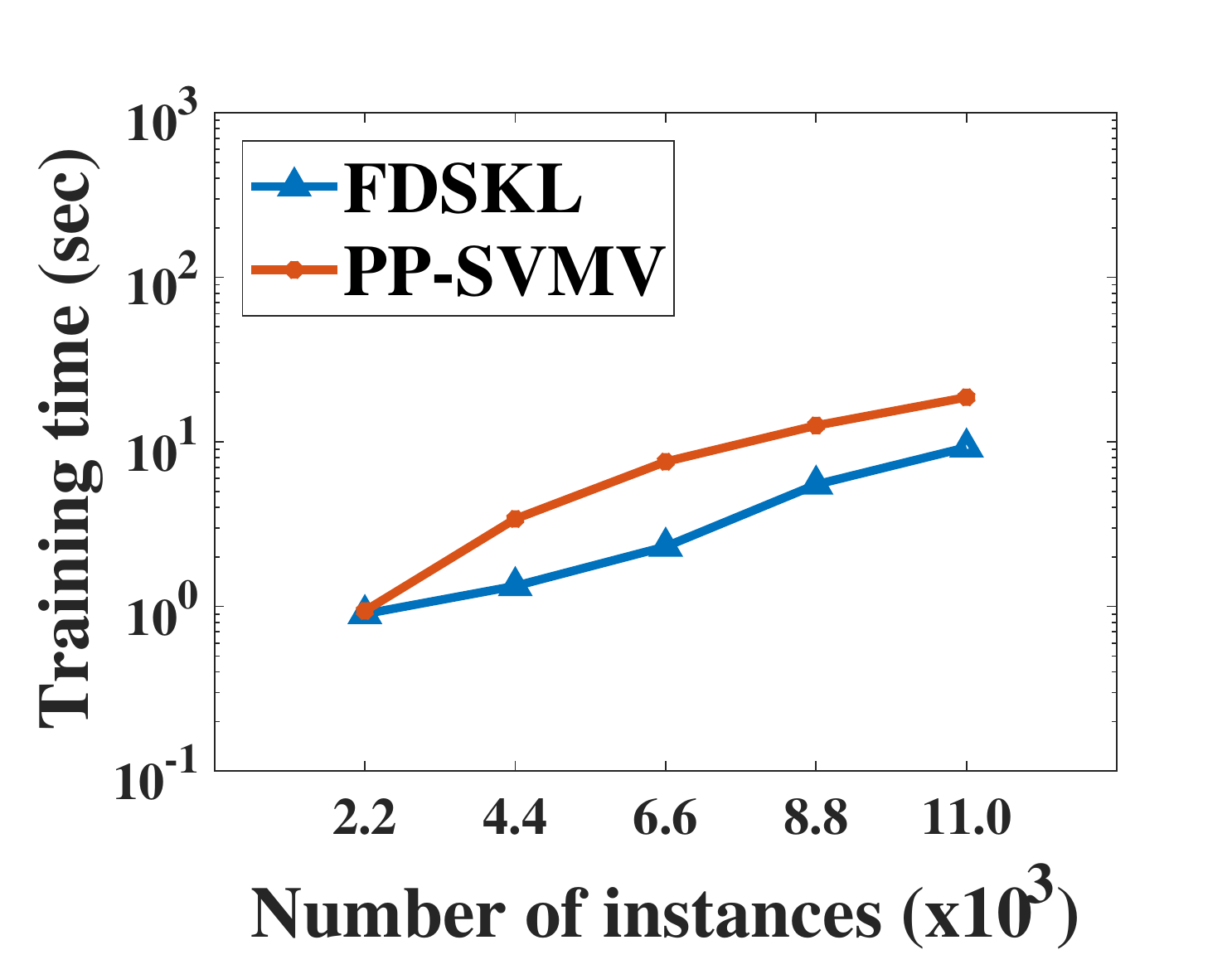}
		\caption{phishing}
		\label{inc_phishing}
	\end{subfigure}
	\begin{subfigure}[b]{0.24\textwidth}
		\centering
		\includegraphics[width=1.5in]{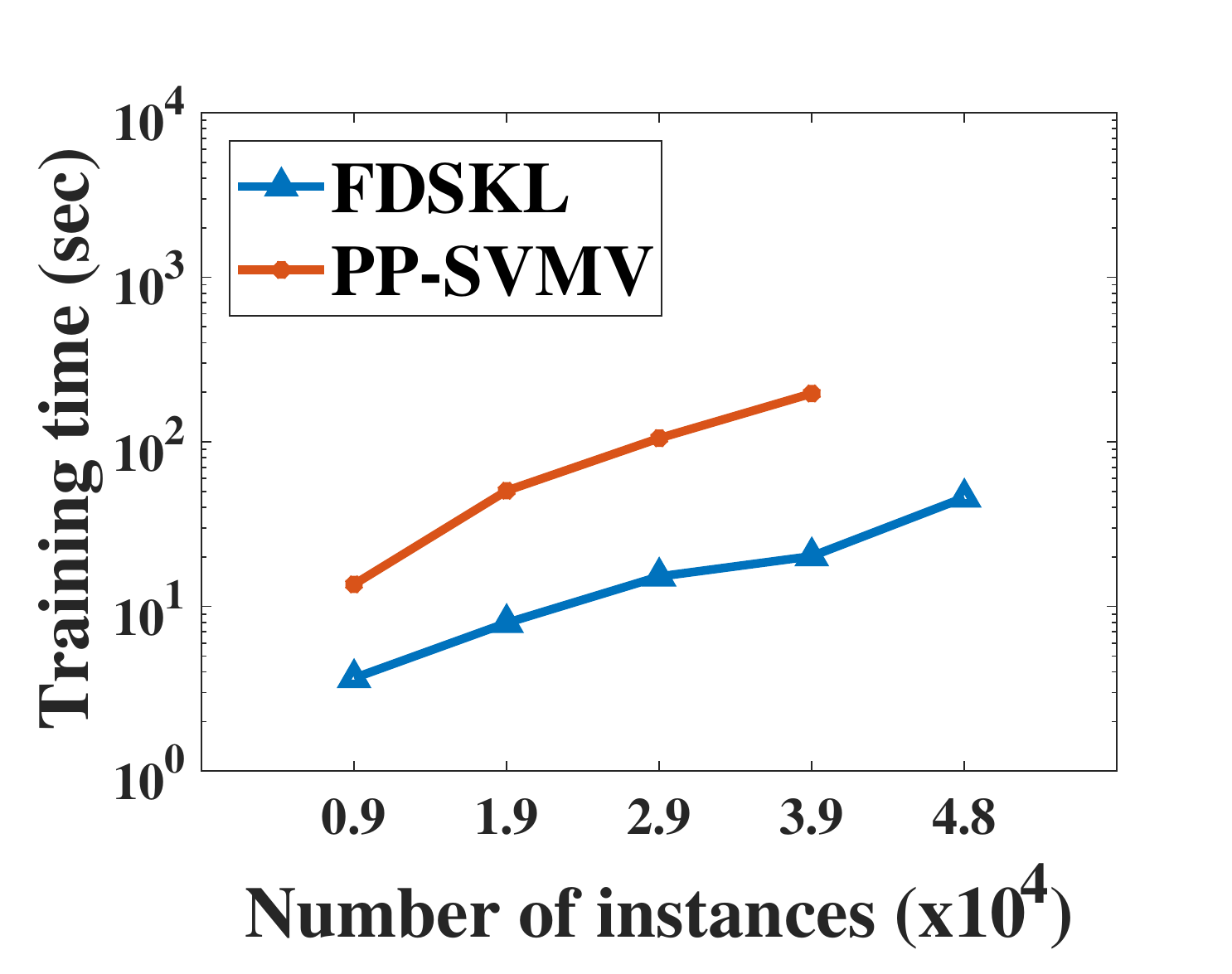}
		\caption{a9a}
		\label{inc_a9a}
	\end{subfigure}
	\begin{subfigure}[b]{0.24\textwidth}
		\centering
		\includegraphics[width=1.5in]{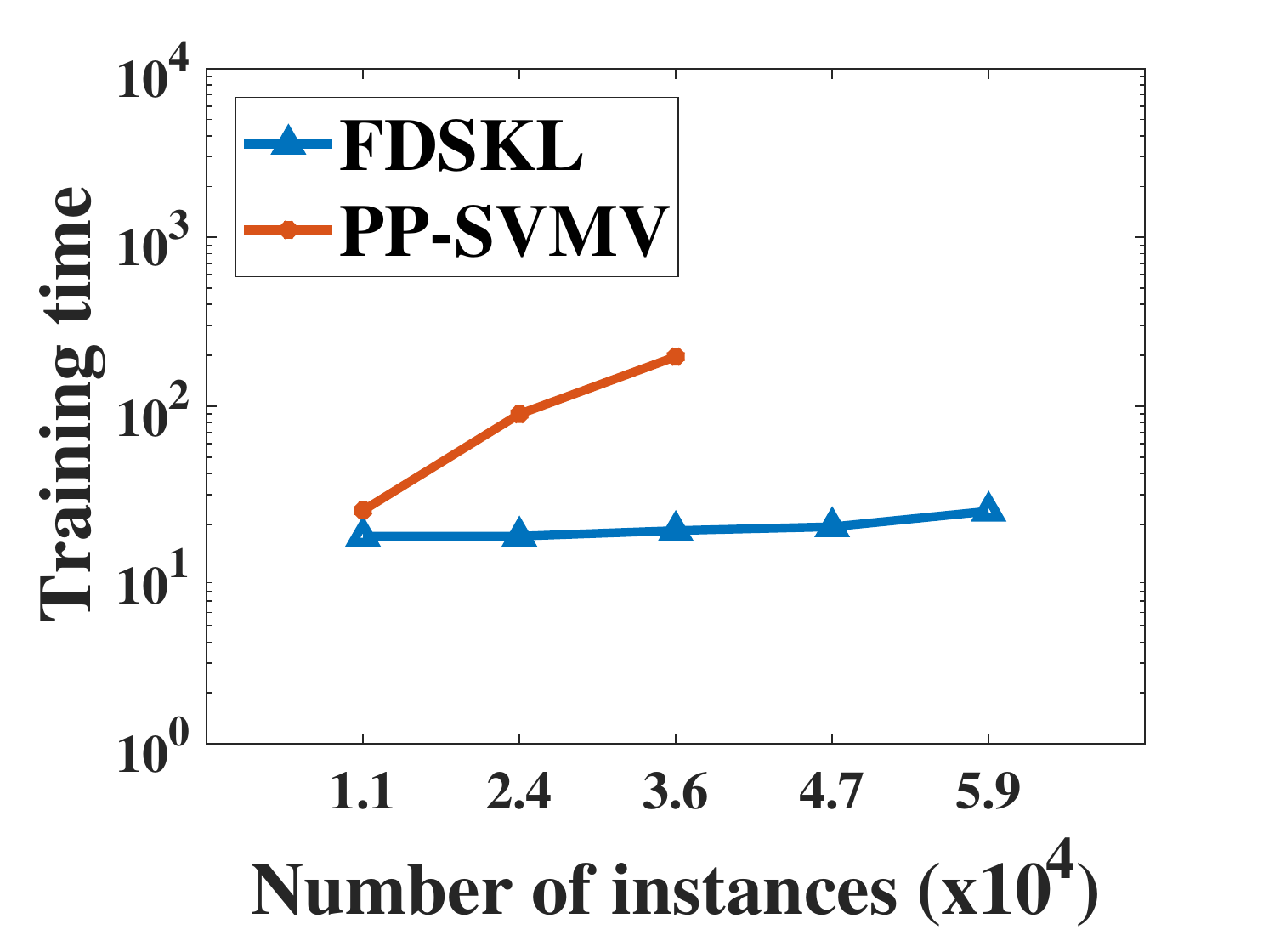}
		\caption{cod-rna}
		\label{inc_cod}
	\end{subfigure}
	\begin{subfigure}[b]{0.24\textwidth}
		\centering
		\includegraphics[width=1.5in]{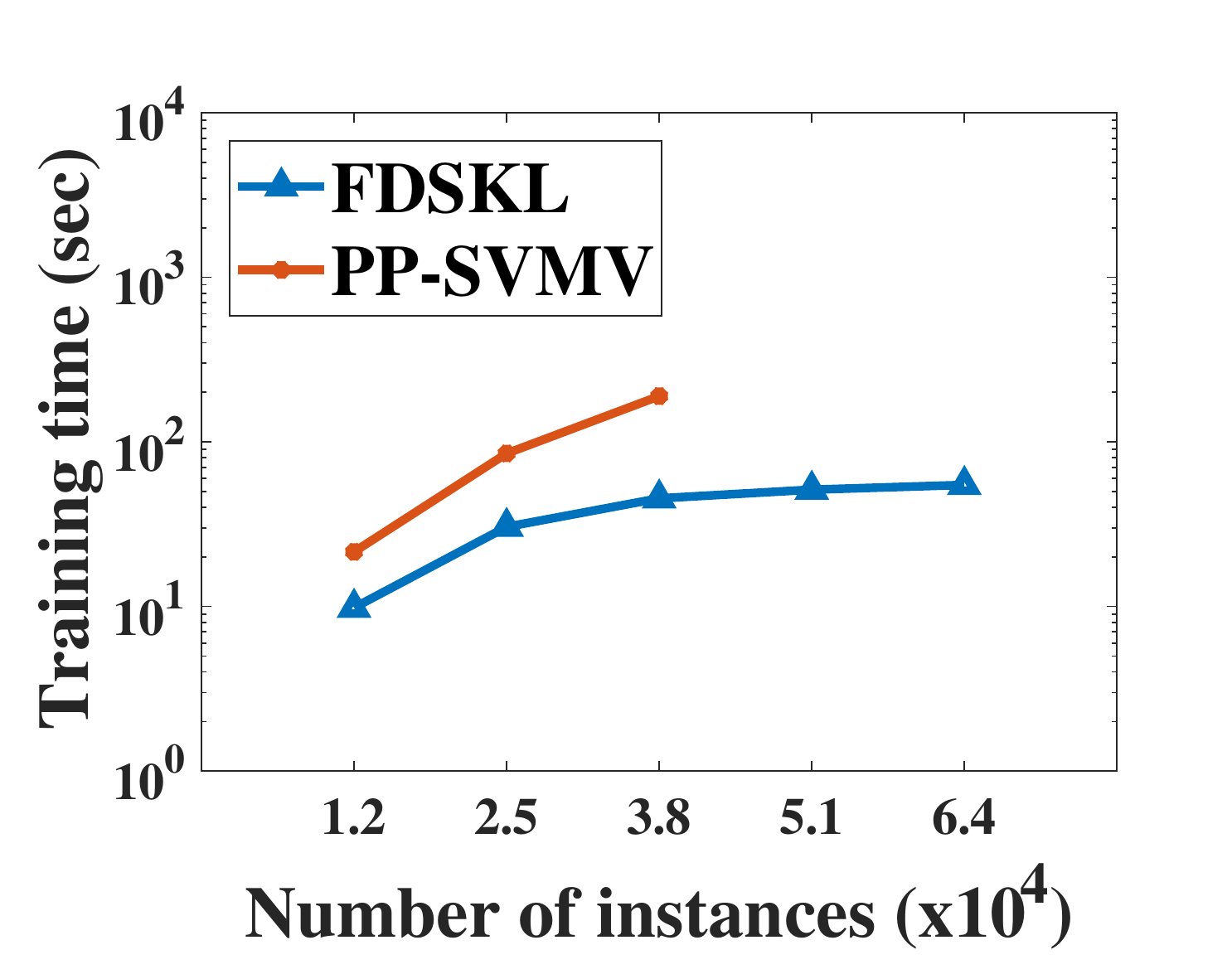}
		\caption{w8a}
		\label{inc_w8a}
	\end{subfigure}
	\caption{The change of training time when increasing the number of training instances.}
	\label{results_inc}
\end{figure*}


\subsection{Results and Discussions}
We provide the test errors \textit{v.s.} training time plot on four state-of-the-art kernel methods in Figure \ref{results_bc}. It is evident that our algorithm always achieves fastest convergence rate compared to other state-of-art kernel methods.
In Figure \ref{results_inc}, we demonstrate the training time v.s. different training sizes of FDSKL and PP-SVMV.
Again, the absence of the results in Figures \ref{inc_a9a}, \ref{inc_cod} and \ref{inc_w8a} for PP-SVMV is because of out of memory. It is obvious that our method has much better scalability than PP-SVMV. The reason for this edge of scalability is comprehensive, mostly because FDSKL have adopted the random feature approach, which is efficient and easily parallelizable. Besides, we could also demonstrate that the communication structure used in PP-SVMV is not optimal, which means more time spent in sending and receiving the partitioned kernel matrix.

As mentioned in previous section, FDSKL used a tree-structured communication scheme to distribute and aggregate computation. To verify such a systematic design, we  compare the efficiency of 3 commonly used communication structures, \textit{i.e.}, cycle-based, tree-based and star-based communication structures. The goal of the comparison task is to compute the kernel matrix (linear kernel) of the training set of four datasets. Specifically, each node maintains a feature subset of the training set, and is asked to compute the kernel matrix using the feature subset only. The computed local kernel matrices on each node are then summed by using one of the three communication structures. Our experiment compares the efficiency (elapsed communication time) of obtaining the final kernel matrix, and the results are given in Figure \ref{results_com}.
From  Figure \ref{results_com},  we could make a statement that with the increase of nodes, our tree-based communication structure has the lowest communication cost.
This explains the poor efficiency of PP-SVMV, which used a cycle-based communication structure, as given in Figure \ref{results_inc}.

\begin{figure*}[!ht]
	\centering
	\hspace*{-0.25cm}
	\includegraphics[height=2in]{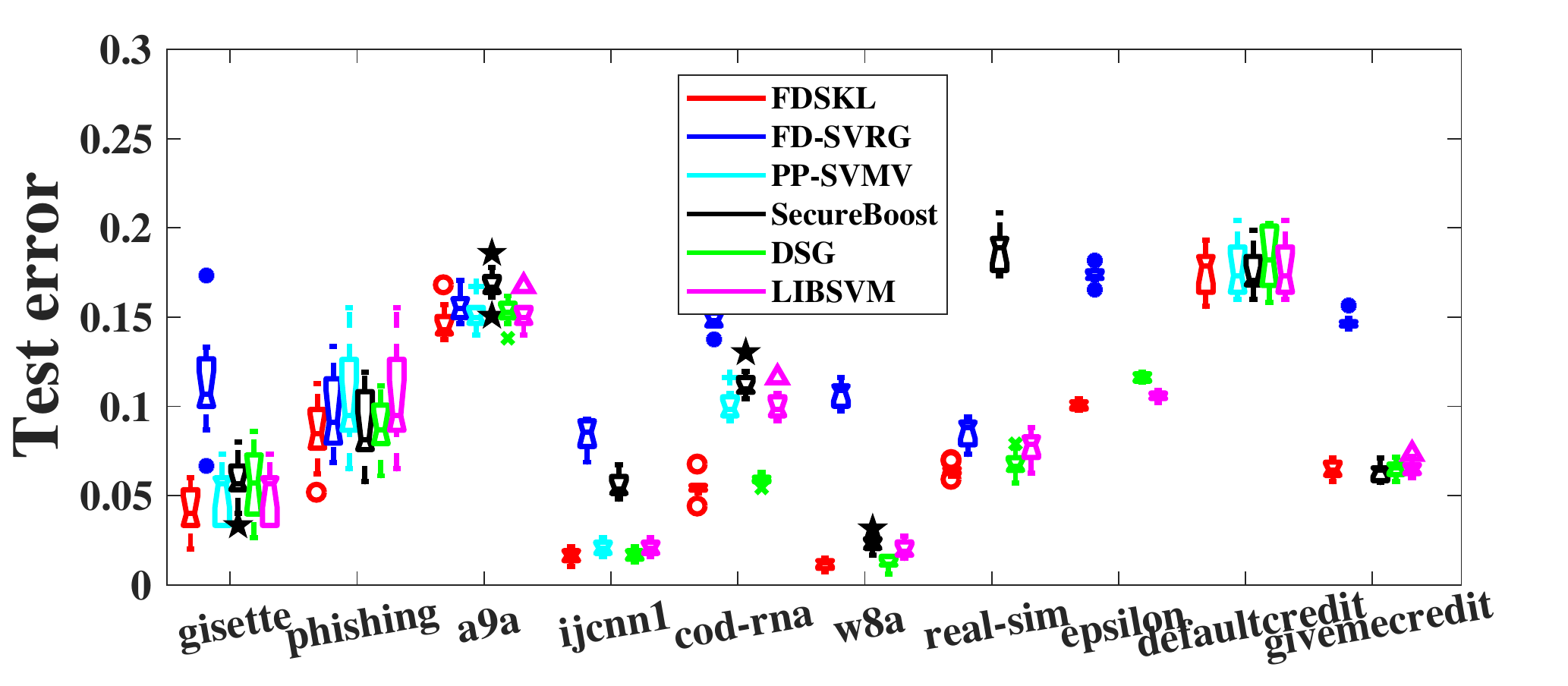}
	\caption{The boxplot of test errors of three state-of-the-art kernel methods, tree-boosting method (SecureBoost),  linear method (FD-SVRG) and our FDSKL.}
	\label{results_num}
\end{figure*}
	\begin{table}[htbp]
	\centering
	\caption{The comparison  of total training time  between our  FDSKL  and SecureBoost algorithm.}
	\setlength{\tabcolsep}{3.5mm}
	\begin{tabular}{c|c|c}
		\hline
		\textbf{Datasets} &  \textbf{FDSKL  (min)} &  \textbf{SecureBoost (min)}  \\
		\hline
		gisette & 1 & 46\\
		phishing  & 1 & 10\\
		a9a & 2 & 15\\
		ijcnn1 & 1 & 17\\
		cod-rna  & 1 & 16\\
		w8a  & 1 & 17\\
		real-sim  & 14 & 65\\
		epsilon  & 29 & $>$900\\
	    \hline
	    defaultcredit & 1 & 13\\
	    givemecredit & 3 & 32\\
		\hline
	\end{tabular}
	\label{datasets_time_comparsion}
\end{table}
Since the official implementation of SecureBoost algorithm did not provide the elapsed time of each iteration, we present the total training time between our FDSKL and SecureBoost in Table \ref{datasets_time_comparsion}. From these results, we could make a statement that SecureBoost algorithm is not suitable for the high dimension or large scale datasets which limits its generalization.

Finally, we present the  test errors of three state-of-the-art
kernel methods, tree-boosting method (SecureBoost), linear method (FD-SVRG) and our FDSKL in Figure \ref{results_num}. All results are averaged over 10 different train-test split trials \citep{gu2015new}.  From the results, we find that our FDSKL always has the lowest test error and variance.  In addition, tree-boosting method SecureBoost performs poor in high-dimensional datasets such as \textit{real-sim}. And the linear method normally has worse results than other kernel methods.

\section{Conclusion} \label{section_concluding}
Privacy-preservation federated learning for vertically partitioned data is urgent currently in data mining and machine learning.
In this paper, we proposed a federated doubly stochastic kernel learning (\emph{i.e.}, FDSKL) algorithm for vertically partitioned data, which breaks the limitation of implicitly linear
separability used in the existing privacy-preservation
federated learning algorithms.
We proved that FDSKL  has a sublinear convergence rate, and can guarantee
data security under the semi-honest assumption. To the best of our knowledge, FDSKL is the first efficient and  scalable privacy-preservation federated kernel method. Extensive experimental results show that FDSKL is more efficient than the existing state-of-the-art kernel methods for high dimensional data while  retaining the similar generalization performance.

\nocite{langley00}


\section*{Appendix A: Proof of Theorem  \ref{thm1}}  \label{firstthm0.5}
We first prove that the output of Algorithm \ref{protocol1} is actually  $\omega_i^T x+b$ in Lemma \ref{lemma1}   which is the basis of  FDSKL.

\setcounter{theorem}{0}

\begin{lemma}\label{lemma1}
	The output  of Algorithm \ref{protocol1} (\emph{i.e.}, $ \sum_{\hat{\ell}=1}^q  \left ( (\omega_i)_{\mathcal{G}_{\hat{\ell}}}^T (x)_{\mathcal{G}_{\hat{\ell}}}  +b^{\hat{\ell}} \right ) - \overline{b}^{\ell'}$  is equal to $\omega_i^T x+b$,  where each $b^{\hat{\ell}}$ and $b$ are drawn from a uniform distribution on $[0,2\pi]$, $\overline{b}^{\ell'}=\sum_{{\hat{\ell}} \neq \ell'} b^{{\hat{\ell}}}$, and $\ell' \in \{ 1,\ldots,q\} -\{\ell \} $.
\end{lemma}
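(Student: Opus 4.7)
The plan is to unwind the computation in Algorithm \ref{protocol1} step by step and show that the residue after removing $\overline{b}^{\ell'}$ leaves exactly one random offset $b^{\ell'}$, which is uniformly distributed on $[0,2\pi]$ and plays the role of $b$.

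First I would use the vertical partition of the input: since $x=[x_{\mathcal{G}_1},\ldots,x_{\mathcal{G}_q}]$ and each worker generates $\omega_i$ from the same seed so that $\omega_i$ is partitioned accordingly, the inner product decomposes as
\begin{equation}
\omega_i^T x \;=\; \sum_{\hat{\ell}=1}^{q}(\omega_i)_{\mathcal{G}_{\hat{\ell}}}^T (x)_{\mathcal{G}_{\hat{\ell}}}.
\end{equation}
This identifies the ``signal'' part of the quantity $\xi$ computed in Line \ref{step5} of Algorithm \ref{protocol1}, so that $\xi=\omega_i^T x+\sum_{\hat{\ell}=1}^q b^{\hat{\ell}}$.

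Next I would subtract $\overline{b}^{\ell'}=\sum_{\hat{\ell}\neq \ell'} b^{\hat{\ell}}$, as computed in Line \ref{step7}, to cancel all but one of the noise terms:
\begin{equation}
\xi-\overline{b}^{\ell'} \;=\; \omega_i^T x + \Big(\sum_{\hat{\ell}=1}^{q} b^{\hat{\ell}} - \sum_{\hat{\ell}\neq \ell'} b^{\hat{\ell}}\Big) \;=\; \omega_i^T x + b^{\ell'}.
\end{equation}
Thus the output has exactly the form $\omega_i^T x+b^{\ell'}$, with a single residual offset.

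The only remaining point to justify is that $b^{\ell'}$ is a legitimate draw from the uniform distribution on $[0,2\pi]$ so it can be identified with $b$. This follows because each $b^{\hat{\ell}}$ is independently sampled uniformly on $[0,2\pi]$ in Line \ref{step2}, and the index $\ell'$ is chosen from $\{1,\ldots,q\}-\{\ell\}$ in Line \ref{step6} independently of the $b^{\hat{\ell}}$'s; a uniformly chosen index among independent uniform variables yields another uniform variable on $[0,2\pi]$. I do not anticipate a real obstacle here, but the subtle point worth spelling out carefully is that the constraint $\ell'\neq \ell$ does not bias the marginal distribution of $b^{\ell'}$, since the exclusion depends only on indices and not on the sampled values.
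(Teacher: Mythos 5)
Your proposal is correct and follows essentially the same route as the paper's own proof: decompose $\omega_i^T x$ across the vertical partition, observe that $\sum_{\hat{\ell}=1}^q b^{\hat{\ell}} - \overline{b}^{\ell'} = b^{\ell'}$, and note that the surviving offset is uniform on $[0,2\pi]$. Your added remark that the choice of the index $\ell'$ is independent of the sampled values (so the exclusion $\ell'\neq\ell$ does not bias the marginal of $b^{\ell'}$) is a small point of extra care that the paper's terser argument leaves implicit.
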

\begin{proof} The proof has two parts: 1) Because each worker knows the same  probability measure $\mathbb{P}(\omega)$, and  has a same  random  seed $i$, each worker can  generate the same random feature $\omega$ according to the same random  seed $i$.  Thus, we can locally compute  $(\omega_i)_{\mathcal{G}_\ell}^T (x)_{\mathcal{G}_\ell}$, and get $\sum_{\ell=1}^q   (\omega_i)_{\mathcal{G}_\ell}^T (x)_{\mathcal{G}_\ell} =\omega_i^T x$. 2)  Because each  $b^{\hat{\ell}}$ is drawn from a uniform distribution on $[0,2\pi]$, we can have that $\sum_{\hat{\ell}=1}^q b^{\hat{\ell}} - \overline{b}^{\ell'}=\sum_{\ell=1}^q b^{\hat{\ell}} - \sum_{\hat{\ell} \neq \ell'} b^{{\hat{\ell}}}$ is also drawn from a uniform distribution on $[0,2\pi]$. This completes the proof.
\end{proof}

Based on Lemma \ref{lemma1}, we can conclude that the federated learning algorithm (\emph{i.e.}, FDSKL)  can produce  the doubly stochastic gradients exactly  same to the ones of DSG algorithm with constant learning rate.  Thus, under Assumption 1, we  can prove that FDSKL converges to the optimal solution almost at a rate of $\mathcal{O}(1/t)$ as shown in Theorem \ref{thm1}.

Now we will show that Algorithm \ref{algorithm3} with constant stepsize is convergent in terms of $||f-f_*||^2_{\cal H}$ with a rate of near $O(1/t)$, where $f^*$ is the minimizer of problem. And then we decompose the error according to its sources, which include the error from random features $|f_t(x)-h_t(x)|^2$, and the error from data randomness $||h_t(x)-f_*(x)||^2_{\cal H}$, that is:

\begin{equation}\label{equ1}
|f_{t}(x) - f_{*}(x)|^2 \le 2|f_{t}(x) - h_{t}(x)|^2 + 2\kappa||h_{t} - f_{*}||_{\cal H}^2
\end{equation}

Before we show the main Theorem \ref{theorem1}, we first give several following lemmas. Note that, different from the proof of \citep{dai2014scalable}, we obtain the Lemma \ref{lemma7'} by the summation formula of geometric progression and the Lemma \ref{lemma9'} inspired by the proof of the recursive formula in \citep{recht2011hogwild}. Firstly, we give the convergence analysis of error due to random feature. Let $\mathcal{D}_t$ denote the subset of $\mathcal{S}$ which have been picked up at the $t$-th iteration of FDSKL.

\setcounter{theorem}{2}
\begin{lemma}
	\label{lemma7'}
	For any $x \in \cal X$ and $\frac{1}{\lambda}>\gamma > 0$,
	\begin{equation}
	\mathbb{E}_{{\cal D}_t,  \omega_t}\Big[\Big|f_{t+1}(x) - h_{t+1}(x)\Big|^2 \Big] \le C^2
	\end{equation}
	where $ C^2 := M^2(\sqrt{\kappa}+\sqrt{\phi})^2\gamma/\lambda$.
\end{lemma}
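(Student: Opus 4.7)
The plan is to iterate a one-step recursion for the pointwise gap $e_t(x) := f_t(x) - h_t(x)$, exploiting a martingale-difference structure that the random-feature draw provides conditional on the data sample. I shall take $h_t$ to be the standard coupled ideal iterate used in doubly stochastic kernel analyses, namely $h_{t+1} = (1-\gamma\lambda)h_t - \gamma L'(f_t(x_t),y_t)K(x_t,\cdot)$ with $h_1 = f_1 = 0$, so that $\omega_t$ is the only additional source of randomness distinguishing $f_{t+1}$ from $h_{t+1}$.

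First, subtracting the two updates yields the affine recursion
\begin{equation*}
e_{t+1}(x) = (1-\gamma\lambda)\,e_t(x) - \gamma\,\Delta_t(x), \qquad \Delta_t(x) := L'(f_t(x_t),y_t)\bigl[\phi_{\omega_t}(x_t)\phi_{\omega_t}(x) - K(x_t,x)\bigr].
\end{equation*}
By Theorem \ref{thmBochner}, $\mathbb{E}_{\omega_t}\bigl[\phi_{\omega_t}(x_t)\phi_{\omega_t}(x)\bigr] = K(x_t,x)$, hence $\mathbb{E}\bigl[\Delta_t(x) \mid \mathcal{F}_{t-1},(x_t,y_t)\bigr] = 0$, where $\mathcal{F}_{t-1}$ is the filtration generated by the history up to iteration $t-1$. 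Unrolling from $e_1\equiv 0$ gives the explicit martingale representation $e_{t+1}(x) = -\gamma\sum_{i=1}^{t}(1-\gamma\lambda)^{t-i}\Delta_i(x)$.

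Second, squaring and taking full expectation kills the cross terms by orthogonality of martingale increments, so
\begin{equation*}
\mathbb{E}|e_{t+1}(x)|^2 = \gamma^2\sum_{i=1}^{t}(1-\gamma\lambda)^{2(t-i)}\,\mathbb{E}|\Delta_i(x)|^2.
\end{equation*}
Using Assumption \ref{ass1} I bound the per-step variance by $M^2(\sqrt{\phi}+\sqrt{\kappa})^2$, applying the triangle inequality after factorizing each pointwise bound into its square-root form: $|\phi_{\omega_i}(x_i)\phi_{\omega_i}(x)|\le\sqrt{\phi}\cdot\sqrt{\phi}$ is split as the norm of a rank-one term controlled by $\sqrt{\phi}$, and $|K(x_i,x)|\le\sqrt{K(x_i,x_i)K(x,x)}\le\sqrt{\kappa}\cdot\sqrt{\kappa}$ is controlled by $\sqrt{\kappa}$ in the same way via the Cauchy--Schwarz inequality for the reproducing kernel. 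Summing the geometric series and using $\gamma<1/\lambda$,
\begin{equation*}
\sum_{i=1}^{t}(1-\gamma\lambda)^{2(t-i)} \;\le\; \frac{1}{1-(1-\gamma\lambda)^2} \;\le\; \frac{1}{\gamma\lambda},
\end{equation*}
and substituting gives the claimed $\mathbb{E}|e_{t+1}(x)|^2 \le M^2(\sqrt{\kappa}+\sqrt{\phi})^2\,\gamma/\lambda = C^2$.

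The principal obstacle is the variance bound $\mathbb{E}|\Delta_i(x)|^2 \le M^2(\sqrt{\phi}+\sqrt{\kappa})^2$: a naive use of the assumed pointwise bounds $|\phi_\omega(x)\phi_\omega(x')|\le\phi$ and $|K(x,x')|\le\kappa$ together with $|a-b|\le|a|+|b|$ only yields the loose constant $(\phi+\kappa)^2$, and extracting the sharper sum-of-square-roots form requires carefully splitting each product into its square-root factors before applying the triangle inequality. A secondary technical point is justifying that the chosen coupling of $h_t$ to $f_t$ really does make $\{\Delta_i\}$ a martingale-difference sequence; once this is in place, both the geometric-series step and the martingale-orthogonality step are routine.
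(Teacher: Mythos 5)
Your proof is correct and takes essentially the same route as the paper's: both write $f_{t+1}(x)-h_{t+1}(x)$ as a weighted sum of the zero-mean increments $L'(f_i(x_i),y_i)\bigl[\phi_{\omega_i}(x_i)\phi_{\omega_i}(x)-K(x_i,x)\bigr]$ with weights $-\gamma(1-\gamma\lambda)^{t-i}$, drop the cross terms using the (conditional) independence of the $\omega_i$, bound each increment by $M(\sqrt{\kappa}+\sqrt{\phi})$, and control the geometric sum by $\gamma/\lambda$. The only cosmetic differences are that you unroll an explicit one-step recursion and bound $\sum_{i}(1-\gamma\lambda)^{2(t-i)}\le 1/(\gamma\lambda)$ directly, whereas the paper bounds $\sum_i|\alpha_i^t|^2\le\gamma\sum_i|\alpha_i^t|\le\gamma/\lambda$; if anything you are more explicit than the paper about the martingale-orthogonality step that kills the cross terms.
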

\begin{proof}
	We denote $W_i(x)=W_i(x;\mathcal{D}^i,\omega^i):=\alpha^t_i(\zeta_i(x)-\xi_i(x))$. According to the above assumptions, $W_i(x)$ have a bound:
	\begin{equation*}
	|W_i(x)|\le c_i=|\alpha^t_i|(|\zeta_i(x)|+|\xi_i(x)|)=M(\sqrt{\kappa}+\sqrt{\phi})|\alpha^t_i|
	\end{equation*}
	
	Obviously, $ |\alpha^t_t|\le\gamma $, and $ |\alpha^t_i|\le\gamma\prod^{t}_{j=i+1}(1-\gamma c)\le\gamma $ where $(1-\gamma \lambda)\le 1$. Considering the summation formula of geometric progression, $\sum_{i=1}^{t}|\alpha^t_i|\le\frac{1}{\lambda}$. Consequently, $ \sum_{i=1}^{t}|\alpha^t_i|^2\le \frac{\gamma}{\lambda} $.
	
	Then we have:
	$\mathbb{E}_{{\cal D}_t,  \omega_t}\Big[\Big|f_{t+1}(x) - h_{t+1}(x)\Big|^2 \Big]=\sum_{i=1}^{t}|W_i(x)|^2\le M^2(\sqrt{\kappa}+\sqrt{\phi})^2\sum_{i=1}^{t}|\alpha^t_i|^2\le M^2(\sqrt{\kappa}+\sqrt{\phi})^2\gamma/\lambda$.
	
	we obtain the above lemma.
\end{proof}

The next lemma gives the convergence analysis of error due to random data, whose derivation actually depends on the results of the previous lemmas.
\begin{lemma}
	\label{lemma9'}
	Set $\frac{1}{\lambda}>\gamma > 0$, with $\gamma=\frac{\epsilon\vartheta}{2B}$ for $\vartheta\in\left( \right.0,1\left.\right]$, we will reach
	$
	\mathbb{E} \Big[||h_{t} - f_{*}||_{\cal H}^2\Big] \le \epsilon
	$ after \begin{equation}\label{tsolu}
	t \geq \frac{2B\log (2e_1/\epsilon)}{\vartheta\epsilon \lambda}
	\end{equation} iterations,
	where $ B=\left[\sqrt{G_2^2+G_1}+G_2\right]^2$, $G_1=\frac{2\kappa M^2}{\lambda}$, $G_2=\frac{\kappa^{1/2}M(\sqrt{\kappa}+\sqrt{\phi})}{2\lambda^{3/2}}$ and $e_1=\mathbb{E}[\|h_1-f_*\|_{\mathcal{H}}^2]$.
\end{lemma}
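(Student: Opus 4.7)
The plan is to establish a one-step recursion for $e_t := \mathbb{E}[\|h_t - f_*\|_{\mathcal{H}}^2]$ of the schematic form
\begin{equation*}
e_{t+1} \;\le\; (1 - 2\gamma\lambda)\, e_t \;+\; \gamma^2 G_1' \;+\; 2\gamma^{3/2} G_2' \sqrt{e_t},
\end{equation*}
and then solve this recursion. The nonstandard feature of the bound is that it is quadratic in $\sqrt{e_t}$, which is why the advertised constant $B = \bigl[\sqrt{G_2^2 + G_1} + G_2\bigr]^2$ emerges as a squared root rather than a simple ratio like $G_1/\lambda$: it is the positive stationary solution of this quadratic.

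First, I would write the virtual update $h_{t+1} = (1-\gamma\lambda)h_t - \gamma L'(f_t(x_t),y_t) K(x_t,\cdot)$, subtract $f_*$, and expand $\|h_{t+1}-f_*\|_{\mathcal{H}}^2$. Splitting the stochastic gradient used in this update as $L'(h_t(x_t),y_t)K(x_t,\cdot) + \Delta_t$, where $\Delta_t = [L'(f_t(x_t),y_t)-L'(h_t(x_t),y_t)]K(x_t,\cdot)$ is the perturbation due to evaluating the loss derivative at $f_t$ rather than $h_t$, the cross term produces $-2\gamma\langle h_t - f_*,\,\nabla \mathcal{R}(h_t)\rangle_{\mathcal{H}}$ plus a $\Delta_t$ contribution. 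Using $\nabla \mathcal{R}(f_*) = 0$ and strong convexity of $\mathcal{R}$ with modulus $\lambda$, the first piece is $\le -2\gamma\lambda e_t$, giving the contraction factor $(1-2\gamma\lambda)$.

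Second, I would bound the two remaining pieces. The pure variance term $\gamma^2\mathbb{E}\|L'(h_t(x_t),y_t)K(x_t,\cdot) + \lambda h_t\|_{\mathcal{H}}^2$ is controlled using $|L'|\le M$ and $K\le \kappa$, producing the constant from which $G_1 = 2\kappa M^2/\lambda$ is read off (the $1/\lambda$ being absorbed in the same way as $\sum_i|\alpha_i^t|^2\le\gamma/\lambda$ in Lemma \ref{lemma7'}). The perturbation term is handled via Lipschitz continuity of $L'$ (Assumption \ref{ass1}), giving $\|\Delta_t\|_{\mathcal{H}}\le \mathcal{L}\sqrt{\kappa}\,|f_t(x_t) - h_t(x_t)|$, whose squared expectation is bounded by Lemma \ref{lemma7'} by $C^2 = M^2(\sqrt{\kappa}+\sqrt{\phi})^2\gamma/\lambda$. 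Cauchy--Schwarz on the cross term $\gamma\mathbb{E}|\langle h_t - f_*,\Delta_t\rangle|\le\gamma\sqrt{e_t}\cdot\sqrt{\mathbb{E}\|\Delta_t\|^2}$ then yields the $2\gamma^{3/2}G_2'\sqrt{e_t}$ contribution, and tracking constants gives $G_2 = \kappa^{1/2}M(\sqrt{\kappa}+\sqrt{\phi})/(2\lambda^{3/2})$.

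Finally, I would solve the recursion by an induction in the style of the Hogwild! analysis (Recht et al.\ 2011). The fixed point $e_\infty$ of the recursion satisfies $2\lambda e_\infty \le 2G_2\sqrt{\gamma e_\infty} + \gamma G_1$; viewing this as a quadratic in $\sqrt{e_\infty/\gamma}$ yields $e_\infty \le B\gamma/\lambda$ with $B=\bigl[\sqrt{G_2^2+G_1}+G_2\bigr]^2$, and standard induction propagates the bound $e_t \le (1-\gamma\lambda)^{t-1} e_1 + B\gamma/\lambda$. Plugging $\gamma = \epsilon\vartheta/(2B)$ makes the second term $\le \epsilon/2$; forcing $(1-\gamma\lambda)^{t-1}e_1\le \epsilon/2$ using $(1-\gamma\lambda)^{t-1}\le e^{-\gamma\lambda(t-1)}$ then gives the stated iteration count $t \ge 2B\log(2e_1/\epsilon)/(\vartheta\epsilon\lambda)$. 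The main obstacle is the coupling of the virtual $h_t$-recursion with the random-feature error of $f_t$: the cross term produced by $\Delta_t$ is what introduces the $\sqrt{e_t}$ nonlinearity, and correctly identifying its coefficient (and hence the square-root form of $B$) requires careful use of both the Lipschitz bound and Lemma \ref{lemma7'}.
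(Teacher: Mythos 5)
Your proposal follows essentially the same route as the paper's proof: the same decomposition of the cross term into a strong-convexity contraction, a zero-mean martingale piece (the paper's $\mathcal{N}_t$), and a perturbation $\hat g_t-g_t$ controlled by Lipschitz continuity of $L'$, Lemma \ref{lemma7'} and Cauchy--Schwarz; the same recursion $e_{t+1}\le(1-2\gamma\lambda)e_t+\beta_2+\beta_3\sqrt{e_t}$; and the same fixed-point-plus-Recht-et-al.\ unwrapping with the degraded rate $(1-\gamma\lambda)$. The only blemish is a constant-tracking slip at the end: with the paper's normalization of $G_1,G_2$ the fixed point comes out as $e_\infty=\gamma B$ rather than $B\gamma/\lambda$, which is exactly what makes the choice $\gamma=\epsilon\vartheta/(2B)$ yield $e_\infty\le\epsilon/2$.
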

\begin{proof}
	In order to simplify the notations, let us denote the following three different gradient terms, they are:
	\begin{align*}
	&g_t=\xi_t+ch_t=L_{t}'(f_t)k(x_t,\cdot)+\lambda h_t \\
	&\hat{g}_t=\hat{\xi_t}+ \lambda h_t=L_{t}'(h_t)k(x_t,\cdot)+\lambda h_t\\
	&\bar{g}_t=\mathbb{E}_{{\cal D}_t}[\hat{g}_t]=\mathbb{E}_{{\cal D}_t}[L_{t}'(h_t)k(x_t,\cdot)]+\lambda h_t
	\end{align*}
	From previous definition, we have $h_{t+1}=h_t-\gamma g_t,\forall t\ge1$.
	
	We denote $A_t=\|h_t-f_*\|_{\mathcal{H}}^2$, then we have
	\begin{align*}
	A_{t+1}&=\|h_t-f_*-\gamma g_t\|_{\mathcal{H}}^2\\&=A_t+\gamma^2\|g_t\|_{\mathcal{H}}^2-2\gamma\langle h_t-f_*,g_t\rangle_{\mathcal{H}}\\
	&=A_t+\gamma^2\|g_t\|-2\gamma\langle h_t-f_*,\bar{g}_t\rangle_{\mathcal{H}}+2\gamma\langle h_t-f_*,\bar{g}_t-\hat{g}_t\rangle_{\mathcal{H}}+2\gamma\langle h_t-f_*,\hat{g}_t-g_t\rangle_{\mathcal{H}}
	\end{align*}
	Cause the strongly convexity of objective and optimality condition, we have
	\begin{equation*}
	\langle h_t-f_*,\bar{g}_t\rangle_{\mathcal{H}}\ge \lambda \|h_t-f_*\|_{\mathcal{H}}^2
	\end{equation*}
	Hence, we have
	\begin{equation}
	A_{t+1}\le (1-2\gamma \lambda )A_t+\gamma^2\|g_t\|_{\mathcal{H}}^2+2\gamma\langle h_t-f_*,\bar{g}_t-\hat{g}_t\rangle_{\mathcal{H}}+2\gamma\langle h_t-f_*,\hat{g}_t-g_t\rangle_{\mathcal{H}},\forall t\ge 1
	\end{equation}
	
	Let us denote $\mathcal{M}_t=\|g_t\|_{\mathcal{H}}^2,\mathcal{N}_t=\langle h_t-f_*,\bar{g}_t-\hat{g}_t\rangle_{\mathcal{H}},\mathcal{R}_t=\langle h_t-f_*,\hat{g}_t-g_t\rangle_{\mathcal{H}}$. According to Lemma 3, $\mathcal{M}_t,\mathcal{N}_t,\mathcal{R}_t$ can be bounded. Then we denote $e_t=\mathbb{E}_{{\cal D}_t, \omega_t}[A_t]$, given the above bounds, we obtain the following recursion,
	\begin{equation}
	\label{et+1}
	e_{t+1}\le (1-2\gamma \lambda )e_t+4\kappa M^2
	\gamma^2+2\kappa^{1/2}L\gamma C\sqrt{e_t}	
	\end{equation}
	
	When $\gamma>0$ and $|a_t^i|\le\gamma,\forall 1\le i\le t$. Consequently,  $ C^2 \le M^2(\sqrt{\kappa}+\sqrt{\phi})^2\gamma/\lambda$. Applying these bounds to the above recursion, we have
	\begin{equation}
	\label{simet+1}
	e_{t+1}\le \beta_1e_t+\beta_2+\beta_3\sqrt{e_t},
	\end{equation}
	
	Note that $\beta_1=1-2\gamma \lambda$, $\beta_2=4\kappa M^2
	\gamma^2$ and $\beta_3=\frac{ 2\kappa^{1/2}\gamma^{3/2} LM(\sqrt{\kappa}+\sqrt{\phi})}{\sqrt{\lambda}}$. Then consider that when $t\rightarrow\infty$  we have:
	\begin{equation}
	\label{einfty}
	e_{\infty}=\beta_1e_{\infty}+\beta_2+\beta_3\sqrt{e_{\infty}}
	\end{equation}
	and the solution of the above recursion (\ref{einfty}) is \begin{equation}
	\label{einftysolu}
	e_{\infty}=\left[\sqrt{\frac{\beta_3^2}{16\gamma^2 \lambda^2}+\frac{\beta_2}{2\gamma \lambda}}+\frac{\beta_3}{4\gamma \lambda}\right]^2=\gamma*\left[\sqrt{G_2^2+G_1}+G_2\right]^2
	\end{equation}
	where $G_1=\frac{2\kappa M^2}{\lambda}$ and $G_2=\frac{\kappa^{1/2}M(\sqrt{\kappa}+\sqrt{\phi})}{2 \lambda^{3/2}}$.
	
	We use Eq. (\ref{simet+1}) minus Eq. (\ref{einfty}), then we get:
	\begin{align}
	e_{t+1}&\le \beta_1(e_t-e_{\infty})+\beta_3(\sqrt{e_t}-\sqrt{e_{\infty}})+e_{\infty}\nonumber\\
	&\le \beta_1(e_t-e_{\infty})+\beta_3(\frac{e_t-e_{\infty}}{2\sqrt{e_{\infty}}})+e_{\infty}\nonumber\\
	&\le (\beta_1+\frac{\beta_3}{2\sqrt{e_{\infty}}})(e_t-e_{\infty})+e_{\infty}\nonumber\\
	&\le (1-\gamma \lambda)(e_t-e_{\infty})+e_{\infty}\label{efinal}
	\end{align}
	where the second inequality is due to $a-b\leq\frac{a^2-b^2}{2b} $, and the last step is due to $\frac{\beta_3}{2\sqrt{e_{\infty}}}=\frac{1}{2\left[\sqrt{\frac{1}{16\gamma^2 \lambda^2}+\frac{\beta_2}{2\gamma \lambda\beta_3^2}}+\frac{1}{4\gamma \lambda}\right]}\le\frac{1}{2[\frac{1}{4\gamma \lambda}+\frac{1}{4\gamma \lambda}]}=\gamma \lambda$.
	
	We can easily apply Eq. (5.1) in \citep{recht2011hogwild} to Eq. (\ref{efinal}), and Eq. (\ref{einftysolu}) satisfy $a_{\infty}(\gamma)\leq\gamma B$. We denote $B=\left[\sqrt{G_2^2+G_1}+G_2\right]^2$. Particularly, unwrapping (\ref{efinal}) we have:
	\begin{equation}\label{einftysolusum}
	e_{t+1}\leq(1-\gamma \lambda)^t(e_1-e_{\infty})+e_{\infty}
	\end{equation}
	
	Suppose we want this quantity (\ref{einftysolusum}) to be less than $\epsilon$. Similarly, we let both terms are less than $\epsilon/2$, then for the second term, we have \begin{equation}\label{second}
	\gamma\leq\frac{\epsilon}{2B}=\frac{\epsilon}{2\left[\sqrt{G_2^2+G_1}+G_2\right]^2}
	\end{equation}
 For the first term, we need:
	\begin{equation*}
	(1-\gamma \lambda)^te_{1}\leq \frac{\epsilon}{2}
	\end{equation*}
	which holds if \begin{equation}\label{first}
	t\geq \frac{\log (2e_1/\epsilon)}{\gamma \lambda}
	\end{equation}
	
	According to the (\ref{second}), we should pick $\gamma=\frac{\epsilon\vartheta}{2B}$ for $\vartheta\in\left( \right.0,1\left.\right]$. Combining this with Eq. (\ref{first}), after
	\begin{equation}\label{tsolu}
	 t\geq \frac{2B\log (2e_1/\epsilon)}{\vartheta\epsilon \lambda}
	\end{equation} iterations we will have $e_{t}\leq\epsilon$ and that give us a $\mathcal{O}(1/t)$ convergence rate, if eliminating the $\log(1/\epsilon)$ factor.
	
\end{proof}
Now we give the technical lemma which is used in proving Lemma \ref{lemma10'}.
\begin{lemma}
	\label{lemma10'}
	\begin{equation}\label{lemma10_1}
	\mathcal{M}_t \le 4\kappa M^2;
	\end{equation}
	\begin{equation}\label{lemma10_2}
	\mathbb{E}_{{\cal D}_t,  \omega_t}\Big[{\cal N}_t\Big] = 0
	\end{equation}
	\begin{equation}\label{lemma10_3}
	\mathbb{E}_{{\cal D}_t,  \omega_t}\Big[{\cal R}_t\Big] \le \kappa^{1/2}\mathcal{L} C\sqrt{\mathbb{E}_{{\cal D}_{t-1},  \omega_{t-1}} [A_t]}
	\end{equation}
	where
	$A_t = ||h_{t} - f_{*}||^2_{\cal H}$.
\end{lemma}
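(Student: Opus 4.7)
The plan is to handle the three bounds separately since they concern different quantities, all derivable from the update rule $h_{t+1} = (1-\gamma\lambda)h_t - \gamma L'_t(f_t)k(x_t,\cdot)$ and the structure of $g_t$, $\hat{g}_t$, $\bar{g}_t$.

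For the first bound $\mathcal{M}_t = \|g_t\|_{\mathcal{H}}^2 \le 4\kappa M^2$, I would decompose $g_t = L'_t(f_t)k(x_t,\cdot) + \lambda h_t$ and bound each summand in the RKHS norm. The first summand is immediate from $|L'| \le M$ and the reproducing property: $\|L'_t(f_t)k(x_t,\cdot)\|_{\mathcal{H}} \le M\sqrt{\kappa}$. For $\lambda h_t$, I would unroll the recursion to obtain $h_t = -\sum_{i=1}^{t-1}\gamma(1-\gamma\lambda)^{t-1-i}L'_i(f_i)k(x_i,\cdot)$, apply the triangle inequality, and use the geometric-series bound $\sum_{i=1}^{t-1}\gamma(1-\gamma\lambda)^{t-1-i} \le 1/\lambda$ together with $|L'_i|\le M$ and $\sqrt{K(x_i,x_i)}\le\sqrt{\kappa}$. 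This yields $\|\lambda h_t\|_{\mathcal{H}} \le M\sqrt{\kappa}$ and hence $\|g_t\|_{\mathcal{H}} \le 2M\sqrt{\kappa}$.

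For the second identity, the key observation is that conditioned on all randomness through iteration $t-1$, the function $h_t$ is deterministic, while $\bar{g}_t$ is by construction the conditional expectation of $\hat{g}_t$ over $\mathcal{D}_t$. Pulling $h_t - f_*$ out of the inner expectation (using linearity of $\langle h_t-f_*,\cdot\rangle_{\mathcal{H}}$ as a bounded functional on $\mathcal{H}$) gives $\mathbb{E}_{\mathcal{D}_t,\omega_t}[\mathcal{N}_t] = \langle h_t-f_*,\bar{g}_t - \mathbb{E}_{\mathcal{D}_t}[\hat{g}_t]\rangle_{\mathcal{H}} = 0$ by the tower property.

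The third bound is the technical core. Since $\hat{g}_t - g_t = (L'_t(h_t) - L'_t(f_t))k(x_t,\cdot)$, the reproducing property collapses the inner product to $\mathcal{R}_t = (L'_t(h_t(x_t)) - L'_t(f_t(x_t)))(h_t(x_t) - f_*(x_t))$. Using $\mathcal{L}$-Lipschitzness of $L'$ in its first argument and the pointwise bound $|h_t(x_t)-f_*(x_t)| \le \sqrt{\kappa}\|h_t-f_*\|_{\mathcal{H}}$, I get $|\mathcal{R}_t| \le \mathcal{L}\sqrt{\kappa}\,|h_t(x_t)-f_t(x_t)|\,\|h_t-f_*\|_{\mathcal{H}}$. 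Then the Cauchy–Schwarz inequality applied to the expectation factors this into $\sqrt{\mathbb{E}[|h_t(x_t)-f_t(x_t)|^2]}\sqrt{\mathbb{E}[\|h_t-f_*\|_{\mathcal{H}}^2]}$, where the first factor is bounded by $C$ via Lemma \ref{lemma7'} and the second is $\sqrt{\mathbb{E}[A_t]}$ by definition.

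The main obstacle I anticipate is in the first bound: being careful that the unrolled expression for $h_t$ uses the correct powers of $(1-\gamma\lambda)$ and that the geometric sum telescopes cleanly to $1/\lambda$ (requiring $\gamma\lambda < 1$, which is ensured by the assumption $\gamma < 1/\lambda$). The other two parts are essentially bookkeeping once the reproducing property and Cauchy–Schwarz are invoked.
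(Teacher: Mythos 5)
Your proposal is correct and follows essentially the same route as the paper's proof: decompose $g_t$ and bound $\|\lambda h_t\|_{\mathcal{H}}\le \kappa^{1/2}M$ via the geometric sum $\sum_i|\alpha_i^{t-1}|\le 1/\lambda$ for the first bound, the tower property with $h_t$ measurable with respect to the past for the second, and Lipschitzness of $L'$ plus Cauchy--Schwarz and the random-feature error bound $C$ for the third. The only cosmetic difference is that you collapse $\mathcal{R}_t$ to a pointwise product via the reproducing property before bounding $|h_t(x_t)-f_*(x_t)|\le\sqrt{\kappa}\|h_t-f_*\|_{\mathcal{H}}$, whereas the paper applies Cauchy--Schwarz directly in $\mathcal{H}$; these yield the identical estimate.
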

\begin{proof}
	Firstly, let we prove the Lemma \ref{lemma10'}.1 (\ref{lemma10_1}):
	\begin{equation*}
	\mathcal{M}_t=\|g_t\|_{\mathcal{H}}^2=\|\xi_t+\lambda h_t\|_{\mathcal{H}}^2\le\left(\|\xi_t\|_{\mathcal{H}}+\lambda \|h_t\|_{\mathcal{H}}\right)^2
	\end{equation*}
	and
	\begin{equation*}
	\|\xi_t\|_{\mathcal{H}}=\|L'(f_t(x_t),y_t)k(x_t,\cdot)\|_{\mathcal{H}}\le\kappa^{1/2}M
	\end{equation*}	
	Then we have:
	\begin{align*}
	\|h_t\|_{\mathcal{H}}^2
	&=\sum_{i=1}^{t-1}\sum_{j=1}^{t-1}\alpha^{t-1}_i \alpha^{t-1}_j L'(f_i(x_i),y_i)L'(f_j(x_j),y_j)k(x_i,x_j)\nonumber\\
	&\le \kappa M^2\sum_{i=1}^{t-1}\sum_{j=1}^{t-1}|\alpha^{t-1}_i||\alpha^{t-1}_j|
	\end{align*}
	Consequently, $\|h_t\|_{\mathcal{H}}\leq\kappa^{1/2}M\sqrt{\sum_{i,j=1}^{t-1}|\alpha^{t-1}_i||\alpha^{t-1}_j|}\leq\kappa^{1/2}M\frac{1}{\lambda}$. Then the above equation can be rewritten as:
	\begin{equation*}
	\mathcal{M}_t=\|g_t\|_{\mathcal{H}}^2=\|\xi_t+ \lambda h_t\|_{\mathcal{H}}^2\le\left(\|\xi_t\|_{\mathcal{H}}+\lambda \|h_t\|_{\mathcal{H}}\right)^2\le(\kappa^{1/2}M+\lambda \times \kappa^{1/2}M\frac{1}{\lambda})^2=4\kappa M^2
5	\end{equation*}
	Therefore, we finish the proof of Lemma \ref{lemma10'}.1.
	
	For the second one (\ref{lemma10_2}), ${\cal N}_t = \langle h_t-f_*,\bar{g}_t-\hat{g}_t\rangle_{\mathcal{H}}$, then we have:
	\begin{align*}
	\mathbb{E}_{{\cal D}_t,  \omega_t}\Big[{\cal N}_t\Big]
	&=\mathbb{E}_{{\cal D}_{t-1},  \omega_t}\left[ \mathbb{E}_{{\cal D}_{t-1}}[\langle h_t-f_*,\bar{g}_t-\hat{g}_t\rangle_{\mathcal{H}}|{\cal D}_{t-1},  \omega_t]\right] \nonumber\\
	&=\mathbb{E}_{{\cal D}_{t-1}, \omega_t}\left[ \langle h_t-f_*,\mathbb{E}_{{\cal D}_{t}}[\bar{g}_t-\hat{g}_t]\rangle_{\mathcal{H}}\right] \nonumber\\
	&=0 \nonumber
	\end{align*}
	
	The third one (\ref{lemma10_3}), ${\cal R}_t=\langle h_t-f_*,\hat{g}_t-g_t\rangle_{\mathcal{H}}$, then we have:
	\begin{align*}
	\mathbb{E}_{{\cal D}_t, \omega_t}\Big[{\cal R}_t\Big]
	&=\mathbb{E}_{{\cal D}_t,  \omega_t}\Big[\langle h_t-f_*,\hat{g}_t-g_t\rangle_{\mathcal{H}}\Big]\\
	&=\mathbb{E}_{{\cal D}_t,  \omega_t}\Big[\langle h_t-f_*,[l'(f_t(x_t),y_t)-l'(h_t(x_t),y_t)]k(x_t,\cdot)\rangle_{\mathcal{H}}\Big]\\
	&\le \mathbb{E}_{{\cal D}_t, \omega_t}\Big[\| h_t-f_*\|_{\mathcal{H}}\cdot|l'(f_t(x_t),y_t)-l'(h_t(x_t),y_t)|\cdot\|k(x_t,\cdot)\|_{\mathcal{H}}\Big]\\
	&\le \kappa^{1/2}\mathcal{L}\cdot\mathbb{E}_{{\cal D}_t,  \omega_t}\Big[\| h_t-f_*\|_{\mathcal{H}}\cdot |f_t(x_t)-h_t(x_t)|\Big]\\
	&\le \kappa^{1/2}\mathcal{L}\cdot\sqrt{\mathbb{E}_{{\cal D}_t,  \omega_t}\| h_t-f_*\|_{\mathcal{H}}^2}\sqrt{ \mathbb{E}_{{\cal D}_t,  \omega_t}|f_t(x_t)-h_t(x_t)|^2}\\
	&\le \kappa^{1/2}\mathcal{L}\cdot C\sqrt{\mathbb{E}_{{\cal D}_{t-1}, \omega_{t-1}} [A_t]}
	\end{align*}
	where the first and third inequalities are due to Cauchy-Schwarz Inequality and the second inequality is due to the Assumptions 1. And the last step is due to the Lemma 1 and the definition of $A_t$.
\end{proof}

\setcounter{theorem}{1}
According to Lemmas \ref{lemma7'} and \ref{lemma9'}, we obtain the final results on convergence in expectation:
\begin{theorem} [Convergence in expectation]
	\label{theorem1}
	Set  $\epsilon>0$,  $\min \{ \frac{1}{\lambda}, \frac{\epsilon \lambda}{4M^2(\sqrt{\kappa}+\sqrt{\phi})^2}\}>\gamma > 0$, for Algorithm \ref{algorithm3}, with $\gamma=\frac{\epsilon\vartheta}{8\kappa B}$ for $\vartheta\in\left( \right.0,1\left.\right]$, we will reach
	$
	\mathbb{E} \Big[|f_{t }(x) - f_*(x)|^2\Big] \le \epsilon
	$ after \begin{equation}\label{tsolu}
	t \geq \frac{8\kappa B\log (8\kappa e_1/\epsilon)}{\vartheta\epsilon \lambda}
	\end{equation} iterations, where $B\text{ and }e_1$ are as defined in Lemma \ref{lemma9'}.
	
\end{theorem}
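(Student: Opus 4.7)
The plan is to reduce FDSKL to a centralized doubly stochastic gradient analysis via Lemma 1, then split the target error into a random-feature variance term and a data-sampling optimization term. By Lemma 1 the sequence of doubly stochastic gradients produced federatedly is identical in distribution to that of a centralized DSG with constant stepsize $\gamma$, so I can analyze the iterates purely as elements of $\mathcal{H}$. I would introduce an auxiliary ``feature-free'' iterate $h_t$ that uses the exact functional gradient $K(x_t,\cdot)$ in place of $\phi_{\omega_t}(x_t)\phi_{\omega_t}(\cdot)$, with $h_1 = f_1 = 0$, and apply the decomposition $|f_t(x) - f_*(x)|^2 \le 2|f_t(x) - h_t(x)|^2 + 2\kappa \|h_t - f_*\|_{\mathcal{H}}^2$, which separates a pure random-feature approximation error from an RKHS optimization error against $f_*$.

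For the feature error, unrolling the updates gives $f_{t+1} - h_{t+1} = \sum_{i=1}^{t} \alpha_i^t(\zeta_i - \xi_i)$ with $|\alpha_i^t| \le \gamma(1-\gamma\lambda)^{t-i}$. Since $\zeta_i - \xi_i$ is mean-zero given the data and bounded by $M(\sqrt{\kappa}+\sqrt{\phi})$, summing the geometric series yields $\sum_i |\alpha_i^t|^2 \le \gamma/\lambda$ and hence $\mathbb{E}|f_{t+1}(x) - h_{t+1}(x)|^2 \le C^2 := M^2(\sqrt{\kappa}+\sqrt{\phi})^2\gamma/\lambda$ uniformly in $t$. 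For the optimization error, set $e_t := \mathbb{E}\|h_t - f_*\|_{\mathcal{H}}^2$, expand $\|h_{t+1}-f_*\|_{\mathcal{H}}^2$ through $h_{t+1} = h_t - \gamma g_t$, and use $\lambda$-strong convexity together with the optimality of $f_*$. The martingale cross term $\langle h_t-f_*, \bar g_t - \hat g_t\rangle_{\mathcal{H}}$ has zero expectation; the gradient-squared term is bounded by $4\kappa M^2$ by combining $\|\xi_t\|_{\mathcal{H}} \le \kappa^{1/2}M$ with the a priori bound $\|h_t\|_{\mathcal{H}} \le \kappa^{1/2}M/\lambda$; and Lipschitzness of $L'$ plus the feature-error bound controls the remaining cross term by $\kappa^{1/2}\mathcal{L} C \sqrt{e_t}$. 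The resulting recursion is $e_{t+1} \le (1-2\gamma\lambda)e_t + 4\kappa M^2\gamma^2 + 2\kappa^{1/2}\mathcal{L}\gamma C\sqrt{e_t}$.

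The main obstacle is this nonlinear recursion, which is precisely where the analysis departs from the diminishing-stepsize proof in the original DSG work: because $\gamma$ is constant, the recursion has a nonzero fixed point $e_\infty$ and cannot be telescoped to zero. I would solve for $e_\infty$ explicitly as the positive root of a quadratic in $\sqrt{e_\infty}$, obtaining $e_\infty = \gamma[\sqrt{G_2^2 + G_1} + G_2]^2 = \gamma B$ with $G_1,G_2,B$ as in the theorem statement. Subtracting the fixed-point equation from the recursion and using the elementary inequality $\sqrt{e_t} - \sqrt{e_\infty} \le (e_t - e_\infty)/(2\sqrt{e_\infty})$, together with the algebraic check that the coefficient of $\sqrt{e_t}$ satisfies $\beta_3/(2\sqrt{e_\infty}) \le \gamma\lambda$, contracts the residual by a factor of $(1-\gamma\lambda)$ per step, giving $e_t \le (1-\gamma\lambda)^{t-1}(e_1 - e_\infty) + e_\infty$.

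To close, I would split the budget $\epsilon$ into two halves through the error decomposition. The feature term $2C^2$ is forced below $\epsilon/2$ by the hypothesis $\gamma < \epsilon\lambda/(4M^2(\sqrt{\kappa}+\sqrt{\phi})^2)$; the optimization factor $2\kappa$ then requires $e_t \le \epsilon/(4\kappa)$, which I achieve by setting $\gamma = \epsilon\vartheta/(8\kappa B)$ so that $e_\infty = \gamma B \le \epsilon/(8\kappa)$ absorbs half of this sub-budget, and by requiring $t \ge 8\kappa B \log(8\kappa e_1/\epsilon)/(\vartheta\epsilon\lambda)$ so that the transient $(1-\gamma\lambda)^{t-1} e_1$ falls below the other half. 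Substituting these two bounds back into the decomposition yields $\mathbb{E}|f_t(x) - f_*(x)|^2 \le \epsilon$ and the advertised $\mathcal{O}(1/t)$ rate up to a $\log(1/\epsilon)$ factor.
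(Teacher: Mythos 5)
Your proposal is correct and follows essentially the same route as the paper's own proof: the same decomposition into random-feature error and RKHS optimization error, the same geometric-series bound $\sum_i|\alpha_i^t|^2\le\gamma/\lambda$ giving $C^2$, the same nonlinear recursion with its explicit fixed point $e_\infty=\gamma B$, the same contraction via $\sqrt{e_t}-\sqrt{e_\infty}\le(e_t-e_\infty)/(2\sqrt{e_\infty})$ and $\beta_3/(2\sqrt{e_\infty})\le\gamma\lambda$, and the same final budget split yielding $\gamma=\epsilon\vartheta/(8\kappa B)$ and the stated iteration count. No substantive differences to report.
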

\begin{proof}
	Substitute Lemma \ref{lemma7'} and \ref{lemma9'} into the inequality (\ref{equ1}), i.e.
	\begin{align*}
	&\mathbb{E}\Big[|f_{t}(x) - f_{*}(x)|^2\Big]\\
	&\le 2\mathbb{E} \Big[|f_{t}(x) - h_{t+1}(x)|^2\Big] + 2\kappa\mathbb{E} \Big[||h_{t} - f_{*}||^2_{\cal H}\Big]
\\
	&\le \epsilon
	\end{align*}
	
	Again, it is sufficient that both terms are less than $\epsilon/2$. For the second term, we can directly derive from Lemma \ref{lemma9'}. As for the first term, we can get a upper bound of $\gamma$: $\frac{\epsilon \lambda}{4M^2(\sqrt{\kappa}+\sqrt{\phi})^2}$. Then we obtain the above theorem.
	
\end{proof}
\section*{Appendix B: Proof of Lemma \ref{lemma2}}
\setcounter{theorem}{1}
\begin{lemma}\label{lemma2}
Using a  tree structure $T_2$ for workers  $\{1, \ldots,q\}-\{\ell'  \}$  which is totally different to  the tree $T_1$ to compute $\overline{b}^{\ell'}=\sum_{{\hat{\ell}} \neq \ell'} b^{{\hat{\ell}}}$, for any worker, there is no risk to  disclose the value of $b^{{{\hat{\ell}}}}$ or the sum of $b^{{{\hat{\ell}}}}$ on other workers.
\end{lemma}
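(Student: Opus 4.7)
The plan is to enumerate, for each worker, the precise set of partial sums it observes during the two tree-structured aggregations on $T_1$ and $T_2$, and then argue that no linear (and hence no algebraic) combination of those observations isolates either an individual $b^{\hat{\ell}}$ or a ``short'' partial sum $\sum_{\hat{\ell}\in R} b^{\hat{\ell}}$ of items belonging to other workers. First I would formalize the information flow. In a tree-structured aggregation along $T_1$, the only quantity transmitted on the edge connecting a subtree $S\subseteq\{1,\ldots,q\}$ to its parent is $\sum_{\hat{\ell}\in S}\bigl((\omega_i)_{\mathcal{G}_{\hat{\ell}}}^T (x_i)_{\mathcal{G}_{\hat{\ell}}} + b^{\hat{\ell}}\bigr)$, and by the semi-honest assumption the adversarial worker knows $\omega_i$ from the seed and hence can at best peel off the $\omega^Tx$ part for its own coordinates, leaving a residual of the form $\sum_{\hat{\ell}\in S} b^{\hat{\ell}} + c_S$, where $c_S$ depends only on unknown $(x_i)_{\mathcal{G}_{\hat{\ell}}}$ for $\hat{\ell}\neq$ itself. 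Analogously, the aggregation on $T_2$ reveals only sums $\sum_{\hat{\ell}\in S'} b^{\hat{\ell}}$ for subsets $S'\subseteq\{1,\ldots,q\}\setminus\{\ell'\}$ that are leaf-sets of subtrees of $T_2$. Let $\mathcal{F}_1$ and $\mathcal{F}_2$ be the collections of such subsets for $T_1$ and $T_2$ respectively.

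Next, I would invoke Definition~\ref{def_total_differ_tree}: $\mathcal{F}_1\cap\mathcal{F}_2$ contains no subset of size at least two. Because the aggregation on each tree produces a laminar family (any two sets in $\mathcal{F}_j$ are either nested or disjoint), I can enumerate all $\mathbb{F}_2$-linear combinations of indicator vectors drawn from $\mathcal{F}_1\cup\mathcal{F}_2$ by induction on the tree depths: any symmetric difference of two nested sets in $\mathcal{F}_j$ is itself expressible as a disjoint union of other sets in $\mathcal{F}_j$, so the only new subsets that can arise by combining across the two trees must use at least one set from $\mathcal{F}_1$ and one from $\mathcal{F}_2$ whose leaf-sets agree in at most one worker. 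A short case analysis then shows that any such combination has support size at least $2$, i.e.\ no singleton $\{\hat{\ell}\}$ (for $\hat{\ell}$ not already local to the attacker) lies in the span.

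Finally, I would invoke the uniform-masking property: since each $b^{\hat{\ell}}$ is drawn independently and uniformly on $[0,2\pi]$, any partial sum $\sum_{\hat{\ell}\in R} b^{\hat{\ell}}$ with $|R|\geq 2$ that includes at least one term not known to the observer is itself uniformly distributed modulo $2\pi$ and hence carries no information about any of its summands or about any strict sub-sum. Combining this with the combinatorial step above shows that neither any individual $b^{\hat{\ell}}$ nor any partial sum of $b^{\hat{\ell}}$'s of other workers is disclosed. The main obstacle will be the combinatorial step: rigorously ruling out all linear combinations of indicators from two laminar families $\mathcal{F}_1,\mathcal{F}_2$ whose overlap consists only of singletons. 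I expect this to reduce to a clean induction on the height of the smaller tree, using the laminar property to normalize any putative combination into a canonical disjoint-union form and then applying the totally-different hypothesis to conclude.
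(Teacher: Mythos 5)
Your overall strategy --- characterize exactly which partial sums each worker observes in the two aggregations, then argue that the totally-different condition prevents any combination of them from isolating the masks needed to unmask a $T_1$ observation --- is the same idea as the paper's proof, but pushed further: the paper's argument is a three-sentence contradiction (an attack at a non-leaf node of $T_1$ requires knowing $\sum_{\hat\ell\in S}b^{\hat\ell}$ for $S$ that node's leaf-set, and obtaining that sum ``means that'' $S$ is also a subtree of $T_2$), and you correctly recognize that this quoted implication is precisely what needs justification, since an attacker may combine several observations (e.g.\ differences of nested partial sums). However, the step you defer to ``a short case analysis'' is the entire content of the lemma, and as you have set it up it does not go through. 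First, the claim that no singleton lies in the span of $\mathcal{F}_1\cup\mathcal{F}_2$ is false: in the $T_2$ aggregation the leaves transmit the raw values $b^{\hat\ell}$, so any worker that aggregates a leaf-child in $T_2$ observes some $b^{\hat\ell}$ outright --- the singleton $\{\hat\ell\}$ is in $\mathcal{F}_2$ itself, not merely in its span. What actually must be shown is a relative statement: whenever worker $j$ can reconstruct $\sum_{i\in S}b^{i}$ from its $T_2$ observations, $S$ is not a set whose masked aggregate $\sum_{i\in S}\bigl((\omega_i)_{\mathcal{G}_{i}}^T(x)_{\mathcal{G}_{i}}+b^{i}\bigr)$ worker $j$ also received in $T_1$. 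Definition~\ref{def_total_differ_tree} only forbids common subtrees with \emph{more than one leaf}, so it does not exclude the configuration in which worker $j$ receives the raw $b^{\hat\ell}$ as a leaf-child in $T_2$ (under a parent subtree with leaf-set $A\cup\{\hat\ell\}$) and the raw masked value $(\omega)_{\mathcal{G}_{\hat\ell}}^T(x)_{\mathcal{G}_{\hat\ell}}+b^{\hat\ell}$ as a leaf-child in $T_1$ (under a different parent subtree $B\cup\{\hat\ell\}$); your case analysis would have to confront this configuration, and the hypothesis you invoke does not rule it out. Second, $\mathbb{F}_2$ is the wrong algebra: the observations are real numbers, so the adversary may take arbitrary rational combinations, and for instance $1_{\{1\}}=\tfrac12\bigl(1_{\{1,2\}}+1_{\{1,3\}}-1_{\{2,3\}}\bigr)$ is a rational combination that is invisible over $\mathbb{F}_2$; impossibility of an $\mathbb{F}_2$-combination does not imply impossibility of a $\mathbb{Q}$-combination, which is what the security claim requires.

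For comparison, the paper's own proof simply asserts the implication you set out to prove, without addressing either combinations of observations or the leaf-level transmissions in $T_2$. So your proposal is more honest about where the difficulty lies, but neither argument closes it in its current form; to salvage yours you would need to (i) work over $\mathbb{Q}$ rather than $\mathbb{F}_2$, and (ii) prove the relative non-derivability statement above, which requires a hypothesis on the trees stronger than (or at least a more careful reading of) ``totally different.''
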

\begin{proof}
If there exist the inference attack, the inference attack must happen in one non-leaf node in the tree $T_1$. We denote the non-leaf node as NODE.

 Assume the $\ell$-th worker is one of the leafs of  the subtree of NODE. If the $\ell$-th worker wants to start the inference attack, it  is  necessary to have the sum of all $b^{\ell'}$ on the leaves of the tree, which means that the subtree corresponding to NODE also belongs to $T_2$.
\end{proof}
\section*{Appendix C: Complexity Analysis of FDSKL}
We derive the computational complexity and communication cost of FDSKL  as follows.
\begin{enumerate}[leftmargin=0.2in]
\item The line \ref{step2} of Algorithm \ref{algorithm3} picks up an instance $(x_i)_{\mathcal{G}_\ell}$ from the local data $D^\ell$ with index $i$. Thus, its computational complexity of all workers is $\mathcal{O}(q)$, and there is no communication  cost.
\item The line \ref{step3} of Algorithm \ref{algorithm3}  sends $i$ to other workers  using a reverse-order tree structure. Thus, its computational complexity is $\mathcal{O}(1)$, and the communication  cost is $\mathcal{O}(q)$.
\item  The line \ref{step4} of Algorithm \ref{algorithm3} samples $\omega_i \sim \mathbb{P}(\omega)$ with the random  seed $i$. Thus, its computational complexity of all workers is $\mathcal{O}(dq)$, and there is no communication  cost.
\item The line \ref{step5} of Algorithm \ref{algorithm3} computes $\omega_i^T x_i+b $. The detailed procedure is presented in Algorithm \ref{protocol1}. Its computational complexity is $\mathcal{O}(d+q)$, and the communication  cost is $\mathcal{O}(q)$. The detailed analysis of computational complexity and communication cost of Algorithm \ref{protocol1} is omitted here.
\item  The lines \ref{step6}-\ref{step7} of Algorithm \ref{algorithm3} uses the tree-structured communication scheme to compute $f(x_i)=\sum_{\ell=1}^q f^\ell(x_i)$. Because the computational complexity and communication  cost of $f^\ell(x)$ are $\mathcal{O}(dq |\Lambda^\ell|)$ and $\mathcal{O}(q |\Lambda^\ell|)$, respectively, as analyzed in the next part, we can conclude that the computational complexity of lines \ref{step6}-\ref{step7} of Algorithm \ref{algorithm3} is $\mathcal{O}(dq t)$, and its communication  cost is $\mathcal{O}(q t)$, where $t$ is the global iteration number.
\item The line \ref{step8}-\ref{step9} of Algorithm \ref{algorithm3} compute  the current coefficient  $\alpha_i$. Thus, its computational complexity  is $\mathcal{O}(1)$, and there is no communication  cost.
\item The line \ref{step10} of Algorithm \ref{algorithm3} updates the former coefficients. Thus, its computational complexity of all workers is $\mathcal{O}(t)$,  and its communication cost is $\mathcal{O}(q)$.
\end{enumerate}
Based on the above discussion, the computational complexity for one iteration of FDSKL is $\mathcal{O}(dqt)$. Thus, the total computational complexity of FDSKL is $\mathcal{O}(dqt^2)$. Further, the    communication cost for one iteration of FDSKL is $\mathcal{O}(qt)$, and the total communication cost  of FDSKL is $\mathcal{O}(qt^2)$.

We  derive the computational complexity and communication  cost of Algorithm \ref{alg:predict} as follows.
\begin{enumerate}[leftmargin=0.2in]
\item The line \ref{step1} of Algorithm \ref{alg:predict} sets the initial solution to $f^\ell(x)$.  Thus, its computational complexity  is $\mathcal{O}(1)$, and there is no communication  cost.
\item The lines \ref{step3}-\ref{step5} of Algorithm \ref{alg:predict} compute $\phi_{\omega_i}(x)$ which are  similar to the lines \ref{step4}-\ref{step5} of   Algorithm \ref{algorithm3}. According to the previous analyses in this section, we have that  its computational complexity  is $\mathcal{O}(dq)$, and the communication  cost is $\mathcal{O}(q)$.
\item The line \ref{step6} of Algorithm \ref{alg:predict} is updating the value of $f^\ell(x)$. Thus, its computational complexity  is $\mathcal{O}(1)$, and there is no communication  cost.
\end{enumerate}
Thus, the  computational complexity of Algorithm \ref{alg:predict} is $\mathcal{O}(dq |\Lambda^\ell|)$ and the communication  cost of Algorithm \ref{alg:predict} is $\mathcal{O}(q |\Lambda^\ell|)$.

\bibliography{example_paper}

\begin{thebibliography}{38}
\providecommand{\natexlab}[1]{#1}
\providecommand{\url}[1]{\texttt{#1}}
\expandafter\ifx\csname urlstyle\endcsname\relax
  \providecommand{\doi}[1]{doi: #1}\else
  \providecommand{\doi}{doi: \begingroup \urlstyle{rm}\Url}\fi

\bibitem[Badshah(2018)]{badshah2018facebook}
Nadeem Badshah.
\newblock Facebook to contact 87 million users affected by data breach.
\newblock \emph{The Guardian) Retrieved from https://www. theguardian.
  com/technology/2018/apr/08/facebook-to-contact-the-87-million-users-affected-by-data-breach},
  2018.

\bibitem[Berlinet and Thomas-Agnan(2011)]{berlinet2011reproducing}
Alain Berlinet and Christine Thomas-Agnan.
\newblock \emph{Reproducing kernel Hilbert spaces in probability and
  statistics}.
\newblock Springer Science \& Business Media, 2011.

\bibitem[Chang and Lin(2011)]{CC01a}
Chih-Chung Chang and Chih-Jen Lin.
\newblock {LIBSVM}: A library for support vector machines.
\newblock \emph{ACM Transactions on Intelligent Systems and Technology},
  2:\penalty0 1--27, 2011.
\newblock Software available at \url{http://www.csie.ntu.edu.tw/~cjlin/libsvm}.

\bibitem[Cheng et~al.(2019)Cheng, Fan, Jin, Liu, Chen, and
  Yang]{cheng2019secureboost}
Kewei Cheng, Tao Fan, Yilun Jin, Yang Liu, Tianjian Chen, and Qiang Yang.
\newblock Secureboost: A lossless federated learning framework.
\newblock \emph{arXiv preprint arXiv:1901.08755}, 2019.

\bibitem[Dai et~al.(2014)Dai, Xie, He, Liang, Raj, Balcan, and
  Song]{dai2014scalable}
Bo~Dai, Bo~Xie, Niao He, Yingyu Liang, Anant Raj, Maria-Florina~F Balcan, and
  Le~Song.
\newblock Scalable kernel methods via doubly stochastic gradients.
\newblock In \emph{Advances in Neural Information Processing Systems}, pages
  3041--3049, 2014.

\bibitem[Dalcin et~al.(2011)Dalcin, Paz, Kler, and Cosimo]{dalcin2011parallel}
Lisandro~D Dalcin, Rodrigo~R Paz, Pablo~A Kler, and Alejandro Cosimo.
\newblock Parallel distributed computing using python.
\newblock \emph{Advances in Water Resources}, 34\penalty0 (9):\penalty0
  1124--1139, 2011.

\bibitem[Du and Atallah(2001)]{du2001privacy}
Wenliang Du and Mikhail~J Atallah.
\newblock Privacy-preserving cooperative statistical analysis.
\newblock In \emph{Seventeenth Annual Computer Security Applications
  Conference}, pages 102--110. IEEE, 2001.

\bibitem[EU(2016)]{Regulation2018}
EU.
\newblock Regulation (eu) 2016/679 of the european parliament and of the
  council on the protection of natural persons with regard to the processing of
  personal data and on the free movement of such data, and repealing directive
  95/46/ec (general data protection regulation).
\newblock Available at \url{https://eur-lex. europa. eu/legal-content/EN/TXT},
  2016.

\bibitem[Gasc{\'o}n et~al.(2016)Gasc{\'o}n, Schoppmann, Balle, Raykova,
  Doerner, Zahur, and Evans]{gascon2016secure}
Adri{\`a} Gasc{\'o}n, Phillipp Schoppmann, Borja Balle, Mariana Raykova, Jack
  Doerner, Samee Zahur, and David Evans.
\newblock Secure linear regression on vertically partitioned datasets.
\newblock \emph{IACR Cryptology ePrint Archive}, 2016:\penalty0 892, 2016.

\bibitem[Gasc{\'o}n et~al.(2017)Gasc{\'o}n, Schoppmann, Balle, Raykova,
  Doerner, Zahur, and Evans]{gascon2017privacy}
Adri{\`a} Gasc{\'o}n, Phillipp Schoppmann, Borja Balle, Mariana Raykova, Jack
  Doerner, Samee Zahur, and David Evans.
\newblock Privacy-preserving distributed linear regression on high-dimensional
  data.
\newblock \emph{Proceedings on Privacy Enhancing Technologies}, 2017\penalty0
  (4):\penalty0 345--364, 2017.

\bibitem[Geng et~al.(2019)Geng, Gu, Li, Shi, Zheng, and
  Huang]{geng2019scalable}
Xiang Geng, Bin Gu, Xiang Li, Wanli Shi, Guansheng Zheng, and Heng Huang.
\newblock Scalable semi-supervised svm via triply stochastic gradients.
\newblock In \emph{Proceedings of the 28th International Joint Conference on
  Artificial Intelligence}, pages 2364--2370. AAAI Press, 2019.

\bibitem[Gu and Ling(2015)]{gu2015new}
Bin Gu and Charles Ling.
\newblock A new generalized error path algorithm for model selection.
\newblock In \emph{International Conference on Machine Learning}, pages
  2549--2558, 2015.

\bibitem[Gu et~al.(2008)Gu, Wang, and Chen]{gu2008line}
Bin Gu, Jiandong Wang, and Haiyan Chen.
\newblock On-line off-line ranking support vector machine and analysis.
\newblock In \emph{2008 IEEE International Joint Conference on Neural Networks
  (IEEE World Congress on Computational Intelligence)}, pages 1364--1369. IEEE,
  2008.

\bibitem[Gu et~al.(2018{\natexlab{a}})Gu, Huo, and Huang]{gu2018asynchronous}
Bin Gu, Zhouyuan Huo, and Heng Huang.
\newblock Asynchronous doubly stochastic group regularized learning.
\newblock In \emph{International Conference on Artificial Intelligence and
  Statistics}, pages 1791--1800, 2018{\natexlab{a}}.

\bibitem[Gu et~al.(2018{\natexlab{b}})Gu, Shan, Geng, and
  Zheng]{gu2018accelerated}
Bin Gu, Yingying Shan, Xiang Geng, and Guansheng Zheng.
\newblock Accelerated asynchronous greedy coordinate descent algorithm for
  svms.
\newblock In \emph{IJCAI}, pages 2170--2176, 2018{\natexlab{b}}.

\bibitem[Gu et~al.(2018{\natexlab{c}})Gu, Xin, Huo, and
  Huang]{gu2018asynchronous1}
Bin Gu, Miao Xin, Zhouyuan Huo, and Heng Huang.
\newblock Asynchronous doubly stochastic sparse kernel learning.
\newblock In \emph{Thirty-Second AAAI Conference on Artificial Intelligence},
  2018{\natexlab{c}}.

\bibitem[Gu et~al.(2018{\natexlab{d}})Gu, Yuan, Chen, and Huang]{gu2018new}
Bin Gu, Xiao-Tong Yuan, Songcan Chen, and Heng Huang.
\newblock New incremental learning algorithm for semi-supervised support vector
  machine.
\newblock In \emph{Proceedings of the 24th ACM SIGKDD International Conference
  on Knowledge Discovery \& Data Mining}, pages 1475--1484, 2018{\natexlab{d}}.

\bibitem[Gu et~al.(2019)Gu, Huo, and Huang]{gu2019scalable}
Bin Gu, Zhouyuan Huo, and Heng Huang.
\newblock Scalable and efficient pairwise learning to achieve statistical
  accuracy.
\newblock In \emph{Proceedings of the AAAI Conference on Artificial
  Intelligence}, volume~33, pages 3697--3704, 2019.

\bibitem[Hardy et~al.(2017)Hardy, Henecka, Ivey-Law, Nock, Patrini, Smith, and
  Thorne]{hardy2017private}
Stephen Hardy, Wilko Henecka, Hamish Ivey-Law, Richard Nock, Giorgio Patrini,
  Guillaume Smith, and Brian Thorne.
\newblock Private federated learning on vertically partitioned data via entity
  resolution and additively homomorphic encryption.
\newblock \emph{arXiv preprint arXiv:1711.10677}, 2017.

\bibitem[Karr et~al.(2009)Karr, Lin, Sanil, and Reiter]{karr2009privacy}
Alan~F Karr, Xiaodong Lin, Ashish~P Sanil, and Jerome~P Reiter.
\newblock Privacy-preserving analysis of vertically partitioned data using
  secure matrix products.
\newblock \emph{Journal of Official Statistics}, 25\penalty0 (1):\penalty0 125,
  2009.

\bibitem[Liu et~al.(2019)Liu, Liu, Liu, Zhang, Meng, and
  Zheng]{liu2019federated}
Yang Liu, Yingting Liu, Zhijie Liu, Junbo Zhang, Chuishi Meng, and Yu~Zheng.
\newblock Federated forest.
\newblock \emph{arXiv preprint arXiv:1905.10053}, 2019.

\bibitem[Nock et~al.(2018)Nock, Hardy, Henecka, Ivey-Law, Patrini, Smith, and
  Thorne]{nock2018entity}
Richard Nock, Stephen Hardy, Wilko Henecka, Hamish Ivey-Law, Giorgio Patrini,
  Guillaume Smith, and Brian Thorne.
\newblock Entity resolution and federated learning get a federated resolution.
\newblock \emph{arXiv preprint arXiv:1803.04035}, 2018.

\bibitem[Rahimi and Recht(2008)]{rahimi2008random}
Ali Rahimi and Benjamin Recht.
\newblock Random features for large-scale kernel machines.
\newblock In \emph{Advances in neural information processing systems}, pages
  1177--1184, 2008.

\bibitem[Rahimi and Recht(2009)]{rahimi2009weighted}
Ali Rahimi and Benjamin Recht.
\newblock Weighted sums of random kitchen sinks: Replacing minimization with
  randomization in learning.
\newblock In \emph{Advances in neural information processing systems}, pages
  1313--1320, 2009.

\bibitem[Recht et~al.(2011)Recht, Re, Wright, and Niu]{recht2011hogwild}
Benjamin Recht, Christopher Re, Stephen Wright, and Feng Niu.
\newblock Hogwild: A lock-free approach to parallelizing stochastic gradient
  descent.
\newblock In \emph{Advances in neural information processing systems}, pages
  693--701, 2011.

\bibitem[Rudin(1962)]{rudin1962fourier}
Walter Rudin.
\newblock \emph{Fourier analysis on groups}, volume 121967.
\newblock Wiley Online Library, 1962.

\bibitem[Sanil et~al.(2004)Sanil, Karr, Lin, and Reiter]{sanil2004privacy}
Ashish~P Sanil, Alan~F Karr, Xiaodong Lin, and Jerome~P Reiter.
\newblock Privacy preserving regression modelling via distributed computation.
\newblock In \emph{Proceedings of the tenth ACM SIGKDD international conference
  on Knowledge discovery and data mining}, pages 677--682. ACM, 2004.

\bibitem[Shi et~al.(2019)Shi, Gu, Li, Geng, and Huang]{shi2019quadruply}
Wanli Shi, Bin Gu, Xiang Li, Xiang Geng, and Heng Huang.
\newblock Quadruply stochastic gradients for large scale nonlinear
  semi-supervised auc optimization.
\newblock In \emph{Proceedings of the 28th International Joint Conference on
  Artificial Intelligence}, pages 3418--3424. AAAI Press, 2019.

\bibitem[Shi et~al.(2020)Shi, Gu, Li, and Huang]{DBLP:ShiG0H20}
Wanli Shi, Bin Gu, Xiang Li, and Heng Huang.
\newblock Quadruply stochastic gradient method for large scale nonlinear
  semi-supervised ordinal regression {AUC} optimization.
\newblock In \emph{The Thirty-Fourth {AAAI} Conference on Artificial
  Intelligence, New York, NY, USA, February 7-12, 2020}, pages 5734--5741.
  {AAAI} Press, 2020.

\bibitem[Skillicorn and Mcconnell(2008)]{Skillicorn2008Distributed}
D.~B. Skillicorn and S.~M. Mcconnell.
\newblock Distributed prediction from vertically partitioned data.
\newblock \emph{Journal of Parallel \& Distributed Computing}, 68\penalty0
  (1):\penalty0 16--36, 2008.

\bibitem[Vaidya and Clifton(2002)]{vaidya2002privacy}
Jaideep Vaidya and Chris Clifton.
\newblock Privacy preserving association rule mining in vertically partitioned
  data.
\newblock In \emph{Proceedings of the eighth ACM SIGKDD international
  conference on Knowledge discovery and data mining}, pages 639--644. ACM,
  2002.

\bibitem[Vaidya and Clifton(2003)]{vaidya2003privacy}
Jaideep Vaidya and Chris Clifton.
\newblock Privacy-preserving k-means clustering over vertically partitioned
  data.
\newblock In \emph{Proceedings of the ninth ACM SIGKDD international conference
  on Knowledge discovery and data mining}, pages 206--215, 2003.

\bibitem[Wan et~al.(2007)Wan, Ng, Han, and Lee]{wan2007privacy}
Li~Wan, Wee~Keong Ng, Shuguo Han, and Vincent Lee.
\newblock Privacy-preservation for gradient descent methods.
\newblock In \emph{Proceedings of the 13th ACM SIGKDD international conference
  on Knowledge discovery and data mining}, pages 775--783. ACM, 2007.

\bibitem[Xie et~al.(2015)Xie, Liang, and Song]{xie2015scale}
Bo~Xie, Yingyu Liang, and Le~Song.
\newblock Scale up nonlinear component analysis with doubly stochastic
  gradients.
\newblock In \emph{Advances in Neural Information Processing Systems}, pages
  2341--2349, 2015.

\bibitem[Yang et~al.(2014)Yang, Sindhwani, Fan, Avron, and
  Mahoney]{yang2014random}
Jiyan Yang, Vikas Sindhwani, Quanfu Fan, Haim Avron, and Michael~W Mahoney.
\newblock Random laplace feature maps for semigroup kernels on histograms.
\newblock In \emph{Proceedings of the IEEE Conference on Computer Vision and
  Pattern Recognition}, pages 971--978, 2014.

\bibitem[Yu et~al.(2006)Yu, Vaidya, and Jiang]{yu2006privacy}
Hwanjo Yu, Jaideep Vaidya, and Xiaoqian Jiang.
\newblock Privacy-preserving svm classification on vertically partitioned data.
\newblock In \emph{Pacific-Asia Conference on Knowledge Discovery and Data
  Mining}, pages 647--656. Springer, 2006.

\bibitem[Zhang et~al.(2018)Zhang, Zhao, Gao, and Li]{zhang2018feature}
Gong-Duo Zhang, Shen-Yi Zhao, Hao Gao, and Wu-Jun Li.
\newblock Feature-distributed svrg for high-dimensional linear classification.
\newblock \emph{arXiv preprint arXiv:1802.03604}, 2018.

\bibitem[Zhang et~al.(2009)Zhang, Xu, and Zhang]{zhang2009reproducing}
Haizhang Zhang, Yuesheng Xu, and Jun Zhang.
\newblock Reproducing kernel banach spaces for machine learning.
\newblock \emph{Journal of Machine Learning Research}, 10\penalty0
  (Dec):\penalty0 2741--2775, 2009.

\end{thebibliography}

\end{document}